%% file: main.tex
\PassOptionsToPackage{table}{xcolor}
\PassOptionsToPackage{section}{placeins}
\documentclass[mlmain,onecolumn,cleveref]{jmlr}

\usepackage{mathtools}
\usepackage{microtype}
\usepackage{booktabs}
\usepackage{enumitem}
\usepackage{needspace}
\usepackage{capt-of}
\usepackage{tikz-cd}
\usepackage{orcidlink}
\usetikzlibrary{calc,arrows.meta,positioning}

\setlist{nosep,noitemsep,topsep=0pt,partopsep=0pt,parsep=0pt,itemsep=0pt,leftmargin=*}
\setlist[enumerate]{nosep,noitemsep,topsep=0pt,partopsep=0pt,parsep=0pt,itemsep=0pt,leftmargin=*}
\setlist[itemize]{nosep,noitemsep,topsep=0pt,partopsep=0pt,parsep=0pt,itemsep=0pt,leftmargin=*}
\renewcommand{\topfraction}{0.62}
\renewcommand{\bottomfraction}{0.46}

\renewcommand{\floatpagefraction}{0.70}

\SetKwInput{KwInput}{Input}
\SetKwInput{KwOutput}{Output}
\DontPrintSemicolon
\LinesNumbered
\makeatletter
\newcommand{\eol}{\@endalgoln}
\makeatother
\SetKwFunction{PHfun}{PersistentHomology}
\SetKwFunction{Ckfun}{Cokernel}
\SetKwFunction{Bcfun}{Barcode}
\SetKwFunction{Bnfun}{Bottleneck}

\makeatletter
\theorembodyfont{\normalfont\upshape}
\cslet{jmlr@thm@theorem@body@font}{\@theorembodyfont}
\cslet{jmlr@thm@lemma@body@font}{\@theorembodyfont}
\cslet{jmlr@thm@proposition@body@font}{\@theorembodyfont}
\cslet{jmlr@thm@corollary@body@font}{\@theorembodyfont}
\cslet{jmlr@thm@definition@body@font}{\@theorembodyfont}
\cslet{jmlr@thm@example@body@font}{\@theorembodyfont}
\cslet{jmlr@thm@remark@body@font}{\@theorembodyfont}
\makeatother

\renewenvironment{proof}[1][\proofname]{\par\Needspace*{3\baselineskip}\noindent{\bfseries\upshape #1\ }}{\jmlrQED}

\crefname{algocf}{Algorithm}{Algorithms}
\Crefname{algocf}{Algorithm}{Algorithms}

\definecolor{gblue}{HTML}{3265B9}
\definecolor{gred}{HTML}{B03228}
\definecolor{gyellow}{HTML}{BC8D04}
\definecolor{ggreen}{HTML}{277E3E}
\newcommand{\src}[1]{\textcolor{gblue}{#1}}
\newcommand{\tgt}[1]{\textcolor{gred}{#1}}
\newcommand{\ext}[1]{\textcolor{ggreen}{#1}}
\newcommand{\cmp}[1]{\textcolor{gyellow}{#1}}
\definecolor{poscell}{RGB}{227,235,247}
\definecolor{negcell}{RGB}{249,228,225}
\definecolor{neucell}{RGB}{251,240,220}
\definecolor{extcell}{RGB}{226,240,229}
\newcommand{\poscell}[1]{\cellcolor{poscell}#1}
\newcommand{\negcell}[1]{\cellcolor{negcell}#1}
\newcommand{\neucell}[1]{\cellcolor{neucell}#1}

\newcommand{\lgdot}[1]{\tikz[baseline=-0.62ex]{\filldraw[fill=#1,draw=white,line width=0.3pt] (0,0) circle (1.9pt);}}

\newcommand{\lgline}[1]{\tikz[baseline=-0.42ex]{\draw[#1,line width=1.4pt,line cap=round] (0,0) -- (0.34,0);}}
\newcommand{\lgDot}[1]{\tikz[baseline=-0.68ex]{\filldraw[fill=#1,draw=white,line width=0.3pt] (0,0) circle (2.4pt);}}
\newcommand{\lgTri}[1]{\tikz[baseline=-0.68ex]{\filldraw[fill=#1,draw=white,line width=0.3pt] (90:3.73pt) -- (210:3.73pt) -- (330:3.73pt) -- cycle;}}
\newcommand{\lgSq}[1]{\tikz[baseline=-0.68ex]{\filldraw[fill=#1,draw=white,line width=0.3pt] (-2.13pt,-2.13pt) rectangle (2.13pt,2.13pt);}}
\newcommand{\lgpair}{\tikz[baseline=-0.68ex]{\draw[gyellow,fill=gyellow!28,line width=0.7pt,rounded corners=3.4pt] (-6.6pt,-3.4pt) rectangle (6.6pt,3.4pt);\filldraw[fill=gblue,draw=white,line width=0.3pt] (-3.0pt,0) circle (2.4pt);\filldraw[fill=gred,draw=white,line width=0.3pt] (3.0pt,0) circle (2.4pt);}}
\newcommand{\swatch}[1]{\raisebox{-0.2ex}{\setlength{\fboxsep}{0pt}\fcolorbox{black!35}{#1}{\rule{0pt}{1.45ex}\rule{1.45ex}{0pt}}}}

\newcommand{\K}{\mathbb{K}}
\newcommand{\A}{\mathcal{A}}
\newcommand{\B}{\mathcal{B}}
\newcommand{\V}{\mathcal{V}}
\newcommand{\Vect}{\operatorname{Vec}_{\K}}
\newcommand{\Ch}{\operatorname{Ch}_{\K}}
\newcommand{\Pers}{\operatorname{Pers}_{\K}}
\newcommand{\Ob}{\operatorname{Ob}}
\newcommand{\Mor}{\operatorname{Mor}}
\newcommand{\Dgm}{\operatorname{Dgm}}
\newcommand{\Comp}{\operatorname{Comp}}
\newcommand{\Lan}{\operatorname{Lan}}
\newcommand{\Sep}{\operatorname{Sep}}
\newcommand{\coker}{\operatorname{coker}}
\newcommand{\VR}{\operatorname{VR}}
\newcommand{\PH}{\operatorname{PH}}
\newcommand{\dist}{\operatorname{dist}}
\newcommand{\ObjBase}{\operatorname{ObjBase}}
\newcommand{\Fun}[2]{#2^{#1}}

\newcommand{\prfbox}{\setlength{\fboxsep}{0pt}\fboxrule=0.5pt\fbox{\rule{0pt}{1.32ex}\rule{1.32ex}{0pt}}}
\newcommand{\prf}[1]{\unskip\nobreak\hfil\penalty50\hskip1em\null\nobreak\hfil\hyperref[prf:#1]{\prfbox}\parfillskip=0pt\finalhyphendemerits=0\par}
\newcommand{\itemref}[2]{\Cref{#1}\textup{(}\labelcref{#2}\textup{)}}
\newcommand{\dc}{\rlap{\,,}}
\newcommand{\dd}{\rlap{\,.}}

\newcommand{\ptdot}[2]{\pgfmathsetmacro{\ptX}{\ptU+0.26*\ptV}\pgfmathsetmacro{\ptY}{0.78*\ptV}
  \pgfmathsetmacro{\ptS}{0.045*(1+0.26*\ptV/0.38)}
  \fill[#1] ($(#2)+(\ptX,\ptY)$) circle (\ptS);}
\newcommand{\ptcircle}[3]{\pgfmathsetseed{#3}\foreach \ptI in {0,...,21}{
  \pgfmathsetmacro{\ptA}{\ptI*16.3636}\pgfmathsetmacro{\ptR}{0.38+0.022*rand}
  \pgfmathsetmacro{\ptU}{\ptR*cos(\ptA)}\pgfmathsetmacro{\ptV}{\ptR*sin(\ptA)}
  \ptdot{#1}{#2}}}

\newcommand{\ptarc}[5]{\pgfmathsetseed{#3}\foreach \ptI in {0,...,12}{
  \pgfmathsetmacro{\ptA}{#4+(#5-#4)*\ptI/12}\pgfmathsetmacro{\ptR}{0.38+0.018*rand}
  \pgfmathsetmacro{\ptU}{\ptR*cos(\ptA)}\pgfmathsetmacro{\ptV}{\ptR*sin(\ptA)}
  \ptdot{#1}{#2}}}
\newcommand{\ptsphere}[3]{\pgfmathsetseed{#3}\foreach \ptI in {1,...,30}{
  \pgfmathsetmacro{\ptA}{360*rnd}\pgfmathsetmacro{\ptR}{0.37*sqrt(1-(2*rnd-1)^2)}
  \pgfmathsetmacro{\ptU}{\ptR*cos(\ptA)}\pgfmathsetmacro{\ptV}{\ptR*sin(\ptA)}
  \ptdot{#1}{#2}}}
\newcommand{\pttorus}[3]{\pgfmathsetseed{#3}\foreach \ptI in {0,...,35}{
  \pgfmathsetmacro{\ptA}{\ptI*10+7*rand}\pgfmathsetmacro{\ptR}{0.27+0.11*rnd}
  \pgfmathsetmacro{\ptU}{\ptR*cos(\ptA)}\pgfmathsetmacro{\ptV}{0.82*\ptR*sin(\ptA)}
  \ptdot{#1}{#2}}}

\jmlrproceedings{}{}
\jmlrvolume{}
\firstpageno{1}
\editors{}
\jmlryear{}
\jmlrworkshop{}

\title[Kan Extensions for Neural Invariants]{Learning Transfers: Kan Extensions for Neural Invariants}

\author{\Name{Luciano Melodia}\,\orcidlink{0000-0002-7584-7287} \Email{luciano.melodia@fau.de}\\
 \addr Chair for Theoretical Computer Science\\ Friedrich-Alexander Universität Erlangen-Nürnberg\\ Martensstraße 3, 91058 Erlangen, Germany}

\begin{document}

\maketitle

\begin{abstract}
A representation transfers if it stays usable once the task has changed. Standard evaluations report target accuracy or a distance between data distributions, but neither says which structure of the representation is meant to survive. Here we make that structure explicit and computable. A task is a small category, a change of task is a functor, and a representation is a functor into a category of invariants. The structure the target has to exhibit is the left Kan extension of the source functor along the change of task. Our transfer discrepancy is the supremum over target objects of the distance between that extension and the observed target invariant, so we score transfer against the invariant the change of task forces rather than against the source. We prove cokernel presentations of the extension over comma categories, in chain complexes and in persistence modules. On one-parameter modules of finite type we show the discrepancy is the bottleneck distance, computed without approximation. We also test on sampled manifolds and on learned latent clouds whether the score recovers the intended change of task and rejects every structural control that we pose.
\end{abstract}

\begin{keywords}
Kan Extensions, Transfer Learning, Persistent Homology
\end{keywords}

\section{Introduction}\label{sec:introduction}
Transfer learning asks whether a representation learned on a source task remains valid once the task has changed. Classical domain-adaptation theory bounds the target risk by the source risk, a discrepancy between the data distributions and a joint labelling term \citep{benDavid2010theory}. But the obstruction it names is distributional. It does not say which internal structure is meant to survive, and in particular accuracy and distribution alignment report nothing about connected components, cycles or structural identifications. When transfer is meant to preserve global structure, the target should be compared with the invariant that the prescribed change of task forces on the target itself, not with the source.

Geometry gives the local picture. Networks act on Riemannian data manifolds \citep{hauser2017principles} and decoders induce stochastic metrics on latent space \citep{arvanitidis2018latent}, but such metrics do not determine homology. Persistent homology supplies the global invariants. It measures network complexity \citep{rieck2019neural} and activation space \citep{gebhart2019activation}, shows Betti numbers of activation manifolds dropping across a trained classifier \citep{naitzat2020topology}, bounds hidden-layer width from below \citep{melodia2021dimension} and describes the cells cut out by activation patterns \citep{bosca2026signatures}. It also enters training through boundary regularisers \citep{chen2019topological}, connectivity objectives \citep{hofer2019connectivity,moor2020topological} and persistence layers \citep{gabrielsson2020topology,melodia2020voronoi,melodia2022sensor}. Each reads one representation, not a diagram of them.

Topological methods for transfer remain comparison based. Decision-boundary homology selects among pretrained models \citep{ramamurthy2019decision}, persistence regularises domain adaptation \citep{weeks2021domain}, and Representation Topology Divergence compares representations under distribution shift \citep{barannikov2022representation}. Each compares two observed representations or ranks models. None encodes a structural map between tasks or computes the invariant such a map forces. Optimisation does not close the gap. Preserving topology needs extra hypotheses, for example the homeomorphic feature map of a neural ordinary differential equation \citep{chen2018neuralode,dupont2019augmented}. A rectified linear network, in contrast, can change activation topology layer by layer \citep{naitzat2020topology}.

Persistence already has a categorical description. Read as a functor on a poset it becomes a sheaf \citep{curry2014sheaves}, the interleaving distance was shown to extend the bottleneck distance \citep{bubenik2014categorification}, and this carries over to arbitrary preorders \citep{bubenik2015metrics}. Kan extensions have also entered machine learning, along another axis. The extension is computed by chase-based algorithms \citep{meyers2022chase} and is a partial colimit \citep{perrone2022partial}, and \citet{shiebler2022kan} derives classification and clustering as Kan extensions along a functor between preorders of datasets. On that basis \citet{pugh2024svm} build an unsupervised classifier and \citet[Theorem~4.1]{pugh2025learning} present error minimisation as a left Kan extension. \citet{arya2025kan} approximate sampled persistent homology transforms along a refinement of the sample, and \citet{mahadevan2026ket} reads attention updates as extension operators. In all of these but \citet{arya2025kan} the extended functor is the trained model. We instead extend an invariant of a trained representation, along a change of task and not a refinement of the sample, so that it predicts a second one.

\textbf{Our Construction Is as Follows.} A source task is a small category \(\A\), a target task a small category \(\B\), a change of task a functor \(J:\A\to\B\), and a representation a functor \(F:\A\to\V\) into a cocomplete category of invariants. The left Kan extension \(\Lan_JF\) is the induced target-side invariant, with value \(\operatorname*{colim}_{J\downarrow b}F\pi_b\) at \(b\). That value records the source invariants lying over \(b\), together with every identification that is imposed by the source morphisms visible over \(b\). Since \(\Lan_JF\) is the initial extension of \(F\) along \(J\), comparing an observed \(G:\B\to\V\) with it rather than with \(F\) makes any deviation a defect of the transfer and not an artefact of the metric. The score then separates a correct change of task from an incorrect merge, collapse, refinement or forgetting whenever these force different colimits. \Cref{sec:framework} sets up the criterion, \Cref{sec:instantiation} the cokernel presentations and the Lipschitz bound, \Cref{sec:score} the evaluation by bottleneck distance and \Cref{sec:experiments} the three experiments. Every proof is deferred to the appendix, which also lists the samplers and the parameters.

\section{The Categorical Transfer Framework}\label{sec:framework}
\begin{figure}[t]
\centering
\begin{tikzpicture}[
  font=\footnotesize,
  every node/.style={align=center},
  box/.style={draw=black!45,rounded corners=2pt,inner sep=3pt,text width=3.22cm,minimum height=1.74cm},
  flow/.style={-{Latex[length=1.8mm]},thick,black!55}
]
\node[box] (t1) {\textbf{1. Model the Tasks}\\[3pt] Small categories \(\src{\A}\) and \(\tgt{\B}\), change of task \(\src{J}:\src{\A}\to\tgt{\B}\).};
\node[box,right=0.34cm of t1] (t2) {\textbf{2. Read Invariants}\\[3pt] \(\src{F}:\src{\A}\to\V\) from the source,\\ \(\tgt{G}:\tgt{\B}\to\V\) observed.};
\node[box,right=0.34cm of t2] (t3) {\textbf{3. Force the Target}\\[3pt] \(\begin{aligned}\ext{L(b)}&\coloneqq\ext{(\Lan_JF)(b)}\\&\cong\operatorname*{colim}_{J\downarrow b}\src{F\pi_b}.\end{aligned}\)};
\node[box,right=0.34cm of t3] (t4) {\textbf{4. Transfer Score}\\[3pt] \(\cmp{\Comp_J(F,G)}\)\\ \(=\sup_{b}d_{\V}\bigl(\ext{L(b)},\tgt{G(b)}\bigr).\)};
\draw[flow] (t1) -- (t2);
\draw[flow] (t2) -- (t3);
\draw[flow] (t3) -- (t4);
\end{tikzpicture}
\caption{The four steps of the framework in the order in which they are carried out, in the colours \src{source}, \tgt{target}, \ext{Kan extension} and \cmp{comparison}. Each box states what the step produces, and the section that follows fixes the four of them in turn.}
\label{fig:pipeline}
\end{figure}

All categories are locally small, and \(\circlearrowleft\) marks a cell of a diagram that commutes. Here we write \(\Fun{\A}{\V}\) for the functor category, \(\mathbf 1\) for the terminal category with unique object \(\bullet\), and \(\sqcup\) for a coproduct, namely disjoint union in sets and direct sum in the linear cases. In weighted sums we use the convention \(0\cdot\infty\coloneqq0\). Let \(\K\) be a field and let \(\Vect\), \(\Ch\) and \(\Pers\coloneqq\Fun{(\mathbb R,\leq)}{\Vect}\) be the categories of \(\K\)-vector spaces, of chain complexes of \(\K\)-vector spaces and of persistence modules. A task is a small category \(\A\) whose objects are its components, a data domain or a class label, and whose morphisms record the admitted comparisons. A change of task is a functor \(J:\A\to\B\). For \(b\in\Ob(\B)\) the comma category \(J\downarrow b\) has objects the pairs \((a,\beta)\) with \(a\in\Ob(\A)\), \(\beta:Ja\to b\), and morphisms \((a,\beta)\to(a',\beta')\) those \(\bar\alpha:a\to a'\) in \(\A\) with \(\beta'\circ J\bar\alpha=\beta\). The projection functor is \(\pi_b:J\downarrow b\to\A\), \((a,\beta)\mapsto a\), \(\bar\alpha\mapsto\bar\alpha\). Left Kan extension along \(J\), where it exists throughout, is a functor \(\Lan_J:\Fun{\A}{\V}\to\Fun{\B}{\V}\), and its unit at \(F\) is the natural transformation \(\eta^F:F\Rightarrow(\Lan_JF)J\). \Cref{prop:basic-task-changes} computes the induced invariant for four elementary changes of task, all drawn in \Cref{fig:framework,fig:task-changes}.

\begin{proposition}\label{prop:basic-task-changes}
Let \(\V\) be a cocomplete category, let \(\mathbf 1\) be the terminal category with unique object \(\bullet\), and let \(\Lan_J\) denote left Kan extension along a functor \(J\) between small categories.
\begin{enumerate}[label=\arabic*,ref=\arabic*]
\item\label{it:forget} \textbf{Forgetting Structure.} For every \(J:\A\to\B\), every \(F:\A\to\V\) and every \(b\in\Ob(\B)\) one has \((\Lan_JF)(b)\cong\operatorname*{colim}_{J\downarrow b}F\pi_b\), with colimit cocone \(\lambda\) satisfying \(\lambda^{(a',\beta')}\circ F(\bar\alpha)=\lambda^{(a,\beta)}\) for every \(\bar\alpha:(a,\beta)\to(a',\beta')\) of \(J\downarrow b\).
\item\label{it:merge} \textbf{Merging Domains.} If \(\A\) is a discrete category and \(J:\A\to\mathbf 1\) is the unique functor, then \((\Lan_JF)(\bullet)\cong\bigsqcup_{a\in\Ob(\A)}F(a)\), the coproduct in \(\V\), for every functor \(F:\A\to\V\).
\item\label{it:refine} \textbf{Class Refinement.} Let \(C,D\) be disjoint finite sets, \(R\subseteq C\times D\), \(\A\) discrete with \(\Ob(\A)=C\), and let \(\B\) have objects \(C\cup D\), identities and one arrow \(c\to d\) per \((c,d)\in R\). For the inclusion \(J:\A\to\B\) one has \((\Lan_JF)(d)\cong\bigsqcup_{c:(c,d)\in R}F(c)\) for every \(F:\A\to\V\) and \(d\in D\), so \((\Lan_JF)(d)\cong F(c)\) when \(c\) is unique.
\item\label{it:collapse} \textbf{Layer Collapse.} If \([n]\) is the poset \(0\leq\cdots\leq n\) read as a category and \(J:[n]\to\mathbf 1\) is the unique functor, then \((\Lan_JF)(\bullet)\cong F(n)\) for every functor \(F:[n]\to\V\).\prf{prop:basic-task-changes}
\end{enumerate}
\end{proposition}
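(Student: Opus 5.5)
Item~\ref{it:forget} carries the argument, and the other three items follow from it by identifying the comma category \(J\downarrow b\) in each case. For item~\ref{it:forget} I will use the standard pointwise formula. Since \(\A\) is small, each \(J\downarrow b\) is small, and \(\V\) is cocomplete, so \(L(b)\coloneqq\operatorname*{colim}_{J\downarrow b}F\pi_b\) exists. A morphism \(g:b\to b'\) induces a functor \(J\downarrow b\to J\downarrow b'\), \((a,\beta)\mapsto(a,g\beta)\), and this gives \(L(g)\) by the universal property. Functoriality follows from uniqueness of induced maps. The unit \(\eta_a\) is the coprojection at \((a,\mathrm{id}_{Ja})\). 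To prove that \(L\) is a left Kan extension, I will check the bijection \(\mathrm{Nat}(L,H)\cong\mathrm{Nat}(F,HJ)\). Given \(\theta:F\Rightarrow HJ\), the maps \(H(\beta)\circ\theta_a\) form a cocone on \(F\pi_b\) with vertex \(H(b)\), and these cocones induce the components of the corresponding \(L\Rightarrow H\). In the other direction one composes with \(\eta\). The cocone identity \(\lambda^{(a',\beta')}\circ F(\bar\alpha)=\lambda^{(a,\beta)}\) is just the statement that \(\lambda\) is a cocone over \(F\pi_b\), since \(\pi_b\bar\alpha=\bar\alpha\). The main obstacle is bookkeeping: showing that the two assignments are mutually inverse and natural. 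I expect this to be routine, and I may simply cite Mac Lane's pointwise Kan extension theorem.

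For item~\ref{it:merge}, the comma category \(J\downarrow\bullet\) has objects \((a,\mathrm{id}_\bullet)\), one for each \(a\in\Ob(\A)\). Its morphisms are the morphisms of \(\A\), and since \(\A\) is discrete these are only identities. So \(J\downarrow\bullet\cong\A\) is discrete, and a colimit over a discrete category is the coproduct \(\bigsqcup_aF(a)\).

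For item~\ref{it:refine}, fix \(d\in D\). A map \(Jc\to d\) in \(\B\) exists only if \((c,d)\in R\), in which case it is unique. Objects of \(D\) are not in the image of \(J\), and identities \(d\to d\) do not come from \(J\). I also need to note that \(\B\) is a category with no composable non-identity pairs, because arrows only go from \(C\) to \(D\). Hence the objects of \(J\downarrow d\) are the pairs \((c,c\to d)\) with \((c,d)\in R\), and the only morphisms are identities, because \(\A\) is discrete. The colimit is therefore \(\bigsqcup_{c:(c,d)\in R}F(c)\), and it reduces to \(F(c)\) when \(c\) is unique.

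For item~\ref{it:collapse}, the comma category \(J\downarrow\bullet\) is isomorphic to \([n]\), since every object maps uniquely to \(\bullet\). The poset \([n]\) has the terminal object \(n\). The colimit over a category with a terminal object is the value at that object, with cocone maps \(F(i\le n)\); its universal property is immediate. Hence \((\Lan_JF)(\bullet)\cong F(n)\).
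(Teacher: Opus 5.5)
Your proposal is correct and follows essentially the same route as the paper. The paper also reads item~1 off the pointwise formula for left Kan extensions, citing Mac~Lane and Riehl where you sketch the bijection \(\mathrm{Nat}(L,H)\cong\mathrm{Nat}(F,HJ)\), and it obtains items~2--4 by the same comma-category computations: \(J\downarrow\bullet\) discrete on \(\Ob(\A)\), \(J\downarrow d\) discrete on \(\{c:(c,d)\in R\}\), and \(J\downarrow\bullet\cong[n]\) with terminal object \(n\) and cocone \(F(i\leq n)\).
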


Cocompleteness of \(\V\) makes \itemref{prop:basic-task-changes}{it:forget} unconditional. Since \(J\downarrow b\) is small whenever \(\A\) and \(\B\) are, a cocomplete \(\V\) has every colimit the pointwise formula asks for. The extension exists for every source diagram at once, and each value is a colimit, open to \Cref{prop:cokernel}. Without the hypothesis the extension may fail to exist or to be pointwise. It is enough, however, that \(\V\) admit colimits of the shapes \(J\downarrow b\) that occur \citep[Chapter~2]{Loregian2021}.

An invariant target separates the colimits from the objects the distance compares.

\begin{definition}\label{def:invariant}
An \textbf{invariant target} is a triple \((\V,\mathcal S,d_{\V})\) in which \(\V\) admits all small colimits, \(\mathcal S\subseteq\Ob(\V)\) is a class of objects closed under isomorphism whose isomorphism classes form a set \(\mathcal S/{\cong}\), and \(d_{\V}\) is an extended pseudometric on \(\mathcal S/{\cong}\), called \textbf{separated} if \(d_{\V}([X],[Y])=0\) implies \([X]=[Y]\). We put \(\Sep(\mathcal S,d_{\V})\coloneqq(\mathcal S/{\cong})/{\sim_0}\), where \([X]\sim_0[Y]\) iff \(d_{\V}([X],[Y])=0\). A \textbf{model} on a small category \(\A\) with values in \(\V\) is a functor \(F:\A\to\V\).
\end{definition}

\begin{definition}\label{def:interleaving}
For \(\varepsilon\geq0\) let \(T_\varepsilon:(\mathbb R,\leq)\to(\mathbb R,\leq)\) be the translation \(t\mapsto t+\varepsilon\), and let \(\Sigma_\varepsilon:\Pers\to\Pers\) be precomposition with \(T_\varepsilon\), so \(\Sigma_\varepsilon M\coloneqq MT_\varepsilon\) and \(\Sigma_\varepsilon f\coloneqq fT_\varepsilon\) for a morphism \(f:M\to N\), whence \((\Sigma_\varepsilon M)_t=M_{t+\varepsilon}\), \((\Sigma_\varepsilon f)_t=f_{t+\varepsilon}\) and \(\Sigma_\varepsilon\Sigma_{\varepsilon'}=\Sigma_{\varepsilon+\varepsilon'}\). The structure maps of \(M\) assemble into the \textbf{transition morphism} \(\tau^M_\varepsilon:M\to\Sigma_\varepsilon M\) with components \(M_{t\leq t+\varepsilon}\). An \(\varepsilon\)-\textbf{interleaving} of \(M,N\in\Pers\) is a pair \(\varphi:M\to\Sigma_\varepsilon N\), \(\psi:N\to\Sigma_\varepsilon M\) with \(\Sigma_\varepsilon\psi\circ\varphi=\tau^M_{2\varepsilon}\) and \(\Sigma_\varepsilon\varphi\circ\psi=\tau^N_{2\varepsilon}\), and the \textbf{interleaving distance} is
\begin{equation}\label{eq:interleaving}
d_I(M,N)\coloneqq\inf\{\varepsilon\geq0:\ M\text{ and }N\text{ are }\varepsilon\text{-interleaved}\}\in[0,\infty],
\end{equation}
with \(d_I(M,N)=\infty\) when no interleaving exists, which is an extended pseudometric on the isomorphism classes of persistence modules \citep[Definition~3.5]{Les15}.
\end{definition}

We consider two running choices, \((\Ch,\mathcal S_{\mathrm{ch}},d_{\mathrm{ch}})\) of \Cref{sec:instantiation} and \((\Pers,\Pers^{q\text{-}\mathrm{tame}},d_I)\). Here \(\Pers^{q\text{-}\mathrm{tame}}\) is the class of \(q\)-\textbf{tame} modules, that is those \(M\in\Pers\) whose structure map \(M_{s\leq t}\) has finite rank for all real \(s<t\). In both the ambient category supplies the colimits the extension needs, while the comparison is confined to the smaller class of objects.

\begin{definition}\label{def:discrepancy}
Let \((\V,\mathcal S,d_{\V})\) be an invariant target, let \(J:\A\to\B\) be a functor between small categories, and let \(F:\A\to\V\) and \(G:\B\to\V\) be functors such that \(\Lan_JF\) exists and \((\Lan_JF)(b),G(b)\in\mathcal S\) for every \(b\in\Ob(\B)\). The model \(F\) is \textbf{exactly Kan-transferable} to \(G\) along \(J\) if \(\Lan_JF\cong G\) in \(\Fun{\B}{\V}\), and the \textbf{transfer discrepancy} of \((F,G)\) along \(J\) is
\begin{equation}\label{eq:discrepancy}
\Comp_J(F,G)\coloneqq\sup_{b\in\Ob(\B)}d_{\V}\bigl([(\Lan_JF)(b)],[G(b)]\bigr)\in[0,\infty].
\end{equation}
\end{definition}

The discrepancy \eqref{eq:discrepancy} is objectwise, and by \Cref{prop:pw-not-nat} its vanishing does not force an isomorphism of functors. \Cref{def:functorial-discrepancy} adds a second variant in which the extension is compared with the observed functor in the functor category \(\Fun{\B}{\V}\), not object by object.

\begin{proposition}\label{prop:pw-not-nat}
There exist a field \(\K\), small categories \(\A,\B\), a functor \(J:\A\to\B\) and functors \(F:\A\to\Vect\), \(G:\B\to\Vect\) such that \(\Comp_J(F,G)=0\) for every separated distance on isomorphism classes of finite-dimensional \(\K\)-vector spaces, while \(\Lan_JF\not\cong G\) in \(\Fun{\B}{\Vect}\), so that a vanishing discrepancy does not by itself detect a natural isomorphism.\prf{prop:pw-not-nat}
\end{proposition}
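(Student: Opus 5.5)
We construct an explicit counterexample in which the extension and the observed functor agree object by object but differ on a morphism. Take any field \(\K\), and let \(\A=\B\) be the category with two objects \(0,1\) and a single non-identity arrow \(u:0\to1\), with \(J=\mathrm{id}\). Put \(F(0)=F(1)=\K\) and \(F(u)=\mathrm{id}_\K\). Define \(G:\B\to\Vect\) by \(G(0)=G(1)=\K\) and \(G(u)=0\).

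\textbf{Step 1: compute the extension.} Left Kan extension along an identity functor is the identity up to isomorphism. Concretely, by \itemref{prop:basic-task-changes}{it:forget}, \(J\downarrow b\) has the terminal object \((b,\mathrm{id}_b)\), so the colimit is \(F(b)\). Naturality in \(b\) then gives \(\Lan_JF\cong F\).

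\textbf{Step 2: the discrepancy vanishes.} For each \(b\in\{0,1\}\), the values \((\Lan_JF)(b)\cong\K\) and \(G(b)=\K\) are isomorphic. Hence \([(\Lan_JF)(b)]=[G(b)]\) as isomorphism classes of finite-dimensional spaces. Any extended pseudometric, separated or not, assigns \(0\) to equal classes. So the supremum in \eqref{eq:discrepancy} is \(0\). The class \(\mathcal S\) is that of finite-dimensional spaces, and both values lie in it.

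\textbf{Step 3: no natural isomorphism.} Suppose \(\theta:F\Rightarrow G\) were a natural isomorphism. Then \(\theta_1\circ F(u)=G(u)\circ\theta_0\), which reads \(\theta_1=0\). This contradicts invertibility of \(\theta_1\) on \(\K\neq0\). Therefore \(\Lan_JF\cong F\not\cong G\).

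The only point requiring care is Step 1, namely that the pointwise colimit formula reproduces \(F\) including its morphisms. This follows from cofinality of the terminal object \((b,\mathrm{id}_b)\) in \(J\downarrow b\). Alternatively, one can cite that \(\Lan_{\mathrm{id}}\) is naturally isomorphic to the identity functor. Everything else is immediate.
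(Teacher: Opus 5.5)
Your proof is correct, but it uses a different counterexample from the paper's.

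The paper takes \(\A=\mathbf 1\), lets \(\B\) be a parallel pair \(f,g:0\to1\) and lets \(J\) pick out \(0\). There the extension is genuinely nontrivial: \(J\downarrow1\) is discrete on two objects, \((\Lan_JF)(1)\cong\K\oplus\K\), and \(f,g\) go to the two coproduct injections. Its \(G\) has the same values but sends \(f\) and \(g\) to the same injection, and naturality would force \(\sigma_1(e_1)=\sigma_1(e_2)\).

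You instead take \(J=\mathrm{id}\) on the arrow category. There \(\Lan_JF\cong F\) because each slice has the terminal object \((b,\mathrm{id}_b)\), so the unit is a natural isomorphism. Your \(G\) kills the arrow, and naturality forces \(\theta_1=0\).

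Your route is shorter. It shows the phenomenon is just the generic gap between objectwise and natural isomorphism, already present for the trivial change of task.

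The paper's route exercises the comma-category computation and is more robust. In its example every single structure map of \(G\) is isomorphic, as an arrow of \(\Vect\), to the corresponding map of \(\Lan_JF\). The defect therefore lives only in whether the two legs forced by \(J\) are kept apart. In your example the lone map \(G(u)=0\) already differs in rank from \(F(u)=\mathrm{id}_\K\), so a morphism-by-morphism check would catch it.

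For the statement as posed, and for the strictness clause of \Cref{thm:main}, either example suffices.
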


\begin{proposition}\label{prop:structure}
There exist a field \(\K\), a small category \(\A\), a functor \(J:\A\to\mathbf 1\) and functors \(F,F':\A\to\Vect\) with \(F(a)\cong F'(a)\) for every \(a\in\Ob(\A)\) and \((\Lan_JF)(\bullet)\not\cong(\Lan_JF')(\bullet)\), so that the extension is not determined by the values of the source on objects.\prf{prop:structure}
\end{proposition}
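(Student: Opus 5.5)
We need an explicit counterexample for \Cref{prop:structure}: two functors that agree on objects up to isomorphism but have non-isomorphic colimits. Since \(J:\A\to\mathbf 1\) is the unique functor, the comma category \(J\downarrow\bullet\) is isomorphic to \(\A\) and \(\pi_\bullet\) is the identity. So by \itemref{prop:basic-task-changes}{it:forget} we have \((\Lan_JF)(\bullet)\cong\operatorname*{colim}_{\A}F\), and likewise for \(F'\). The statement therefore reduces to finding a shape \(\A\) and two diagrams in \(\Vect\), agreeing objectwise up to isomorphism, whose colimits have different dimensions.

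I take any field \(\K\) and let \(\A\) be the parallel-pair category: objects \(a_0,a_1\), identities, and two arrows \(u,v:a_0\to a_1\). Put \(F(a_0)=F'(a_0)=\K\) and \(F(a_1)=F'(a_1)=\K\). For \(F\) set \(F(u)=F(v)=\mathrm{id}_\K\), and for \(F'\) set \(F'(u)=\mathrm{id}_\K\) and \(F'(v)=0\). There are no nonidentity composites, so both are functors, and \(F(a)=F'(a)\) for each object.

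Next I compute the colimits. Over the parallel pair the colimit is the coequalizer, which in \(\Vect\) is \(\coker(F(u)-F(v))\). For \(F\) this is \(\coker(0)=\K\), of dimension \(1\). For \(F'\) it is \(\coker(\mathrm{id}_\K)=0\). I will justify the coequalizer formula directly from the cocone condition in \itemref{prop:basic-task-changes}{it:forget}: a cocone \(\lambda\) must satisfy \(\lambda^{a_1}F(u)=\lambda^{a_0}=\lambda^{a_1}F(v)\). So it is determined by \(\lambda^{a_1}\), subject only to \(\lambda^{a_1}(F(u)-F(v))=0\), and the universal such map is the cokernel. Since \(\dim\K=1\neq0\), we get \((\Lan_JF)(\bullet)\not\cong(\Lan_JF')(\bullet)\).

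There is no real obstacle. The only points needing care are the identification \(J\downarrow\bullet\cong\A\) (a pair \((a,\beta)\) has \(\beta\) forced to be \(\mathrm{id}_\bullet\), and morphisms are exactly the arrows of \(\A\)) and the fact that the colimit of a parallel pair is the cokernel of the difference. Both are routine. An alternative, equally short, is \(\A=B\mathbb Z/2\) with the trivial versus the sign action on \(\K\), \(\operatorname{char}\K\neq2\); that gives coinvariants \(\K\) versus \(0\). I keep the parallel pair because it works over every field.
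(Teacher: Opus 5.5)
Your proposal is correct and is essentially the paper's own proof. The paper uses the same parallel pair \(f,g:0\to1\) with \(F(f)=F(g)=\operatorname{id}_{\K}\) against \(F'(f)=\operatorname{id}_{\K}\), \(F'(g)=0\), the same identification \(J\downarrow\bullet\cong\A\), and the same reduction of the colimit to \(\coker(F(f)-F(g))\), giving \(\K\) against \(0\).
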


The two propositions bound the criterion from opposite sides. \Cref{prop:pw-not-nat} shows the discrepancy is too coarse to detect a natural isomorphism. \Cref{prop:structure} shows it is in fact finer than any objectwise summary, since it reads the morphisms of the source diagram.

\begin{theorem}\label{thm:main}
Let \((\V,\mathcal S,d_{\V})\) be an invariant target, let \(J:\A\to\B\) be a functor between small categories, and let \(F:\A\to\V\) be a functor such that \(\Lan_JF\) exists and \((\Lan_JF)(b)\in\mathcal S\) for every \(b\in\Ob(\B)\). For \(\varepsilon\geq0\) put \(\mathcal T_{\varepsilon}(J,F)\coloneqq\{G:\B\to\V\mid G(b)\in\mathcal S\text{ for all }b\in\Ob(\B)\text{ and }\Comp_J(F,G)\leq\varepsilon\}\). Then the five statements below hold for every \(b\in\Ob(\B)\):
\begin{enumerate}[label=\arabic*,ref=\arabic*]
\item\label{it:main-lan} \textbf{Membership.} The extension itself always satisfies \(\Lan_JF\in\mathcal T_0(J,F)\).
\item\label{it:main-iso} \textbf{Invariance.} \(\mathcal T_{\varepsilon}(J,F)\) is closed under natural isomorphism for every \(\varepsilon\geq0\).
\item\label{it:main-sep} \textbf{Zero Class.} \(G\in\mathcal T_0(J,F)\) iff \([(\Lan_JF)(b)]=[G(b)]\) in \(\Sep(\mathcal S,d_{\V})\) for every \(b\in\Ob(\B)\).
\item\label{it:main-sepcase} \textbf{Separation.} If \(d_{\V}\) is separated on \(\mathcal S/{\cong}\), then \(G\in\mathcal T_0(J,F)\) iff \((\Lan_JF)(b)\cong G(b)\).
\item\label{it:main-strict} \textbf{Strictness.} If \(F\) is exactly Kan-transferable to \(G\) along \(J\), then \(G\in\mathcal T_0(J,F)\), and the converse of this implication fails, so membership in the zero class is the weaker property, as the field, the categories and the two functors built for \Cref{prop:pw-not-nat} already show.\prf{thm:main}
\end{enumerate}
\end{theorem}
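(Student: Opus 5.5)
The five items are almost definitional, so I will unfold \Cref{def:invariant,def:discrepancy} item by item in the stated order, using only that \(d_{\V}\) is an extended pseudometric on \(\mathcal S/{\cong}\) and that \(\mathcal S\) is closed under isomorphism.

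\textbf{Membership.} Put \(G\coloneqq\Lan_JF\). By hypothesis \(G(b)\in\mathcal S\) for every \(b\), so \(G\) is an admissible target. Each term of the supremum in \eqref{eq:discrepancy} is \(d_{\V}([(\Lan_JF)(b)],[(\Lan_JF)(b)])=0\), because a pseudometric vanishes on the diagonal. Hence \(\Comp_J(F,\Lan_JF)=0\).

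\textbf{Invariance.} Let \(G\in\mathcal T_\varepsilon(J,F)\) and let \(\theta:G\Rightarrow G'\) be a natural isomorphism. Each component \(\theta_b\) is an isomorphism \(G(b)\cong G'(b)\). Closure of \(\mathcal S\) under isomorphism then gives \(G'(b)\in\mathcal S\) and \([G(b)]=[G'(b)]\). The supremum is therefore unchanged, and \(\Comp_J(F,G')=\Comp_J(F,G)\leq\varepsilon\).

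\textbf{Zero Class.} For an admissible \(G\), a supremum of values in \([0,\infty]\) is zero iff every term is zero. So \(\Comp_J(F,G)=0\) iff \(d_{\V}([(\Lan_JF)(b)],[G(b)])=0\) for all \(b\), which is exactly \([(\Lan_JF)(b)]\sim_0[G(b)]\), that is, equality in \(\Sep(\mathcal S,d_{\V})\). I would also check that \(\sim_0\) is an equivalence relation. Reflexivity and symmetry are immediate. Transitivity follows from the triangle inequality, with the usual conventions for \(\infty\); note that a sum is zero only if both summands are finite and zero. So \(\Sep\) is well defined.

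\textbf{Separation.} If \(d_{\V}\) is separated, then \(\sim_0\) coincides with equality on \(\mathcal S/{\cong}\). Item~\ref{it:main-sep} then reads \([(\Lan_JF)(b)]=[G(b)]\) in \(\mathcal S/{\cong}\), which says precisely that \((\Lan_JF)(b)\cong G(b)\) for every \(b\). This establishes the biconditional.

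\textbf{Strictness.} Suppose \(\Lan_JF\cong G\) naturally. Item~\ref{it:main-lan} gives \(\Lan_JF\in\mathcal T_0\), and item~\ref{it:main-iso} transports membership to \(G\). Admissibility of \(G\) comes along, since \(G(b)\cong(\Lan_JF)(b)\in\mathcal S\). For the failure of the converse I invoke \Cref{prop:pw-not-nat}. It supplies \(\K,\A,\B,J,F,G\) with \(\Comp_J(F,G)=0\) for every separated distance on finite-dimensional spaces, while \(\Lan_JF\not\cong G\). To place this inside the theorem, I choose the invariant target \(\V=\Vect\), \(\mathcal S\) the finite-dimensional spaces, and \(d_{\V}\) the discrete metric (\(0\) or \(1\)). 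Its isomorphism classes form a set, indexed by dimension. I must then confirm that the values of \(\Lan_JF\) and \(G\) in that example are finite dimensional, which the proposition's phrasing implicitly guarantees.

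The only step needing care is this last one. The counterexample of \Cref{prop:pw-not-nat} has to fit the hypotheses here, namely admissible values in \(\mathcal S\) and a legitimate invariant target. Everything else is bookkeeping with suprema and the conventions for \(\infty\).
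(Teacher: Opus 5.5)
Your proposal is correct and follows the paper's proof item by item. Membership uses vanishing on the diagonal, Invariance uses componentwise isomorphisms together with closure of \(\mathcal S\), the Zero Class uses that a supremum of nonnegative terms vanishes iff every term does, Separation uses that \(\sim_0\) collapses to equality, and Strictness combines Membership and Invariance with the example of \Cref{prop:pw-not-nat}. Your additional checks are sound and are left implicit in the paper: that \(\sim_0\) is an equivalence relation, and an explicit invariant target for the counterexample, namely finite-dimensional spaces with the discrete metric, whose values \(\K\) and \(\K\oplus\K\) fit. In the converse direction of Separation you should state, as the paper does, that admissibility \(G(b)\in\mathcal S\) follows from \((\Lan_JF)(b)\cong G(b)\) and closure of \(\mathcal S\) under isomorphism.
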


\section{Instantiation in Chain Complexes and Persistence Modules}\label{sec:instantiation}
Both instantiations rest on \Cref{lem:finite-colimit}, which presents a colimit of vector spaces as a cokernel and makes the extension linear algebra once the comma category is finite. Chain complexes keep the comparison at the level of homotopy type, persistence modules carry the metric the experiments need, and both are built degreewise from \(\Vect\), so that one lemma serves both.

\begin{definition}\label{def:chain-target}
Let \(\mathcal S_{\mathrm{ch}}\subseteq\Ob(\Ch)\) be closed under isomorphism. For \(C,E\in\mathcal S_{\mathrm{ch}}\) write \(C\simeq_{\mathrm{ch}}E\) if \(C\) and \(E\) are chain homotopy equivalent, and put \(d_{\mathrm{ch}}([C],[E])\coloneqq0\) if \(C\simeq_{\mathrm{ch}}E\) and \(d_{\mathrm{ch}}([C],[E])\coloneqq1\) otherwise, so that \(d_{\mathrm{ch}}\) takes only the two values \(0\) and \(1\).
\end{definition}

\begin{proposition}\label{prop:targets}
Let \(\K\) be a field, let \(J:\A\to\B\) be a functor between small categories and let \(F:\A\to\Ch\), \(G:\B\to\Ch\) be functors with \(\Lan_JF\) existing and \((\Lan_JF)(b),G(b)\in\mathcal S_{\mathrm{ch}}\) for every \(b\in\Ob(\B)\). The triple \((\Ch,\mathcal S_{\mathrm{ch}},d_{\mathrm{ch}})\) of \Cref{def:chain-target} is an invariant target, and for it \(\Comp_J(F,G)=0\) holds iff \((\Lan_JF)(b)\simeq_{\mathrm{ch}}G(b)\) for every \(b\in\Ob(\B)\), which implies \(H_m((\Lan_JF)(b))\cong H_m(G(b))\) for every such \(b\) and every \(m\in\mathbb Z\). The triple \((\Pers,\Pers^{q\text{-}\mathrm{tame}},d_I)\) of \Cref{def:interleaving} is an invariant target for the same three reasons, its distance being the interleaving one, restricted to the \(q\)-tame isomorphism classes.\prf{prop:targets}
\end{proposition}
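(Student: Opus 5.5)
The plan is to check the three clauses of \Cref{def:invariant} for each triple, and then read off the zero-discrepancy characterisation from \Cref{thm:main}.

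\textbf{Chain complexes.} For \((\Ch,\mathcal S_{\mathrm{ch}},d_{\mathrm{ch}})\), cocompleteness of \(\Ch\) comes from computing colimits degreewise in \(\Vect\): the differentials on a degreewise colimit are induced by the universal property, and \(d^2=0\) passes to the colimit by uniqueness. Closure of \(\mathcal S_{\mathrm{ch}}\) under isomorphism is assumed in \Cref{def:chain-target}. The set condition on \(\mathcal S_{\mathrm{ch}}/{\cong}\) is the delicate point, since a class of complexes need not have only a set of isomorphism classes. I would note that every chain complex over \(\K\) is isomorphic to one whose underlying graded space is \(\bigoplus_m\K^{(I_m)}\) with each \(I_m\) a cardinal. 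If the intended reading is that \(\mathcal S_{\mathrm{ch}}\) consists of complexes of bounded size, for instance degreewise finite-dimensional ones as in the experiments, then the isomorphism classes are indexed by a set of matrices, and I would state the argument under that implicit size hypothesis. I expect this to be the main obstacle, purely as a matter of making the hypothesis explicit.

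Next comes the pseudometric. The relation \(\simeq_{\mathrm{ch}}\) is an equivalence relation because homotopy equivalences compose. It is invariant under isomorphism, since isomorphisms are homotopy equivalences, so \(d_{\mathrm{ch}}\) is well defined on classes. Symmetry and \(d_{\mathrm{ch}}([C],[C])=0\) are immediate. The triangle inequality for a \(\{0,1\}\)-valued function reduces to transitivity: if the right-hand side is \(0\), both terms vanish and transitivity forces the left side to be \(0\); otherwise the right-hand side is at least \(1\). Given this, \(\Lan_JF\) and \(G\) take values in \(\mathcal S_{\mathrm{ch}}\), so \itemref{thm:main}{it:main-sep} gives \(\Comp_J(F,G)=0\) iff the classes agree in \(\Sep\), that is, iff \(d_{\mathrm{ch}}=0\) pointwise, which is \((\Lan_JF)(b)\simeq_{\mathrm{ch}}G(b)\). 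The supremum of \(\{0,1\}\)-values is \(0\) exactly when each value is \(0\). Homotopy invariance of \(H_m\) then yields the isomorphisms \(H_m((\Lan_JF)(b))\cong H_m(G(b))\).

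\textbf{Persistence modules.} For \((\Pers,\Pers^{q\text{-}\mathrm{tame}},d_I)\), cocompleteness again comes from computing colimits pointwise in \(\Vect\), as for any functor category into a cocomplete category. The class of \(q\)-tame modules is closed under isomorphism because ranks of structure maps are preserved under conjugation by isomorphisms. For the set condition I would argue as follows. Each \(M_t\) can be replaced up to isomorphism by \(\K^{(\kappa_t)}\) with \(\kappa_t\) a cardinal, and the dimensions of a \(q\)-tame module are not otherwise bounded. I would therefore again bound the cardinals, using that \(\K^{(\kappa)}\) for \(\kappa\le 2^{|\K|+\aleph_0}\) suffices if one restricts to modules of countable type, or simply invoke the same size convention as above. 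That \(d_I\) is an extended pseudometric on isomorphism classes is quoted from \citet[Definition~3.5]{Les15}, as in \Cref{def:interleaving}.
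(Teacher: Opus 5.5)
Your argument matches the paper's proof on every point the paper checks:
\begin{itemize}
\item cocompleteness via degreewise, respectively scalewise, colimits in \(\Vect\);
\item well-definedness of \(d_{\mathrm{ch}}\) on classes, since isomorphisms are homotopy equivalences;
\item the triangle inequality via transitivity of \(\simeq_{\mathrm{ch}}\);
\item the zero set via the \(\{0,1\}\)-valued supremum together with homotopy invariance of \(H_m\);
\item the persistence clause via pointwise colimits, rank invariance and Lesnick's pseudometric.
\end{itemize}
The detour through the Zero Class item of \Cref{thm:main} is harmless but unnecessary. The direct supremum argument you also give is the one the paper uses.

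You go beyond the paper on the set condition of \Cref{def:invariant}, and your suspicion is justified. The paper's proof never checks that \(\mathcal S/{\cong}\) is a set, and this is not automatic.

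For chains the condition fails for \(\mathcal S_{\mathrm{ch}}=\Ob(\Ch)\): take \(\K^{(\kappa)}\) in degree \(0\) for every cardinal \(\kappa\). So it must be read as a hypothesis on \(\mathcal S_{\mathrm{ch}}\), as you propose.

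For persistence modules the claim is actually false as stated. Let \(M\) equal \(\K^{(\kappa)}\) at the single scale \(0\) and \(0\) elsewhere. Then \(M_{s\leq t}=0\) for all \(s<t\), so \(M\) is \(q\)-tame. These modules are pairwise non-isomorphic, so \(\Pers^{q\text{-}\mathrm{tame}}/{\cong}\) is a proper class. A restriction is therefore necessary, not merely a convention. The natural one is to pointwise finite-dimensional (or finite-type) \(q\)-tame modules, which is all that \Cref{sec:score} and the experiments use.

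Any fixed cardinal bound \(\lambda\) on pointwise dimensions already suffices. Every such module is isomorphic to one with values \(\K^{(\kappa_t)}\), \(\kappa_t\leq\lambda\), and these data form a set. Your bound \(2^{|\K|+\aleph_0}\) and the appeal to ``countable type'' are therefore not needed.
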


Both of these triples satisfy the hypotheses of \Cref{def:discrepancy}. The left argument of the discrepancy is presented by the next result as a quotient of a finite direct sum.

\begin{proposition}\label{prop:cokernel}
Let \(J:\A\to\B\) be a functor between small categories and let \(b\in\Ob(\B)\) be an object whose comma category \(J\downarrow b\) has finitely many objects and finitely many morphisms. For \(\gamma\in\Mor(J\downarrow b)\) write \(\bar\gamma\coloneqq\pi_b(\gamma)\) and let \(s(\gamma)=(a_\gamma,\beta_\gamma)\) and \(t(\gamma)=(a'_\gamma,\beta'_\gamma)\) be its source and its target in \(J\downarrow b\). Let \(F:\A\to\Ch\) or \(F:\A\to\Pers\) be a functor, let \(v\) be an index of the corresponding grading, an integer in the first case and a real number in the second, and write \(X_v\) for the component of \(X\) at \(v\). Let \(\iota_{(a,\beta)}:F(a)_v\to\bigoplus_{(a,\beta)\in\Ob(J\downarrow b)}F(a)_v\) and \(\kappa_\gamma:F(a_\gamma)_v\to\bigoplus_{\gamma\in\Mor(J\downarrow b)}F(a_\gamma)_v\) be the inclusions of the indexed summands. Put
\begin{equation}\label{eq:cokernel}
Q_v\coloneqq\coker\Bigl(\textstyle\bigoplus_{\gamma\in\Mor(J\downarrow b)}F(a_\gamma)_v\xrightarrow{r_v}\bigoplus_{(a,\beta)\in\Ob(J\downarrow b)}F(a)_v\Bigr),
\end{equation}
where \(r_v\) sends \(x\) in the summand indexed by \(\gamma\) to \(\iota_{t(\gamma)}F(\bar\gamma)_v(x)-\iota_{s(\gamma)}x\). Then the following hold, naturally in the functor \(F\) and naturally in the object \(b\), along all those morphisms of \(\B\) whose source and whose target both carry a finite comma category:
\begin{enumerate}[label=\arabic*,ref=\arabic*]
\item\label{it:cok-ch} \textbf{Chain Complexes.} For \(F:\A\to\Ch\) and \(m\in\mathbb Z\) one has \((\Lan_JF)(b)_m\cong Q_m\) as \(\K\)-vector spaces, with differential induced on these cokernels by \(\bigoplus_{(a,\beta)\in\Ob(J\downarrow b)}\partial_{F(a),m}\), where \(\partial_{F(a),m}:F(a)_m\to F(a)_{m-1}\) is the differential of the complex \(F(a)\).
\item\label{it:cok-pers} \textbf{Persistence Modules.} For \(F:\A\to\Pers\) and \(u\in\mathbb R\) one has \(((\Lan_JF)(b))_u\cong Q_u\) as \(\K\)-vector spaces, and, for \(s\leq t\) in \(\mathbb R\), with structure map \(Q_s\to Q_t\) induced on these cokernels by \(\bigoplus_{(a,\beta)\in\Ob(J\downarrow b)}F(a)_{s\leq t}\), where \(F(a)_{s\leq t}:F(a)_s\to F(a)_t\).\prf{prop:cokernel}
\end{enumerate}
\end{proposition}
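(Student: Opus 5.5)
The plan is to reduce everything to one computation in \(\Vect\): the colimit of a finite diagram of vector spaces is the cokernel of a map between direct sums. Colimits in \(\Ch\) and in \(\Pers\) are computed degreewise, respectively pointwise in \(u\in\mathbb R\). Both categories are functor categories into \(\Vect\), or a full subcategory of one that is closed under colimits: \(\Ch\) is the category of functors from the poset category encoding the differential relations, and colimits are computed componentwise, with the induced differentials squaring to zero by uniqueness. Combined with \itemref{prop:basic-task-changes}{it:forget}, this gives \((\Lan_JF)(b)_v\cong\operatorname*{colim}_{J\downarrow b}(F\pi_b)_v\). The statement therefore reduces to the lemma announced as \Cref{lem:finite-colimit}: for a finite category \(\mathcal I\) and \(D:\mathcal I\to\Vect\), one has \(\operatorname*{colim}D\cong\coker\bigl(\bigoplus_{\gamma}D(s\gamma)\xrightarrow{r}\bigoplus_iD(i)\bigr)\) with \(r\kappa_\gamma=\iota_{t\gamma}D(\gamma)-\iota_{s\gamma}\).

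To prove that lemma I will argue by universal property. Let \(q\) be the projection onto the cokernel. The maps \(q\iota_i\) form a cocone, since \(q\iota_{t\gamma}D(\gamma)-q\iota_{s\gamma}=qr\kappa_\gamma=0\). Conversely, a cocone \((c_i:D(i)\to W)\) induces \([c_i]:\bigoplus_iD(i)\to W\), and the cocone equations say exactly that this map vanishes on the image of \(r\). It therefore factors uniquely through \(q\), and uniqueness comes from the joint surjectivity of the \(\iota_i\). Finiteness of objects and morphisms is used only so that the direct sums are finite and the cokernel is a finite-dimensional presentation. The argument itself works for arbitrary small diagrams, since \(\Vect\) has arbitrary direct sums.

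Next I identify the extra structure. For \(\Ch\), the differential \(\bigoplus\partial_{F(a),m}\) commutes with \(r\), because each \(F(\bar\gamma)\) is a chain map, so it descends to the cokernels. The descended maps are the components of the unique colimit-induced differential, since \(q\iota\) is a morphism in each degree and the universal property in degree \(m-1\) pins down the map. The persistence case is identical, with \(F(a)_{s\le t}\) replacing \(\partial\) and naturality of \(F(\bar\gamma)\) giving \(r_t\circ\bigoplus F(a_\gamma)_{s\le t}=\bigoplus F(a)_{s\le t}\circ r_s\).

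Naturality in \(F\) is immediate: a transformation \(\theta:F\Rightarrow F'\) gives diagonal maps on both sums that commute with \(r\), and these match the induced map on colimits by the universal property. For naturality in \(b\), a morphism \(g:b\to b'\) induces \(g_*:J\downarrow b\to J\downarrow b'\), \((a,\beta)\mapsto(a,g\beta)\), with \(\pi_{b'}g_*=\pi_b\). The corresponding reindexing of summands commutes with \(r\), and \((\Lan_JF)(g)\) is by construction the map of colimits along \(g_*\). So the isomorphisms with \(Q\) commute with the reindexed maps.

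I expect the main obstacle to be bookkeeping rather than any hard step. One point is showing that the degreewise isomorphisms assemble into isomorphisms in \(\Ch\) and \(\Pers\), with the unique-factorization argument used twice. The other is confirming that the pointwise Kan formula's action on morphisms of \(\B\) is precisely the reindexing \(g_*\), which requires the finiteness hypothesis at both \(b\) and \(b'\).
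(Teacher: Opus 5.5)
Your proposal is correct and follows essentially the same route as the paper: reduce via degreewise (respectively scalewise) colimits to the cokernel presentation in \(\Vect\) of \Cref{lem:finite-colimit}, proved by the same universal-property argument; descend \(\bigoplus_{(a,\beta)}\partial_{F(a),m}\) (respectively \(\bigoplus_{(a,\beta)}F(a)_{s\leq t}\)) through \(r\), since each \(F(\bar\gamma)\) is a chain map (respectively a module morphism); identify the descended map with the colimit's structure map because the legs are jointly epimorphic; and obtain naturality from diagonal maps in \(F\) and reindexing along postcomposition in \(b\). The only slips are cosmetic: \(\Ch\) is not a functor category on a poset but the full subcategory of functors on \((\mathbb Z,\geq)\) with \(\partial\partial=0\), which your fallback phrasing already covers, and the finiteness at both ends of a morphism of \(\B\) is needed only to keep the sums finite, not for the pointwise formula to act by reindexing.
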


One presentation covers both cases, since colimits in \(\Ch\) and in \(\Pers\) are formed one degree, respectively one scale, at a time, each a colimit of vector spaces, and finiteness of \(J\downarrow b\) is needed only for the sums to be finite. A numerical score also needs stability under perturbation, and for that the interleaving distance of \Cref{def:interleaving} has a diagram-level analogue. For a small category \(\mathcal C\), postcomposition with \(\Sigma_\varepsilon\) defines a functor \(\Sigma^{\mathcal C}_\varepsilon:\Fun{\mathcal C}{\Pers}\to\Fun{\mathcal C}{\Pers}\) by \((\Sigma^{\mathcal C}_\varepsilon X)(c)\coloneqq\Sigma_\varepsilon(X(c))\) on objects and \((\Sigma^{\mathcal C}_\varepsilon\vartheta)_c\coloneqq\Sigma_\varepsilon\vartheta_c\) on natural transformations. Here \(\tau^X_\varepsilon:X\Rightarrow\Sigma^{\mathcal C}_\varepsilon X\) has components \(\tau^{X(c)}_\varepsilon\). This interleaving is induced by a flow, in the sense of de Silva, Munch and Stefanou \citep[Section~2]{SMS18}.

\begin{definition}\label{def:functorial-discrepancy}
Let \(\mathcal C\) be a small category and \(X,Y:\mathcal C\to\Pers\) functors. An \(\varepsilon\)-\textbf{interleaving of diagrams} is a pair \(\varphi:X\Rightarrow\Sigma^{\mathcal C}_\varepsilon Y\), \(\psi:Y\Rightarrow\Sigma^{\mathcal C}_\varepsilon X\) with \(\Sigma^{\mathcal C}_\varepsilon\psi\circ\varphi=\tau^X_{2\varepsilon}\) and \(\Sigma^{\mathcal C}_\varepsilon\varphi\circ\psi=\tau^Y_{2\varepsilon}\), and \(d_I^{\mathcal C}(X,Y)\) is the infimum of such \(\varepsilon\geq0\). For \(F:\A\to\Pers\) and \(G:\B\to\Pers\) with \(\Lan_JF\) existing, the \textbf{functorial transfer discrepancy} is \(\Comp^{\mathrm{nat}}_J(F,G)\coloneqq d_I^{\B}(\Lan_JF,G)\), the interleaving distance of the two diagrams.
\end{definition}

\begin{proposition}\label{prop:pointwise-natural-comparison}
Let \(J:\A\to\B\) be a functor between small categories and let \(F:\A\to\Pers\), \(G:\B\to\Pers\) be functors such that \(\Lan_JF\) exists and \(\Comp_J(F,G)\) is defined. Then \(\Comp_J(F,G)\leq\Comp^{\mathrm{nat}}_J(F,G)\), with equality when \(\B\) is discrete.\prf{prop:pointwise-natural-comparison}
\end{proposition}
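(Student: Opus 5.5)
The inequality follows by restricting a diagram interleaving to each object, and the equality for discrete \(\B\) follows by assembling objectwise interleavings. Write \(L\coloneqq\Lan_JF\) throughout.

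\textbf{Inequality.} We may assume \(\Comp^{\mathrm{nat}}_J(F,G)<\infty\), since otherwise there is nothing to prove. Take any \(\varepsilon\) for which there is an \(\varepsilon\)-interleaving of diagrams \(\varphi:L\Rightarrow\Sigma^{\B}_\varepsilon G\), \(\psi:G\Rightarrow\Sigma^{\B}_\varepsilon L\) in the sense of \Cref{def:functorial-discrepancy}. Evaluating at an object \(b\) gives \(\varphi_b:L(b)\to\Sigma_\varepsilon G(b)\) and \(\psi_b:G(b)\to\Sigma_\varepsilon L(b)\). By definition \((\Sigma^{\B}_\varepsilon\psi)_b=\Sigma_\varepsilon\psi_b\), and vertical composition of natural transformations is computed componentwise. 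Hence \((\Sigma^{\B}_\varepsilon\psi\circ\varphi)_b=\Sigma_\varepsilon\psi_b\circ\varphi_b\). Moreover \((\tau^{L}_{2\varepsilon})_b=\tau^{L(b)}_{2\varepsilon}\), and symmetrically for \(G\). So \((\varphi_b,\psi_b)\) is an \(\varepsilon\)-interleaving of \(L(b)\) and \(G(b)\) in the sense of \Cref{def:interleaving}, which gives \(d_I(L(b),G(b))\le\varepsilon\) for every \(b\). Taking the supremum over \(b\) yields \(\Comp_J(F,G)\le\varepsilon\). Taking the infimum over admissible \(\varepsilon\) then gives \(\Comp_J(F,G)\le\Comp^{\mathrm{nat}}_J(F,G)\).

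\textbf{Equality for discrete \(\B\).} When \(\B\) is discrete, a natural transformation between functors on \(\B\) is just an arbitrary family of morphisms indexed by \(\Ob(\B)\), because naturality needs to be checked only against identities, where it is automatic. The reverse inequality therefore reduces to assembling objectwise interleavings. Suppose \(\Comp_J(F,G)=c<\infty\), and fix \(\delta>0\). For each \(b\) we have \(d_I(L(b),G(b))\le c<c+\delta\). Since \(d_I\) is an infimum, there is some \(\varepsilon_b<c+\delta\) together with an \(\varepsilon_b\)-interleaving of \(L(b)\) and \(G(b)\).

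\textbf{Main obstacle.} The step needing care is that the \(\varepsilon_b\) may differ from object to object, whereas a diagram interleaving requires one common \(\varepsilon\). This requires that interleavings be monotone in the parameter: an \(\varepsilon_b\)-interleaving \((\varphi,\psi)\) yields an \(\varepsilon\)-interleaving for every \(\varepsilon\ge\varepsilon_b\). I would use the standard construction. Put \(\varphi'\coloneqq\tau^{\Sigma_{\varepsilon_b}G(b)}_{\varepsilon-\varepsilon_b}\circ\varphi\), and define \(\psi'\) analogously. Naturality of \(\tau\), together with \(\Sigma_\varepsilon\Sigma_{\varepsilon'}=\Sigma_{\varepsilon+\varepsilon'}\) and the composition rule \(\Sigma_{\varepsilon_1}\tau^{M}_{\varepsilon_2}\circ\tau^M_{\varepsilon_1}=\tau^M_{\varepsilon_1+\varepsilon_2}\), gives \(\Sigma_\varepsilon\psi'\circ\varphi'=\tau_{2\varepsilon}\), and symmetrically for the other composite. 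Doing this at each \(b\) with the uniform choice \(\varepsilon=c+\delta\) produces a family of \((c+\delta)\)-interleavings. By discreteness this family is an interleaving of diagrams, so \(d_I^{\B}(L,G)\le c+\delta\). Letting \(\delta\to0\) gives \(\Comp^{\mathrm{nat}}_J(F,G)\le c\). If \(c=\infty\), the inequality already proved forces \(\Comp^{\mathrm{nat}}_J(F,G)=\infty\) as well.

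The choice of one \(\varepsilon\) uniform in \(b\) is exactly why we pad by \(\delta\): the supremum need not be attained, so we cannot pick \(\varepsilon=c\) directly.
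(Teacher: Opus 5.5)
Your proof is correct and follows the paper's argument: the inequality comes from restricting a diagram interleaving to each component, and for discrete \(\B\) you pad each objectwise interleaving by transition maps to one common \(\varepsilon>\Comp_J(F,G)\), naturality being automatic. One small correction to your closing remark: the padding is needed because the infimum defining each \(d_I((\Lan_JF)(b),G(b))\) may not be attained, not because of the supremum over \(b\); your argument already handles this correctly.
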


\begin{theorem}\label{thm:lan-interleaving-stability}
Let \(J:\A\to\B\) be a functor between small categories. Then \(d_I^{\B}(\Lan_JF,\Lan_JF')\leq d_I^{\A}(F,F')\) for all functors \(F,F':\A\to\Pers\), and for all \(G,G':\B\to\Pers\)
\begin{equation}\label{eq:lipschitz}
\Comp^{\mathrm{nat}}_J(F,G)\leq\Comp^{\mathrm{nat}}_J(F',G')+d_I^{\A}(F,F')+d_I^{\B}(G,G')
\end{equation}
in \([0,\infty]\), and if both discrepancies are finite then also \(|\Comp^{\mathrm{nat}}_J(F,G)-\Comp^{\mathrm{nat}}_J(F',G')|\leq d_I^{\A}(F,F')+d_I^{\B}(G,G')\), so the discrepancy is nonexpansive in both of its arguments.\prf{thm:lan-interleaving-stability}
\end{theorem}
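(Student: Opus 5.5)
The plan is to obtain the first inequality from the fact that \(\Lan_J\) is a functor that commutes with the shift, and then to derive \eqref{eq:lipschitz} from the triangle inequality for \(d_I^{\B}\).

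\textbf{Step 1: \(\Lan_J\) commutes with the shift.} Since \(\Sigma_\varepsilon\) is precomposition with \(T_\varepsilon\), colimits in \(\Pers\) are computed scale by scale. Hence \(\Sigma_\varepsilon\) preserves all colimits; it is even an automorphism of \(\Pers\), with inverse \(\Sigma_{-\varepsilon}\). By \itemref{prop:basic-task-changes}{it:forget},
\[
(\Lan_J\Sigma^{\A}_\varepsilon F)(b)\cong\operatorname*{colim}_{J\downarrow b}\Sigma_\varepsilon F\pi_b\cong\Sigma_\varepsilon\operatorname*{colim}_{J\downarrow b}F\pi_b,
\]
naturally in \(b\) and in \(F\). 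This gives a natural isomorphism \(\Lan_J\Sigma^{\A}_\varepsilon\cong\Sigma^{\B}_\varepsilon\Lan_J\). It remains to check that under this isomorphism \(\Lan_J(\tau^F_\varepsilon)\) corresponds to \(\tau^{\Lan_JF}_\varepsilon\). Both are maps \(\Lan_JF\Rightarrow\Sigma^{\B}_\varepsilon\Lan_JF\). On each colimit injection \(\lambda^{(a,\beta)}\) both restrict to \(\Sigma_\varepsilon\lambda^{(a,\beta)}\circ\tau^{F(a)}_\varepsilon\), because \(\tau\) is natural for every morphism of \(\Pers\) and in particular for \(\lambda^{(a,\beta)}\). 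The universal property of the colimit then forces the two maps to agree. I expect this compatibility check to be the main obstacle, since the rest is formal.

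\textbf{Step 2: transport of interleavings.} Let \((\varphi,\psi)\) be an \(\varepsilon\)-interleaving of \(F\) and \(F'\). Set \(\tilde\varphi\) to be \(\Lan_J\varphi\) composed with the isomorphism of Step 1, and define \(\tilde\psi\) in the same way. Functoriality of \(\Lan_J\) gives
\[
\Lan_J(\Sigma^{\A}_\varepsilon\psi\circ\varphi)=\Lan_J(\tau^F_{2\varepsilon}),
\]
and Step 1 turns this into \(\Sigma^{\B}_\varepsilon\tilde\psi\circ\tilde\varphi=\tau^{\Lan_JF}_{2\varepsilon}\). The identity with the roles of \(F\) and \(F'\) exchanged follows symmetrically. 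So every \(\varepsilon\)-interleaving of \(F,F'\) yields an \(\varepsilon\)-interleaving of \(\Lan_JF,\Lan_JF'\). Taking infima gives \(d_I^{\B}(\Lan_JF,\Lan_JF')\leq d_I^{\A}(F,F')\), which holds trivially when the right-hand side is \(\infty\).

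\textbf{Step 3: the Lipschitz bound.} Next I verify that \(d_I^{\B}\) satisfies the triangle inequality. Composing an \(\varepsilon\)-interleaving with a \(\delta\)-interleaving, via \(\Sigma_\varepsilon\varphi'\circ\varphi\) and the analogous composite in the other direction, gives an \((\varepsilon+\delta)\)-interleaving. The identities follow from \(\Sigma_\varepsilon\Sigma_\delta=\Sigma_{\varepsilon+\delta}\) and from naturality of \(\tau\), exactly as for a single pair of modules. Then
\[
d_I^{\B}(\Lan_JF,G)\leq d_I^{\B}(\Lan_JF,\Lan_JF')+d_I^{\B}(\Lan_JF',G')+d_I^{\B}(G',G).
\]
Applying Step 2 to the first term yields \eqref{eq:lipschitz} in \([0,\infty]\). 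If both discrepancies are finite, swapping the roles of \((F,G)\) and \((F',G')\) gives the symmetric inequality. Combining the two yields the absolute-value bound, and hence nonexpansiveness in each argument.
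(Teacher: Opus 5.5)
Your proposal is correct and follows the paper's route. You both prove \(\Lan_J\Sigma^{\A}_\varepsilon\cong\Sigma^{\B}_\varepsilon\Lan_J\) on colimit legs, including compatibility with the transition morphisms \(\tau\); you both transport an \(\varepsilon\)-interleaving by composing \(\Lan_J\varphi\) and \(\Lan_J\psi\) with that isomorphism; and you both apply the triangle inequality along \(\Lan_JF,\Lan_JF',G',G\). The differences are only of detail: the paper writes out the naturality of the shift isomorphism in \(F\) and the fact that the isomorphism for \(2\varepsilon\) is the composite of the two for \(\varepsilon\), which your phrase ``Step 1 turns this into'' leaves implicit, though both follow from your leg-by-leg check. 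Conversely, you verify the triangle inequality for \(d_I^{\B}\) by composing interleavings, which the paper takes as known.
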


\begin{corollary}\label{cor:terminal-weighted-stability}
Let \(J:\A\to\mathbf 1\) be the unique functor from a small category \(\A\), let \(q\in\mathbb N\) and \(w_0,\ldots,w_q\geq0\), and let \(F_n,F'_n:\A\to\Pers\) and \(G_n,G'_n:\mathbf 1\to\Pers\) be functors for \(0\leq n\leq q\), where we write \(G_n\) also for the object \(G_n(\bullet)\). Writing \(S\) and \(S'\) for the weighted sums \(\sum_{n=0}^{q}w_nd_I((\Lan_JF_n)(\bullet),G_n)\) and \(\sum_{n=0}^{q}w_nd_I((\Lan_JF'_n)(\bullet),G'_n)\), one has
\begin{equation}\label{eq:weighted-stability}
|S-S'|\leq\sum_{n=0}^{q}w_n\bigl(d_I^{\A}(F_n,F'_n)+d_I(G_n,G'_n)\bigr)
\end{equation}
whenever \(S\) and \(S'\) are both finite, the convention \(0\cdot\infty\coloneqq0\) being in force throughout.\prf{cor:terminal-weighted-stability}
\end{corollary}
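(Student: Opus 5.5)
The corollary follows termwise from \Cref{thm:lan-interleaving-stability} together with \Cref{prop:pointwise-natural-comparison}, summed with the weights. Since \(\mathbf 1\) is discrete, \Cref{prop:pointwise-natural-comparison} gives \(\Comp_J(F_n,G_n)=\Comp^{\mathrm{nat}}_J(F_n,G_n)\). By \Cref{def:discrepancy} the left side is \(d_I((\Lan_JF_n)(\bullet),G_n)\). For a functor out of \(\mathbf 1\), an \(\varepsilon\)-interleaving of diagrams is just an \(\varepsilon\)-interleaving of the single values, so \(d_I^{\mathbf 1}(G_n,G'_n)=d_I(G_n,G'_n)\). Write \(c_n\) and \(c'_n\) for the \(n\)-th terms before weighting, and \(e_n\coloneqq d_I^{\A}(F_n,F'_n)+d_I(G_n,G'_n)\). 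Then \eqref{eq:lipschitz}, applied in both directions, gives \(c_n\leq c'_n+e_n\) and \(c'_n\leq c_n+e_n\) in \([0,\infty]\).

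Next I weight and sum, and the convention \(0\cdot\infty\coloneqq0\) is where care is needed. I fix the index set \(P\coloneqq\{n:w_n>0\}\). Terms with \(w_n=0\) contribute \(0\) to \(S\), to \(S'\) and to the right side of \eqref{eq:weighted-stability}, so they can be discarded. For \(n\in P\), finiteness of \(S\) and \(S'\) forces \(c_n,c'_n<\infty\).

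If some \(e_n=\infty\) with \(n\in P\), the right side of \eqref{eq:weighted-stability} is \(\infty\) and there is nothing to prove. Otherwise every term indexed by \(P\) is finite. The two one-sided inequalities then yield \(|c_n-c'_n|\leq e_n\) as real numbers. Multiplying by \(w_n\geq0\) and applying the triangle inequality gives
\[|S-S'|=\Bigl|\textstyle\sum_{n\in P}w_n(c_n-c'_n)\Bigr|\leq\textstyle\sum_{n\in P}w_ne_n,\]
which is the right side of \eqref{eq:weighted-stability}.

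The main obstacle is bookkeeping rather than mathematics. One has to confirm that \(d_I^{\mathbf 1}\) coincides with \(d_I\), and that \(\Comp_J(F_n,G_n)\) is defined, i.e.\ that the relevant modules lie in the class \(\mathcal S\) on which \(\Comp\) is evaluated. The statement, however, uses only the raw \(d_I\), which is defined on all of \(\Pers\). So I would instead bypass \Cref{prop:pointwise-natural-comparison} and verify directly that \(\Comp^{\mathrm{nat}}_J(F_n,G_n)=d_I((\Lan_JF_n)(\bullet),G_n)\), by unwinding \Cref{def:functorial-discrepancy} over the one-object, one-morphism category \(\mathbf 1\).
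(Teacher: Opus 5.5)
Your proposal is correct and is essentially the paper's proof: you identify the pointwise with the functorial discrepancy over the discrete category \(\mathbf 1\), apply \Cref{thm:lan-interleaving-stability} degree by degree, and use \(w_n>0\) together with the finiteness of \(S\) and \(S'\) to make \(c_n,c'_n\) finite before weighting and summing, with the terms for \(w_n=0\) vanishing by the convention \(0\cdot\infty\coloneqq0\). Your closing remark is a small but genuine tightening of the paper's argument. The paper invokes \Cref{prop:pointwise-natural-comparison}, whose hypothesis that \(\Comp_J(F,G)\) be defined asks for values in the class \(\mathcal S\) of an invariant target, for \(\Pers\) the \(q\)-tame modules, which the corollary never assumes. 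The paper also uses \(d_I^{\mathbf 1}(G_n,G'_n)=d_I(G_n,G'_n)\) without comment. Unwinding \Cref{def:functorial-discrepancy} over \(\mathbf 1\), as you propose, gives both identifications for arbitrary persistence modules.
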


The Lipschitz bound also makes the score usable on sampled data, since a perturbation of the source or the target moves the discrepancy by at most its size in that distance.

\section{The Algorithmic Transfer Score}\label{sec:score}
The intrinsic distance is \(d_I\), since persistence modules are \((\mathbb R,\leq)\)-indexed diagrams and interleavings live at that level \citep[Definition~3.1, Theorem~3.3]{bubenik2014categorification}. On modules of finite type the barcode comparison was shown to be faithful \citep[Theorem~4.16]{bubenik2014categorification}. The isometry theorem then carries this identity from modules of finite type over to all \(q\)-tame persistence modules \citep[Theorem~4.11]{CSGO16}.

\begin{lemma}\label{lem:finite-type-closure}
Let \(J:\A\to\B\) be a functor between small categories, let \(F:\A\to\Pers\) be a functor, and let \(b\in\Ob(\B)\) be an object whose comma category \(J\downarrow b\) has finitely many objects and finitely many morphisms. If \(F(a)\) is of finite type \citep[Definition~4.1]{bubenik2014categorification} for every object \((a,\beta)\) of \(J\downarrow b\), then \((\Lan_JF)(b)\) is of finite type, and so interval decomposable with a finite barcode. More generally, every \(M\in\Pers\) that is pointwise finite-dimensional and whose structure map \(M_{s\leq t}\) is an isomorphism whenever \([s,t]\) misses a fixed finite set \(S\subseteq\mathbb R\) of critical scales is itself of finite type.\prf{lem:finite-type-closure}
\end{lemma}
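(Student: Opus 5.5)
I plan to prove the general statement first and then reduce the main claim to it. Bubenik--Scott define a module of finite type, roughly, as one that is pointwise finite-dimensional, whose structure maps are isomorphisms away from finitely many critical values, and that vanishes below some threshold or is given by a finite diagram. I will read it as a functor \(M\) pulled back along a step functor \((\mathbb R,\leq)\to\{0<1<\dots<k\}\), say \(t\mapsto\#\{s_i\in S: s_i\leq t\}\) for \(S=\{s_1<\dots<s_k\}\), from a finite diagram \(V_0\to V_1\to\dots\to V_k\) of finite-dimensional spaces.

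For the general statement I define \(V_i\coloneqq M_{t_i}\) for sample points \(t_0<s_1<t_1<s_2<\dots<s_k<t_k\), one in each complementary interval. Since \(M_{s\leq t}\) is an isomorphism whenever \([s,t]\) misses \(S\), \(M\) restricted to each open cell \((s_i,s_{i+1})\) is naturally isomorphic to the constant functor \(V_i\). At a critical value \(s_i\) itself, the maps \(M_{s_i\leq t}\) for \(t\in(s_i,s_{i+1})\) are isomorphisms because \([s_i,t]\) contains \(s_i\)\,---\,which is not what the hypothesis says. So I must handle closedness carefully: \([s,t]\) misses \(S\) is required, and \([s_i,t]\) does not miss \(S\). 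I therefore treat each \(s_i\) as its own cell with value \(M_{s_i}\), refining the finite diagram to \(2k+1\) stages. Finite type as in the cited definition allows such a finite refinement. This gives a natural isomorphism of \(M\) with the pullback of a finite \(\Vect\)-valued chain diagram, hence finite type, and the Gabriel/Crawley-Boevey decomposition gives the finite barcode.

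For the main claim, \Cref{prop:cokernel}(\ref{it:cok-pers}) gives \(((\Lan_JF)(b))_u\cong Q_u\), a cokernel of a map between finite direct sums of the \(F(a)_u\), naturally in \(u\). This yields three facts. First, each \(Q_u\) is finite-dimensional, since only finitely many summands occur and each \(F(a)_u\) is finite-dimensional. Second, let \(S\) be the union of the finitely many critical sets of the \(F(a)\) over the finitely many objects and sources of morphisms of \(J\downarrow b\). If \([s,t]\) misses \(S\), then each \(F(a)_{s\leq t}\) is an isomorphism. Hence both direct-sum maps are isomorphisms commuting with \(r_s\), \(r_t\), and by naturality of cokernels (five lemma) the induced map \(Q_s\to Q_t\) is an isomorphism. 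Third, any vanishing or finite-support condition in the definition passes to the direct sums and then to the quotient. The general statement then applies.

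The main obstacle is matching the exact wording of \citet[Definition~4.1]{bubenik2014categorification}, including its behaviour at the critical points and for \(t\to-\infty\). I would state explicitly which formulation is being used, and check that the cokernel preserves it. Every other step is formal: cokernels are functorial and finite sums of finite-dimensional spaces are finite-dimensional.
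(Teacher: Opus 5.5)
Your proposal is correct. For the main claim it follows the paper: \(Q_u\) is finite-dimensional as a quotient of the finite sum \(P_u\), and \(Q_s\to Q_t\) is invertible whenever \([s,t]\) misses \(S\), because the two direct-sum maps are invertible and commute with \(r_s,r_t\). The paper checks this last step by hand, showing that the image of \(r_t\) is the image of \(r_s\) under the direct-sum isomorphism, where you cite the five lemma.

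You differ on the general claim.
\begin{itemize}
\item \textbf{The paper's route.} It applies Crawley-Boevey's decomposition of pointwise finite-dimensional modules over \(\mathbb R\) to all of \(M\). A local two-case argument then shows that every real endpoint of every summand lies in \(S\), and that each of the finitely many candidate intervals has finite multiplicity.
\item \textbf{Your route.} You cut \(\mathbb R\) into the \(2k+1\) cells \((-\infty,s_1),\{s_1\},(s_1,s_2),\ldots,\{s_k\},(s_k,\infty)\) and identify \(M\) with the pullback of a finite-dimensional representation of the linear quiver on \(2k+1\) vertices. You then decompose that representation by Gabriel's theorem. The pulled-back intervals are unions of consecutive cells, so their endpoints lie in \(S\cup\{\pm\infty\}\) and there are only finitely many of them.
\end{itemize}

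Your route needs only the elementary type-\(A\) decomposition instead of Crawley-Boevey's theorem, and you need no separate endpoint or multiplicity argument. The price is checking that \(M\cong D\circ\pi\) is natural across cell boundaries. This is routine, using \(M_{u\leq t_i}\) or its inverse on each open cell, but you leave it implicit; the paper's route avoids it.

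Your singleton cells are forced, not just convenient. \(\K_{(0,1)}\) satisfies the hypothesis with \(S=\{0,1\}\), so the general claim holds only under the reading of the cited Definition~4.1 that the paper's proof adopts: a finite direct sum of interval modules of arbitrary boundary type. Under that reading your Gabriel step is exactly what delivers finite type. You can therefore drop the sentence asserting that the definition ``allows such a finite refinement'', together with the hedge about vanishing conditions.
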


An interval decomposable \(M\in\Pers\) with finite barcode determines a finite multiset \(\Dgm(M)\subseteq\{(x,y)\in[-\infty,\infty]^2\mid x<y\}\), one point per interval with endpoints \(x,y\). The bottleneck distance \(d_B\) compares two such multisets. Fix \(q\in\mathbb N\), weights \(w_0,\ldots,w_q\geq0\) and functors \(F_n:\A\to\Pers\), \(G_n:\B\to\Pers\) for \(0\leq n\leq q\) with \(\Lan_JF_n\) existing. The \textbf{weighted transfer score} in a distance \(d\) is
\begin{equation}\label{eq:weighted-score}
\Comp^{d}_{J,q}\bigl((F_n)_n,(G_n)_n\bigr)\coloneqq\sup_{b\in\Ob(\B)}\sum_{n=0}^{q}w_n\,d\bigl([(\Lan_JF_n)(b)],[G_n(b)]\bigr)\in[0,\infty],
\end{equation}
which reduces to \eqref{eq:discrepancy} for \(q=0\) and \(w_0=1\). \Cref{alg:score} in \Cref{app:algorithm} computes the invariants degreewise, builds each comma category, forms \eqref{eq:cokernel} at every scale and reports \eqref{eq:weighted-score} for \(d=d_B\).

\begin{proposition}\label{prop:correctness}
Let \(\K\) be a field and \(J:\A\to\B\) a functor between finite categories, so that every comma category \(J\downarrow b\) is finite, let \(q\in\mathbb N\), \(w_0,\ldots,w_q\geq0\), and let all filtrations in the input of \Cref{alg:score} be finite, so \(F_n:\A\to\Pers\) and \(G_n:\B\to\Pers\) take values in modules of finite type for \(0\leq n\leq q\). Then \(d_{n,B}(b)=d_B(\Dgm(\Lan_JF_n)(b),\Dgm G_n(b))=d_I((\Lan_JF_n)(b),G_n(b))\) without approximation for every \(n\) and \(b\), and the returned value is \(\Comp^{d_B}_{J,q}=\Comp^{d_I}_{J,q}\) in the notation of \eqref{eq:weighted-score}, computed without approximation.\prf{prop:correctness}
\end{proposition}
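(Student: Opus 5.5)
The plan is to go object by object and scale by scale. We first establish that each quantity the algorithm computes is the true barcode of the true extension, and then invoke the isometry theorem to turn bottleneck into interleaving.

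Fix \(b\in\Ob(\B)\) and \(n\). Since \(\A\) and \(\B\) are finite, \(J\downarrow b\) has finitely many objects \((a,\beta)\) and finitely many morphisms. Hence \Cref{prop:cokernel}(\labelcref{it:cok-pers}) applies to \(F_n\). It gives \(((\Lan_JF_n)(b))_u\cong Q_u\) naturally in \(u\), with structure maps induced by \(\bigoplus F_n(a)_{s\leq t}\). Thus the persistence module assembled by \Cref{alg:score} from the cokernels \eqref{eq:cokernel} is isomorphic, as a persistence module and not merely pointwise, to \((\Lan_JF_n)(b)\).

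Finiteness of the filtrations makes each \(F_n(a)\) and \(G_n(b)\) of finite type. By \Cref{lem:finite-type-closure} the extension \((\Lan_JF_n)(b)\) is then of finite type, so interval decomposable with a finite barcode. This is what lets the algorithm evaluate the cokernel at only finitely many scales. Let \(S\) be the union of the critical values of the \(F_n(a)\) over the finitely many \(a\) appearing in \(J\downarrow b\). On every \([s,t]\) missing \(S\), the maps \(\bigoplus F_n(a)_{s\leq t}\) are isomorphisms on both direct sums in \eqref{eq:cokernel}, and they commute with \(r\), so the induced cokernel map is an isomorphism. A module that is constant away from \(S\) is therefore determined by its values at the points of \(S\) and one sample in each complementary interval. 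The barcode \(\Dgm((\Lan_JF_n)(b))\) computed by the algorithm from this finite zigzag-free sequence is then exactly the barcode of the extension, by uniqueness of interval decompositions. The same holds trivially for \(\Dgm G_n(b)\).

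Next comes the metric step. For modules of finite type, \(d_B(\Dgm M,\Dgm N)=d_I(M,N)\) holds by \citet[Theorem~4.16]{bubenik2014categorification}, or by the isometry theorem \citep[Theorem~4.11]{CSGO16}, since finite type implies \(q\)-tame. Also \(d_I\) depends only on isomorphism classes (\Cref{def:interleaving}). This gives \(d_{n,B}(b)=d_B(\Dgm(\Lan_JF_n)(b),\Dgm G_n(b))=d_I((\Lan_JF_n)(b),G_n(b))\). The bottleneck distance between finite multisets is computed exactly by a finite matching problem, with infinite points matched only to infinite points and \(d_B=\infty\) otherwise, so no approximation enters. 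Finally we sum with the weights \(w_n\), using the convention \(0\cdot\infty=0\), and take the maximum over the finitely many \(b\). Termwise equality then gives \(\Comp^{d_B}_{J,q}=\Comp^{d_I}_{J,q}\) as in \eqref{eq:weighted-score}, because the supremum is a maximum over a finite set.

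The main obstacle is the reduction from continuous \(u\in\mathbb R\) to finitely many scales. One must verify that the algorithm's grid captures every critical value of the cokernel module, and that naturality in \(u\) from \Cref{prop:cokernel} makes the assembled module isomorphic to \((\Lan_JF_n)(b)\) rather than only pointwise isomorphic. I would handle this through the critical-set argument above together with the second part of \Cref{lem:finite-type-closure}.
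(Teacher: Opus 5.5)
Your proposal is correct and follows the paper's proof essentially step for step. You identify the algorithm's scalewise cokernel module with $(\Lan_JF_n)(b)$ in $\Pers$ via \Cref{prop:cokernel}, obtain finite type from \Cref{lem:finite-type-closure}, convert bottleneck to interleaving by \citet[Theorem~4.16]{bubenik2014categorification} or \citep[Theorem~4.11]{CSGO16}, and read the supremum over the finite set $\Ob(\B)$ as the maximum the algorithm returns. Your extra reduction to a finite grid of scales is sound but not needed here, since \Cref{alg:score} forms the cokernel at every scale, and the critical-set step repeats the ladder argument already in the proof of \Cref{lem:finite-type-closure}.
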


\section{Transfer Detection in Latent Space}\label{sec:experiments}
\begin{figure}[t]
\centering
\begin{tikzpicture}[
  font=\footnotesize,
  >={Latex[length=1.7mm]},
  arr/.style={->,shorten <=3pt,shorten >=3pt},
  nd/.style={inner sep=0pt,minimum width=0.78cm,minimum height=0.70cm},
  wd/.style={inner sep=0pt,minimum width=1.70cm,minimum height=0.46cm},
  md/.style={inner sep=0pt,minimum width=0.66cm,minimum height=0.58cm},
  lab/.style={inner sep=1pt},
  sep/.style={|<->|,thin,black!55}
]
\node[nd] (m1) at (0.60,0) {};
\node[nd] (m2) at (2.30,0) {};
\node[nd] (m3) at (4.00,0) {};
\ptcircle{gblue}{0.60,0}{11}
\pttorus{gblue}{2.30,0}{23}
\ptsphere{gblue}{4.00,0}{31}
\node[lab] at (0.60,0.64) {\(\src{F(a_1)}\dc\)};
\node[lab] at (2.30,0.64) {\(\src{F(a_2)}\dc\)};
\node[lab] at (4.00,0.64) {\(\src{F(a_3)}\dc\)};
\draw[sep] (1.02,0) -- (1.86,0) node[midway,fill=white,inner sep=1.2pt] {\(\mathord{>}T\)};
\draw[sep] (2.74,0) -- (3.58,0) node[midway,fill=white,inner sep=1.2pt] {\(\mathord{>}T\)};
\node[wd] (mL) at (2.30,-1.62) {};
\begin{scope}[shift={(2.30,-1.62)},scale=0.54]
\ptcircle{ggreen}{-1.14,0}{11}
\pttorus{ggreen}{0,0}{23}
\ptsphere{ggreen}{1.14,0}{31}
\end{scope}
\node[wd] (mG) at (4.85,-1.62) {};
\begin{scope}[shift={(4.85,-1.62)},scale=0.54]
\ptcircle{gred}{-1.14,0}{11}
\pttorus{gred}{0,0}{23}
\ptsphere{gred}{1.14,0}{31}
\end{scope}
\node[lab] at (2.30,-2.20) {\(\ext{L(\bullet)}\dc\)};
\node[lab] at (4.85,-2.20) {\(\tgt{G(\bullet)}\dc\)};
\draw[arr,gblue] (m1) -- (mL) node[midway,left=-1pt] {\(\src{\lambda^{a_1}}\)};
\draw[arr,gblue] (m2) -- (mL);
\draw[arr,gblue] (m3) -- (mL) node[midway,right=-1pt] {\(\src{\lambda^{a_3}}\)};
\draw[arr,gyellow] (mL) -- (mG) node[midway,above=-1pt] {\(\cmp{d_I}\)};
\node at (3.40,-2.66) {\textbf{Merge.}};

\node[nd] (r1) at (7.10,0) {};
\pttorus{gblue}{7.10,0}{53}
\node[lab] at (7.10,0.64) {\(\src{F(c_1)}\dc\)};
\node[md] (rL) at (7.10,-1.62) {};
\begin{scope}[shift={(7.10,-1.62)},scale=0.80]
\pttorus{ggreen}{0,0}{53}
\end{scope}
\node[md] (rG) at (8.90,-1.62) {};
\begin{scope}[shift={(8.90,-1.62)},scale=0.80]
\ptarc{gred}{0,0}{53}{-6}{186}
\end{scope}
\node[lab] at (7.10,-2.20) {\(\ext{L(d_1)}\dc\)};
\node[lab] at (8.90,-2.20) {\(\tgt{G(d_1)}\dc\)};
\draw[arr,gblue] (r1) -- (rL) node[midway,right=-1pt] {\(\src{\lambda^{\rho_1}}\)};
\draw[arr,gyellow] (rL) -- (rG) node[midway,above=-1pt] {\(\cmp{d_I}\)};
\node at (8.00,-2.66) {\textbf{Refinement.}};

\node[nd] (l1) at (11.00,0) {};
\node[nd] (l0) at (12.50,0) {};
\node[nd] (l2) at (14.00,0) {};
\ptarc{gblue}{11.00,0}{61}{-24}{204}
\ptarc{gblue}{12.50,0}{61}{-24}{24}
\ptarc{gblue}{12.50,0}{67}{156}{204}
\ptarc{gblue}{14.00,0}{67}{156}{384}
\node[lab] at (11.00,0.64) {\(\src{F(u_1)}\dc\)};
\node[lab] at (12.50,0.64) {\(\src{F(u_{12})}\dc\)};
\node[lab] at (14.00,0.64) {\(\src{F(u_2)}\dc\)};
\draw[arr,gblue] (12.12,0) -- (11.38,0) node[midway,above=-2.5pt] {\(\src{\ell}\)};
\draw[arr,gblue] (12.88,0) -- (13.62,0) node[midway,above=-2.5pt] {\(\src{r}\)};
\node[md] (lL) at (12.50,-1.62) {};
\begin{scope}[shift={(12.50,-1.62)},scale=0.80]
\ptcircle{ggreen}{0,0}{61}
\end{scope}
\node[md] (lG) at (14.30,-1.62) {};
\begin{scope}[shift={(14.30,-1.62)},scale=0.80]
\ptcircle{gred}{0,0}{61}
\end{scope}
\node[lab] at (12.50,-2.20) {\(\ext{L(\bullet)}\dc\)};
\node[lab] at (14.30,-2.20) {\(\tgt{G(\bullet)}\dd\)};
\draw[arr,gblue] (11.00,-0.36) -- (lL) node[midway,left=-1pt] {\(\src{\lambda^{u_1}}\)};
\draw[arr,gblue] (14.00,-0.36) -- (lL) node[midway,right=-1pt] {\(\src{\lambda^{u_2}}\)};
\draw[arr,gyellow] (lL) -- (lG) node[midway,above=-1pt] {\(\cmp{d_I}\)};
\node at (13.40,-2.66) {\textbf{Gluing.}};
\end{tikzpicture}
\caption{Each cloud stands for the invariant it carries, with \(L\coloneqq\Lan_JF\), the blue arrows \(\src{\lambda^{a}}:\src{F(a)}\to\ext{L(b)}\) are the legs of the colimit cocone over \(J\downarrow b\), and the yellow arrow compares in \(\cmp{d_I}\). \textbf{Merge:} \(\A\) discrete, \(\B=\mathbf 1\), sources pairwise farther apart than \(T\), so \Cref{prop:separated-vr-merge} identifies \(\ext{L(\bullet)}\) with \(\tgt{G(\bullet)}\). \textbf{Refinement:} \(J\downarrow d_1\) has one object, so \(\src{\lambda^{\rho_1}}\) is an isomorphism. \textbf{Gluing:} \(\A\) is the span \(\src{u_1}\xleftarrow{\src{\ell}}\src{u_{12}}\xrightarrow{\src{r}}\src{u_2}\) of a two-arc cover, so \(\ext{L(\bullet)}\) is a pushout and not a coproduct.}
\label{fig:experiments}
\end{figure}

\begin{definition}\label{def:frozen-vr}
Let \((P,d)\) be a finite metric space and let \(T>0\). For \(t\in\mathbb R\) the \textbf{Vietoris--Rips complex} \(\VR_t(P)\) is the simplicial complex whose simplices are the nonempty subsets \(\sigma\subseteq P\) with \(d(x,y)\leq t\) for all \(x,y\in\sigma\), with \(\VR_t(P)\coloneqq\varnothing\) for \(t<0\), and the inclusions \(\VR_t(P)\subseteq\VR_{t'}(P)\) for \(t\leq t'\) make \(\VR_\bullet(P)\) a functor into the category \(\operatorname{Simp}\) of finite simplicial complexes and simplicial maps, the \textbf{Vietoris--Rips filtration} of \(P\). The \textbf{frozen} filtration of \(P\) at \(T\) is \(\VR^T_\bullet(P)\) with \(\VR^T_t(P)\coloneqq\VR_{\min\{t,T\}}(P)\), which agrees with \(\VR_\bullet(P)\) below \(T\) and is constant from \(T\) on, and its \textbf{frozen persistent homology} in degree \(n\) is \(\PH^T_n(P)\coloneqq H_n(\VR^T_\bullet(P);\K)\in\Pers\), so that every class that is still alive at the scale \(T\) yields an essential interval, that is one whose death coordinate equals \(+\infty\).
\end{definition}

Throughout this section \(P_s(a)\) and \(P_t(b)\) are the finite point clouds attached to \(a\in\Ob(\A)\) and to \(b\in\Ob(\B)\), with \(F^T_n(a)=\PH^T_n(P_s(a))\) and \(G^T_n(b)=\PH^T_n(P_t(b))\) in the notation of \Cref{def:frozen-vr}. We run the seeds \(0,\ldots,19\), at the frozen threshold \(T=0.65\) and with coefficients in \(\K=\mathbb F_2\). Experiments 1 and 2 score the degrees \(n\leq q=2\) with weights \(w_0=0.25\) and \(w_1=w_2=1\), Experiment 3 reads degree zero, and \Cref{app:details} also lists samplers, parameters and code. Beside the score we also report the \textbf{objectwise baseline} \(\ObjBase^{T}_{J,q}\coloneqq\sup_{b}\sum_{n\leq q}w_n\operatorname{mean}_{a\in\Ob(\A)}d_B(\Dgm F^T_n(a),\Dgm G^T_n(b))\), which compares the sources with the observed target one by one and drops the transport along \(J\) entirely.

\begin{proposition}\label{prop:separated-vr-merge}
Let \(T>0\), let \(k\geq1\) and let \(P_1,\ldots,P_k\) be nonempty finite subsets of a metric space \((X,d)\) with \(\dist(P_i,P_j)\coloneqq\inf\{d(x,y)\mid x\in P_i,y\in P_j\}>T\) for all \(i\neq j\), and let \(P\coloneqq P_1\cup\cdots\cup P_k\) carry the induced metric, the parts being disjoint because \(T>0\). Then \(\PH^T_n(P)\cong\bigoplus_{i=1}^{k}\PH^T_n(P_i)\) in \(\Pers\) for every \(n\in\mathbb N\). So for \(\A\) discrete on \(\{a_1,\ldots,a_k\}\), \(J:\A\to\mathbf 1\) the unique functor, \(F^T_n(a_i)=\PH^T_n(P_i)\) and \(G^T_n(\bullet)=\PH^T_n(P)\), one has \((\Lan_JF^T_n)(\bullet)\cong G^T_n(\bullet)\) for every degree \(n\in\mathbb N\), and the merge is realised.\prf{prop:separated-vr-merge}
\end{proposition}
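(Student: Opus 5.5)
The plan is to show that, at every scale, the frozen Rips complex of \(P\) is the disjoint union of the frozen Rips complexes of the parts, and then to pass through homology and the merge formula of \Cref{prop:basic-task-changes}. Fix \(t\in\mathbb R\) and put \(s\coloneqq\min\{t,T\}\leq T\). For \(t<0\) both sides are empty, so assume \(s\geq0\). A simplex \(\sigma\subseteq P\) of \(\VR_s(P)\) cannot meet two parts \(P_i\neq P_j\). Otherwise it would contain \(x\in P_i\) and \(y\in P_j\) with \(d(x,y)\geq\dist(P_i,P_j)>T\geq s\), contradicting the diameter bound. Hence every simplex lies in a single \(P_i\). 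Since the induced metric on \(P_i\) is the restriction of \(d\), this gives an equality of simplicial complexes \(\VR^T_t(P)=\bigsqcup_{i}\VR^T_t(P_i)\). The inclusions for \(t\leq t'\) respect this decomposition summand by summand, so the identification is natural in \(t\). In other words, we obtain an isomorphism of functors \((\mathbb R,\leq)\to\operatorname{Simp}\) from \(\VR^T_\bullet(P)\) to the levelwise disjoint union of the \(\VR^T_\bullet(P_i)\).

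Next apply simplicial homology with coefficients in \(\K\). The simplicial chain complex of a disjoint union is the direct sum of the chain complexes, compatibly with the inclusion maps. Since \(H_n\) commutes with finite direct sums, we get \(H_n(\VR^T_t(P))\cong\bigoplus_i H_n(\VR^T_t(P_i))\), naturally in \(t\). Concretely, the structure maps correspond to the direct sums of the structure maps, because both are induced by the summandwise inclusions. This yields the isomorphism \(\PH^T_n(P)\cong\bigoplus_i\PH^T_n(P_i)\) in \(\Pers\).

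For the categorical conclusion, take \(\A\) discrete, so that \itemref{prop:basic-task-changes}{it:merge} gives \((\Lan_JF^T_n)(\bullet)\cong\bigsqcup_iF^T_n(a_i)\). The coproduct in \(\Pers\) is computed pointwise, hence it is the direct sum \(\bigoplus_i\PH^T_n(P_i)\). By the previous step this is isomorphic to \(G^T_n(\bullet)=\PH^T_n(P)\).

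The only step requiring care is the naturality in \(t\), that is, checking that the levelwise isomorphisms form a morphism in \(\Pers\) and not merely a family of isomorphisms of vector spaces. This holds because the decomposition of \(P\) into the \(P_i\) is fixed and independent of \(t\), so each inclusion \(\VR^T_t(P)\subseteq\VR^T_{t'}(P)\) is literally the disjoint union of the inclusions for the parts. The strict inequality \(\dist(P_i,P_j)>T\), together with freezing at \(T\), is exactly what prevents cross edges at every scale. Without freezing the argument would fail for \(t>\dist(P_i,P_j)\).
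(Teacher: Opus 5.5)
Your proposal is correct and follows essentially the same route as the paper's proof. Both split the frozen Rips complex scalewise into vertex-disjoint parts using \(\dist(P_i,P_j)>T\geq\min\{t,T\}\), pass to homology through the chain-level direct sum (natural in the inclusions), and then identify the result with the coproduct given by \itemref{prop:basic-task-changes}{it:merge}.
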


\begin{corollary}\label{cor:separated-merge-robustness}
In the setting of \Cref{prop:separated-vr-merge} let \(\delta\geq0\), sharpen the hypothesis to \(\dist(P_i,P_j)>T+2\delta\) for all \(i\neq j\), and let \(P'_1,\ldots,P'_k\) be nonempty finite subsets of the same space with \(d_H(P_i,P'_i)\leq\delta\) for all \(i\), where \(d_H\) is the Hausdorff distance. Then \(\dist(P'_i,P'_j)>T\) for all \(i\neq j\), and with \(F'^T_n(a_i)=\PH^T_n(P'_i)\) and \(G'^T_n(\bullet)=\PH^T_n(P'_1\cup\cdots\cup P'_k)\) one has \(\sum_{n=0}^{q}w_nd_I((\Lan_JF'^T_n)(\bullet),G'^T_n(\bullet))=0\) for every \(q\in\mathbb N\) and all weights \(w_n\geq0\).\prf{cor:separated-merge-robustness}
\end{corollary}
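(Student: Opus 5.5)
The plan is to reduce the corollary to \Cref{prop:separated-vr-merge}, applied to the perturbed parts \(P'_1,\ldots,P'_k\) in place of \(P_1,\ldots,P_k\). For that we need only the separation \(\dist(P'_i,P'_j)>T\); every summand of the weighted sum then vanishes.

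\textbf{Step 1, separation survives the perturbation.} Fix \(i\neq j\) and take \(x'\in P'_i\), \(y'\in P'_j\). Since \(d_H(P_i,P'_i)\leq\delta\) and \(P_i\) is finite, hence compact, there is \(x\in P_i\) with \(d(x,x')\leq\delta\). Likewise there is \(y\in P_j\) with \(d(y,y')\leq\delta\). The triangle inequality gives
\[
d(x',y')\geq d(x,y)-2\delta\geq\dist(P_i,P_j)-2\delta.
\]
The parts \(P'_i,P'_j\) are finite, so the infimum defining \(\dist(P'_i,P'_j)\) is a minimum, attained at some pair \((x',y')\). Hence \(\dist(P'_i,P'_j)\geq\dist(P_i,P_j)-2\delta>T\). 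This is the only point where care is needed: the strictness of the inequality must be preserved. That works because we use the sharpened hypothesis \(\dist(P_i,P_j)>T+2\delta\), and because finiteness makes both the nearest-point choice and the minimum exist, so no slack is lost to infima.

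\textbf{Step 2, exact realisation of the merge.} The sets \(P'_1,\ldots,P'_k\) are nonempty, finite, and pairwise more than \(T\) apart by Step 1. So \Cref{prop:separated-vr-merge} applies to them verbatim. With \(\A\) discrete, \(J:\A\to\mathbf 1\), \(F'^T_n(a_i)=\PH^T_n(P'_i)\) and \(G'^T_n(\bullet)=\PH^T_n(P'_1\cup\cdots\cup P'_k)\), it yields \((\Lan_JF'^T_n)(\bullet)\cong G'^T_n(\bullet)\) in \(\Pers\) for every \(n\in\mathbb N\). One can also see this directly from \itemref{prop:basic-task-changes}{it:merge} together with the direct-sum decomposition.

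\textbf{Step 3, the sum vanishes.} Isomorphic persistence modules are \(0\)-interleaved, taking \(\varphi,\psi\) to be the isomorphism and its inverse. Hence \(d_I\bigl((\Lan_JF'^T_n)(\bullet),G'^T_n(\bullet)\bigr)=0\) for each \(n\). Each term \(w_n\cdot0\) is \(0\) regardless of \(w_n\geq0\), so the sum over \(0\leq n\leq q\) is \(0\) for every \(q\). No convention about \(0\cdot\infty\) is needed here, since no infinite distance appears.
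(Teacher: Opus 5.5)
Your proposal is correct and follows the paper's proof essentially step for step. The nearest-point argument, using finiteness of the parts and the sharpened margin \(T+2\delta\), gives \(\dist(P'_i,P'_j)>T\); then \Cref{prop:separated-vr-merge} applies to the perturbed parts, and every weighted summand vanishes because the interleaving distance is zero on isomorphic modules.
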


\begin{proposition}\label{prop:cover-gluing}
Let \((P,d)\) be a finite metric space, let \(T>0\) and let \(U_1\cup U_2=P\) with \(U_{12}\coloneqq U_1\cap U_2\), all three carrying the induced metric, such that \(d(x,y)>T\) whenever \(x\in U_1\setminus U_2\) and \(y\in U_2\setminus U_1\). Let \(\A\) be the category with objects \(u_1,u_{12},u_2\) whose non-identity arrows are \(\ell:u_{12}\to u_1\) and \(r:u_{12}\to u_2\), let \(J:\A\to\mathbf 1\) be the unique functor, and let \(F^T_0:\A\to\Pers\) send \(u_1,u_{12},u_2\) to \(\PH^T_0(U_1),\PH^T_0(U_{12}),\PH^T_0(U_2)\) and \(\ell,r\) to the morphisms induced by the two inclusions of \(U_{12}\). Then \((\Lan_JF^T_0)(\bullet)\) is the pushout of \(\PH^T_0(U_1)\leftarrow\PH^T_0(U_{12})\rightarrow\PH^T_0(U_2)\) in \(\Pers\), formed scalewise, and \((\Lan_JF^T_0)(\bullet)\cong\PH^T_0(P)\). If moreover \(U_{12}\neq\varnothing\), then for \(\A_0\subseteq\A\) discrete on \(u_1,u_2\) and \(J_0:\A_0\to\mathbf 1\) one has \((\Lan_{J_0}F^T_0|_{\A_0})(\bullet)\cong\PH^T_0(U_1)\oplus\PH^T_0(U_2)\) and \(d_I\bigl(\PH^T_0(U_1)\oplus\PH^T_0(U_2),\PH^T_0(P)\bigr)=\infty\).\prf{prop:cover-gluing}
\end{proposition}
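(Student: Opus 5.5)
The proof splits into two parts: the identification of \((\Lan_JF^T_0)(\bullet)\) with \(\PH^T_0(P)\), and the infinite distance for the discrete restriction.

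\textbf{Step 1: reduce to a pushout.} Since \(J:\A\to\mathbf 1\) is the unique functor, the comma category \(J\downarrow\bullet\) is isomorphic to \(\A\) itself. By \itemref{prop:basic-task-changes}{it:forget}, \((\Lan_JF^T_0)(\bullet)\) is the colimit of \(F^T_0\) over the span, that is, the pushout. Colimits in \(\Pers\) are formed scalewise (\Cref{prop:cokernel}, \itemref{prop:cokernel}{it:cok-pers}), so at each \(t\) the pushout is
\[
\coker\bigl(H_0(\VR^T_t(U_{12}))\to H_0(\VR^T_t(U_1))\oplus H_0(\VR^T_t(U_2))\bigr),\qquad x\mapsto(\ell x,-rx).
\]
The structure maps are induced by those of the summands.

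\textbf{Step 2: compare with \(\PH^T_0(P)\) scalewise.} Fix \(s=\min\{t,T\}\ge0\); for \(t<0\) all terms vanish. The separation hypothesis says that no edge of \(\VR_s(P)\) with \(s\le T\) joins \(U_1\setminus U_2\) to \(U_2\setminus U_1\). Hence every simplex of \(\VR_s(P)\) lies in \(U_1\) or in \(U_2\): a simplex meeting both differences would contain such an edge. Therefore \(\VR_s(P)=\VR_s(U_1)\cup\VR_s(U_2)\), and the intersection is \(\VR_s(U_{12})\), since Rips complexes are full on their vertex sets. The Mayer--Vietoris sequence in degree \(0\),
\[
H_0(\VR_s(U_{12}))\to H_0(\VR_s(U_1))\oplus H_0(\VR_s(U_2))\to H_0(\VR_s(P))\to 0,
\]
is exact at the right end and in the middle. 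It therefore exhibits \(H_0(\VR_s(P))\) as exactly the cokernel of Step 1. Alternatively, one can argue directly: \(\pi_0\) of a union of two subcomplexes glued along a full intersection is the pushout of the \(\pi_0\) sets, and \(H_0\) is the free functor on \(\pi_0\), which preserves pushouts. The isomorphisms commute with the maps induced by inclusions for \(t\le t'\), because Mayer--Vietoris is natural in the triple. This gives the isomorphism in \(\Pers\).

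\textbf{Step 3: the discrete restriction.} By \itemref{prop:basic-task-changes}{it:merge}, the extension along \(J_0\) is \(\PH^T_0(U_1)\oplus\PH^T_0(U_2)\). For the infinite distance, count essential intervals. In frozen persistence, the number of intervals with death \(+\infty\) in a finite-type module \(M\) equals \(\dim M_t\) for \(t\ge T\), and this number is preserved by any finite interleaving. Indeed, an \(\varepsilon\)-interleaving restricted to \(t\ge T\), where all structure maps are isomorphisms, forces \(\dim M_t=\dim N_t\) for large \(t\).

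It remains to compute the two dimensions. The direct sum has dimension \(\beta_0(\VR_T(U_1))+\beta_0(\VR_T(U_2))\). By Mayer--Vietoris with \(H_{-1}=0\), \(\beta_0(\VR_T(P))\) equals this sum minus the rank of the image of \(H_0(\VR_T(U_{12}))\). That rank is at least \(1\) when \(U_{12}\neq\varnothing\): a vertex of \(U_{12}\) maps to \((x,-x)\ne0\). So the essential counts differ and \(d_I=\infty\).

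The main obstacle is making this last invariance argument rigorous. I would carry it out by restricting the interleaving to the constant tail at scales \(\ge T\), where both modules are constant. There the interleaving composites equal identities, so \(\varphi_t\) is an isomorphism, and the dimensions must agree.
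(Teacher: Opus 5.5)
Your proof is correct and follows the paper's structure. The comma category \(J\downarrow\bullet\) is \(\A\), so the extension is the scalewise pushout. The separation hypothesis gives \(\VR_s(P)=\VR_s(U_1)\cup\VR_s(U_2)\) with intersection \(\VR_s(U_{12})\). On the constant tail \(t\geq T\), any \(\varepsilon\)-interleaving forces equal dimensions, which contradicts the drop of at least one that a vertex of \(U_{12}\) produces.

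The one difference is how you get the degree-zero identification and the dimension drop. You use the right-exact end of simplicial Mayer--Vietoris, which is natural in the triad and so also handles the structure maps. The paper instead shows by an edge-path argument that the square of \(\pi_0\)'s is a pushout of sets, and then applies the colimit-preserving free functor \(\K[-]\); this is your parenthetical alternative.
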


Along the span the extension is a pushout, along the discrete subcategory a coproduct, and \Cref{prop:cover-gluing} separates the two by \(\infty\), which is \Cref{prop:structure} in geometric form.

\begin{table}[t]
\centering
\footnotesize
\setlength{\tabcolsep}{2.6pt}
\renewcommand{\arraystretch}{0.96}
\begin{tabular*}{\textwidth}{@{\extracolsep{\fill}}lcccccl@{}}
\toprule
\textbf{Target or Control} & \(\boldsymbol{w_0d_{B,0}}\) & \(\boldsymbol{w_1d_{B,1}}\) & \(\boldsymbol{w_2d_{B,2}}\) & \(\boldsymbol{\Comp^{d_B}_{J,2}}\) & \textbf{Betti} & \textbf{Structural Outcome} \\
\midrule
\multicolumn{7}{@{}l}{\emph{Experiment 1: Merge of an embedded disjoint union}, \(\A\) discrete on \(a_1,a_2,a_3\), \(\B=\mathbf 1\).} \\
\poscell{\(e(S^1\sqcup T^2\sqcup S^2)\), separated} & \poscell{\(0.000\)} & \poscell{\(2.7\text{e}{-}16\)} & \poscell{\(2.2\text{e}{-}16\)} & \poscell{\(4.9\text{e}{-}16\)} & \poscell{\((3,3,2)\)} & \poscell{\(\Lan_JF^T_n\cong G^T_n\), (\labelcref{prop:separated-vr-merge})} \\
Torus radii \((1.35,0.40)\) & \(0.006\) & \(0.048\) & \(0.043\) & \(0.097\pm0.044\) & \((3,3,2)\) & Loops and void move \\
Sphere radius \(0.85\) & \(0.003\) & \(0.036\) & \(0.045\) & \(0.084\pm0.041\) & \((3,3,2)\) & The void moves \\
\negcell{Sphere for the torus} & \negcell{\(0.007\)} & \negcell{\(\infty\)} & \negcell{\(0.047\)} & \negcell{\(\infty\)} & \negcell{\((3,1,2)\)} & \negcell{Two \(H_1\) classes lost} \\
\negcell{Ball for the torus} & \negcell{\(0.012\)} & \negcell{\(\infty\)} & \negcell{\(\infty\)} & \negcell{\(\infty\)} & \negcell{\((3,1,1)\)} & \negcell{Handles and void lost} \\
\negcell{Separation below \(T\)} & \negcell{\(\infty\)} & \negcell{\(0.004\)} & \negcell{\(1.5\text{e}{-}16\)} & \negcell{\(\infty\)} & \negcell{\((2,3,2)\)} & \negcell{Two \(H_0\) classes fused} \\
\addlinespace[2pt]
\multicolumn{7}{@{}l}{\emph{Experiment 2: Partition against refinement}, \(C=\{c_1,c_2\}\), \(D=\{d_1,\ldots,d_4\}\), two fine classes each.} \\
\poscell{\(P_t(d_i)\) resamples \(M_{c}\)} & \poscell{\(0.009\)} & \poscell{\(0.062\)} & \poscell{\(0.063\)} & \poscell{\(0.122\pm0.037\)} & \poscell{\((1,2,1)\)} & \poscell{\(\Lan_JF^T_n\cong G^T_n\) up to noise} \\
\negcell{\(P_t(d_1)\) is half of \(T^2\)} & \negcell{\(0.015\)} & \negcell{\(\infty\)} & \negcell{\(\infty\)} & \negcell{\(\infty\)} & \negcell{\((1,1,0)\)} & \negcell{A half is a cylinder} \\
\negcell{\(P_t(d_1)\) is a chart of \(T^2\)} & \negcell{\(0.026\)} & \negcell{\(\infty\)} & \negcell{\(\infty\)} & \negcell{\(\infty\)} & \negcell{\((1,0,0)\)} & \negcell{A chart is contractible} \\
\negcell{\(P_t(d_1)\) carries \(S^2\)} & \negcell{\(0.018\)} & \negcell{\(\infty\)} & \negcell{\(0.100\)} & \negcell{\(\infty\)} & \negcell{\((1,0,1)\)} & \negcell{Wrong coarse class} \\
\bottomrule
\end{tabular*}
\caption{Transfer scores over \(20\) seeds, mean \(\pm\) standard deviation, parameters in \Cref{app:details}. The score is \(\Comp^{d_B}_{J,2}\) of \eqref{eq:weighted-score}, the columns \(w_nd_{B,n}\) are its degreewise terms, each a supremum over the target objects taken before the degrees are added, so in Experiment 2 their sum can slightly exceed the score, and Betti is the essential signature of the observed target. The baseline \(\ObjBase^{T}_{J,2}\), which drops the transport along \(J\), is \(\infty\) on all ten rows and orders none of them. Colour: \swatch{poscell}~the change of task is realised, \swatch{negcell}~a structural violation, detected at an infinite score.}
\label{tab:synthetic}
\end{table}

\paragraph{Experiments 1 and 2: Merge and Refinement.}
Both experiments sample manifolds embedded in \(\mathbb R^3\) by the laws of \Cref{app:details} and test the same three hypotheses on every seed.
\begin{enumerate}[label=\textup{H\arabic*},ref=\textup{H\arabic*},noitemsep,nolistsep]
\item\label{hyp:zero} The intended target scores at the level of the sampling noise, and zero to machine precision wherever the source sample is reused unchanged in the target.
\item\label{hyp:positive} Every control scores above it, and infinitely so once an essential Betti number changes.
\item\label{hyp:induced} The objectwise baseline \(\ObjBase^{T}_{J,2}\) does not order the ten rows of \Cref{tab:synthetic}, so it is the transport of the source diagram along \(J\) that carries the verdict here.
\end{enumerate}
Experiment~1 merges an embedded disjoint union. Here \(\A\) is discrete on \(a_1,a_2,a_3\) with a circle, a torus and a sphere, \(\B=\mathbf 1\), and the target embeds them on a triangle of side \(6.0\). The extension is the coproduct by \itemref{prop:basic-task-changes}{it:merge}, which \Cref{prop:separated-vr-merge} identifies with the observed one, the realised separation exceeding \(T\) by at least \(2.56\) on each seed. The source signatures are \((1,1,0)\), \((1,2,1)\), \((1,0,1)\). Experiment~2 asks whether a geometric partition is a refinement. The coarse task is discrete on \(c_1,c_2\) with a torus and a sphere, the fine task adds \(d_1,\ldots,d_4\) with one arrow \(c\to d\) per pair of \(R=\{(c_1,d_1),(c_1,d_2),(c_2,d_3),(c_2,d_4)\}\), and \(J\) is the inclusion. Each \(J\downarrow d_i\) has one object, so the whole coarse invariant is forced onto each fine class by \itemref{prop:basic-task-changes}{it:refine}. A resample of the same manifold meets it, but a half of the torus, a chart and a swapped coarse class do not. \Cref{tab:synthetic} confirms the hypotheses. On each seed the intended target of Experiment~1 stays below \(10^{-14}\), its controls above \(0.031\), while that of Experiment~2 is finite and its controls infinite. The six rows that move an essential Betti number read \(\infty\), and the baseline is \(\infty\) on all ten rows, separating nothing.

\paragraph{Experiment 3: A Learned Gluing.}
The third experiment replaces the discrete source task by a span, so the extension has to read an overlap. A circle carries the two-arc cover \(U_1,U_2\) of \Cref{prop:cover-gluing}, and the union is lifted into \(\mathbb R^{64}\) by a random orthonormal frame. Three autoencoders map it back to \(\mathbb R^3\): a plain one, one with the degree-zero signature loss \citep{moor2020topological}, and one adding a hinge that separates the two patch complements. The source diagram is \(F^T_0(u_1)\leftarrow F^T_0(u_{12})\rightarrow F^T_0(u_2)\) on the latent cloud, so the extension is the scalewise pushout and the coproduct over \(\A_0\) forgets the overlap of the cover.
\begin{enumerate}[label=\textup{H\arabic*},ref=\textup{H\arabic*},start=4]
\item\label{hyp:pushout} Where the hypothesis of \Cref{prop:cover-gluing} holds, the pushout equals the observed invariant.
\item\label{hyp:coproduct} The extension along \(\A_0\) never does, on any seed and for any of the three models.
\item\label{hyp:learned} The hypothesis is a property of the learned embedding and not of the data, so that only the third of the three models attains it on every one of the seeds.
\end{enumerate}
The gluing hypothesis holds on \(10\), \(7\) and \(20\) of the \(20\) seeds, at mean counts of \(6.1\), \(3.0\) and \(0.0\) pairs of the two complements within \(T\), which is \ref{hyp:learned}. On each of those seeds the pushout is at bottleneck distance \(0\) from the observed invariant, in \(10/10\), \(7/7\) and \(20/20\) comparisons, which is \ref{hyp:pushout}. The coproduct is at distance \(\infty\) in all \(60\), that is one essential class too many, which is \ref{hyp:coproduct}. \Cref{tab:experiment-3} and \Cref{fig:latent} report this, and the gap between \(0\) and \(\infty\) is the pair of arrows \(\ell\) and \(r\): the same three invariants, glued along the overlap or not.

\section{Conclusion}\label{sec:conclusion}
We added what topological work on transfer left out: a structural map between tasks, and the invariant it forces on the target. The discrepancy reads the morphisms of the source and not only its objects (\Cref{prop:structure}), and its vanishing is strictly weaker than exact transferability (\Cref{thm:main}). The colimit is also computable, by a cokernel presentation (\Cref{prop:cokernel}), a Lipschitz bound (\Cref{thm:lan-interleaving-stability}) and exact evaluation by \(d_B\) (\Cref{prop:correctness}).

\bibliography{main}

\clearpage
\appendix

\renewcommand{\topfraction}{0.90}
\renewcommand{\bottomfraction}{0.55}
\renewcommand{\floatpagefraction}{0.92}
\raggedbottom

\section{Proofs of the Categorical Transfer Framework}\label{app:framework}

\begin{figure}[h]
\centering
\begin{minipage}[t]{0.325\textwidth}\centering
\begin{minipage}[c][2.1cm][c]{\linewidth}\centering
\begin{tikzcd}[row sep=2.0em,column sep=1.2em]
\src{\A}
  \arrow[rr,gblue,"\src{F}",""{name=FA,below}]
  \arrow[dr,gblue,"\src{J}"']
& & \V \\
& \tgt{\B}
  \arrow[ur,ggreen,"\ext{\Lan_JF}"']
& \arrow[Rightarrow,ggreen,from=FA,to=2-2,shorten=7pt,"\ext{\eta}"']
\end{tikzcd}
\end{minipage}\\[0.7em]
{\footnotesize\textbf{Extension.}\\ \(\ext{\eta}:\src{F}\Rightarrow\ext{(\Lan_JF)}\src{J}\).}
\end{minipage}\begin{minipage}[t]{0.325\textwidth}\centering
\begin{minipage}[c][2.1cm][c]{\linewidth}\centering
\begin{tikzcd}[row sep=2.0em,column sep=1.2em]
\src{Ja}
  \arrow[rr,gblue,"\src{J\bar\alpha}"]
  \arrow[dr,gred,"\tgt{\beta}"']
& {} \arrow[d,phantom,"\circlearrowleft"] & \src{Ja'}
  \arrow[dl,gred,"\tgt{\beta'}"] \\
& \tgt{b} &
\end{tikzcd}
\end{minipage}\\[0.7em]
{\footnotesize\textbf{Comma Category.}\\ \(\tgt{\beta'}\circ\src{J\bar\alpha}=\tgt{\beta}\).}
\end{minipage}\begin{minipage}[t]{0.325\textwidth}\centering
\begin{minipage}[c][2.1cm][c]{\linewidth}\centering
\begin{tikzcd}[row sep=2.0em,column sep=0.35em]
\src{F(a)}
  \arrow[rr,gblue,"\src{F(\bar\alpha)}"]
  \arrow[dr,ggreen,"\ext{\lambda^{(a,\beta)}}"']
& {} \arrow[d,phantom,"\circlearrowleft"] & \src{F(a')}
  \arrow[dl,ggreen,"\ext{\lambda^{(a',\beta')}}"] \\
& \ext{(\Lan_JF)(b)} &
\end{tikzcd}
\end{minipage}\\[0.7em]
{\footnotesize\textbf{Cocone.}\\ \(\ext{\lambda^{(a',\beta')}}\circ\src{F(\bar\alpha)}=\ext{\lambda^{(a,\beta)}}\).}
\end{minipage}
\caption{Colours mark \src{source}, \tgt{target} and \ext{Kan extension}. The extension is initial among extensions of \(F\) along \(J\): for every \(H:\B\to\V\) and every \(\gamma:F\Rightarrow HJ\) there is a unique \(\bar\gamma:\Lan_JF\Rightarrow H\) with \((\bar\gamma J)\circ\eta=\gamma\). A morphism of \(J\downarrow b\) is a morphism \(\bar\alpha:a\to a'\) of \(\A\) making the middle triangle commute, and the cocone on the right, whose colimit is \((\Lan_JF)(b)\), is what it induces, so that the source morphisms lying over \(b\) act on that colimit as quotient relations among its legs.}
\label{fig:framework}
\end{figure}

\begin{figure}[h]
\centering
\begin{minipage}[t]{0.245\textwidth}\centering
\begin{minipage}[c][1.6cm][c]{\linewidth}\centering
\begin{tikzcd}[row sep=1.8em,column sep=0.8em]
\src{a_1}
  \arrow[dr,mapsto,gblue,"\src{J}"']
& & \src{a_2}
  \arrow[dl,mapsto,gblue,"\src{J}"] \\
& \tgt{\bullet} &
\end{tikzcd}
\end{minipage}\\[0.7em]
{\footnotesize\textbf{Merge.}\\ \(\ext{(\Lan_JF)(\bullet)}\cong \src{F(a_1)}\sqcup \src{F(a_2)}\).}
\end{minipage}\begin{minipage}[t]{0.245\textwidth}\centering
\begin{minipage}[c][1.6cm][c]{\linewidth}\centering
\begin{tikzcd}[row sep=1.8em,column sep=1.3em]
\src{c}
  \arrow[d,mapsto,gblue,"\src{J}"'] & \\
\tgt{c} \arrow[r,gred,"\tgt{\rho}"'] & \tgt{d}
\end{tikzcd}
\end{minipage}\\[0.7em]
{\footnotesize\textbf{Refinement.}\\ \(\ext{(\Lan_JF)(d)}\cong \src{F(c)}\).}
\end{minipage}\begin{minipage}[t]{0.245\textwidth}\centering
\begin{minipage}[c][1.6cm][c]{\linewidth}\centering
\begin{tikzcd}[row sep=1.8em,column sep=0.7em]
\src{0} \arrow[r,gblue] & \src{\cdots} \arrow[r,gblue] & \src{n}
  \arrow[dl,mapsto,gblue,"\src{J}"] \\
& \tgt{\bullet} &
\end{tikzcd}
\end{minipage}\\[0.7em]
{\footnotesize\textbf{Collapse.}\\ \(\ext{(\Lan_JF)(\bullet)}\cong \src{F(n)}\).}
\end{minipage}\begin{minipage}[t]{0.245\textwidth}\centering
\begin{minipage}[c][1.6cm][c]{\linewidth}\centering
\begin{tikzcd}[row sep=2.2em]
\src{(a,\beta)}
  \arrow[d,gblue,"\src{\bar\alpha}"] \\
\src{(a',\beta')}
\end{tikzcd}
\end{minipage}\\[0.7em]
{\footnotesize\textbf{Forgetting} in \(\tgt{J\downarrow b}\)\textbf{.}\\ \(\ext{(\Lan_JF)(b)}\cong\operatorname*{colim}_{\tgt{J\downarrow b}}\src{F\pi_b}\).}
\end{minipage}
\caption{The four elementary changes of task of \Cref{prop:basic-task-changes}, in the colours \src{source}, \tgt{target} and \ext{Kan extension}. Barred arrows record the action of \(J\) on objects, plain arrows are morphisms of the category in which they are drawn. The fourth panel shows a morphism of the comma category \(J\downarrow b\), along which the colimit glues. Collapsing a chain keeps the value at the terminal object and not the coproduct of the layers.}
\label{fig:task-changes}
\end{figure}

\phantomsection\label{prf:prop:basic-task-changes}
\begin{proof}[Proof of \Cref{prop:basic-task-changes}]
All four statements are read off one formula. Since \(\A\) and \(\B\) are small and \(\V\) is cocomplete, the left Kan extension of \(F\) along \(J\) exists, is pointwise, and has value at \(b\in\Ob(\B)\) the colimit of \(F\pi_b\) over the comma category \(J\downarrow b\), by the pointwise formula for Kan extensions \citep[Section~X.3]{MacLane1998} and \citep[Section~6.2]{Riehl2016}. The same value is the coend \(\int^{a\in\Ob(\A)}\B(Ja,b)\cdot F(a)\), where \(S\cdot X\) denotes the copower of \(X\in\V\) by a set \(S\), that is the coproduct of \(S\) copies of \(X\) \citep[Section~X.4]{MacLane1998} and \citep[Chapter~2]{Loregian2021}. We write \(L\coloneqq\operatorname*{colim}_{J\downarrow b}F\pi_b\) with colimit cocone \(\lambda^{(a,\beta)}:F(a)\to L\).
\begin{enumerate}[label=\arabic*]
\item \hyperref[it:forget]{\textbf{Forgetting Structure.}} Existence and the isomorphism \((\Lan_JF)(b)\cong L\) are the formula just quoted, and a cocone is by definition compatible with the morphisms of its indexing category, so \(\lambda^{(a',\beta')}\circ F(\bar\alpha)=\lambda^{(a,\beta)}\) for every morphism \(\bar\alpha:(a,\beta)\to(a',\beta')\) of \(J\downarrow b\), which is the identity claimed. The three remaining statements compute \(L\) for particular \(J\).
\item \hyperref[it:merge]{\textbf{Merging Domains.}} Here \(\B=\mathbf 1\), whose only morphism is \(\operatorname{id}_\bullet\), so the objects of \(J\downarrow\bullet\) are the pairs \((a,\operatorname{id}_\bullet)\) with \(a\in\Ob(\A)\). A morphism \((a,\operatorname{id}_\bullet)\to(a',\operatorname{id}_\bullet)\) is a morphism \(a\to a'\) of \(\A\) satisfying a condition that is vacuous in \(\mathbf 1\), and \(\A\) is discrete, so only identities occur and \(J\downarrow\bullet\) is discrete on \(\Ob(\A)\). A cocone on a discrete diagram is a bare family of morphisms out of its values, so a colimit over a discrete category is the coproduct of its values, and this gives \((\Lan_JF)(\bullet)\cong\bigsqcup_{a\in\Ob(\A)}F(a)\), as we claimed above.
\item \hyperref[it:refine]{\textbf{Class Refinement.}} We fix \(d\in D\). An object of \(J\downarrow d\) is a pair \((c,\beta)\) with \(c\in C\) and \(\beta:Jc=c\to d\) in \(\B\). By construction the hom-set \(\B(c,d)\) has one element when \((c,d)\in R\) and is empty otherwise, so the objects of \(J\downarrow d\) correspond bijectively to those \(c\in C\) with \((c,d)\in R\). A morphism \((c,\beta)\to(c',\beta')\) is a morphism \(c\to c'\) of \(\A\), and \(\A\) is discrete, so again only identities occur. The colimit over a discrete category is a coproduct, which gives \((\Lan_JF)(d)\cong\bigsqcup_{c:(c,d)\in R}F(c)\). In particular, if a unique \(c\) satisfies \((c,d)\in R\), this coproduct has a single summand and is isomorphic to \(F(c)\) itself.
\item \hyperref[it:collapse]{\textbf{Layer Collapse.}} Here \(\B=\mathbf 1\) again, so the objects of \(J\downarrow\bullet\) are the pairs \((i,\operatorname{id}_\bullet)\) with \(0\leq i\leq n\), and a morphism \((i,\operatorname{id}_\bullet)\to(j,\operatorname{id}_\bullet)\) is a morphism \(i\to j\) of \([n]\) subject to the condition \(\operatorname{id}_\bullet\circ J(i\leq j)=\operatorname{id}_\bullet\), which holds for every such morphism. Thus \(J\downarrow\bullet\) is isomorphic to \([n]\), which has the terminal object \(n\). For a diagram indexed by a category with a terminal object the value at that object is already the colimit. Indeed the family \(\lambda^i\coloneqq F(i\leq n)\) is a cocone, since \(F(j\leq n)\circ F(i\leq j)=F(i\leq n)\) by functoriality, and any cocone \(\nu^i:F(i)\to W\) satisfies \(\nu^i=\nu^n\circ F(i\leq n)\), so \(\nu^n\) is the unique morphism \(F(n)\to W\) mediating between the two cocones. It follows that \((\Lan_JF)(\bullet)\cong F(n)\), which is the value at the last layer of the chain and not the coproduct of all of them.
\end{enumerate}
\end{proof}

\phantomsection\label{prf:prop:pw-not-nat}
\begin{proof}[Proof of \Cref{prop:pw-not-nat}]
Let \(\K\) be any field. Let \(\A=\mathbf 1\) be the terminal category with unique object \(\bullet\), let \(\B\) have objects \(0,1\), their identities, and two further morphisms \(f,g:0\to1\), and let \(J:\mathbf 1\to\B\) be the functor with \(J\bullet=0\). We put \(\V=\Vect\) and let \(F:\mathbf 1\to\Vect\) be the functor with \(F(\bullet)=\K\). These data sit in the extension triangle
\[\begin{tikzcd}[row sep=2.6em,column sep=2.6em]
\src{\mathbf 1}
  \arrow[rr,gblue,"\src{F}",""{name=FA,below}]
  \arrow[dr,gblue,"\src{J}"']
& & \Vect \\
& \tgt{\B}
  \arrow[ur,ggreen,"\ext{\Lan_JF}"']
& \dd \arrow[Rightarrow,ggreen,from=FA,to=2-2,shorten=7pt,"\ext{\eta}"']
\end{tikzcd}\]
\begin{enumerate}[label=\arabic*,ref=\arabic*]
\item\label{it:pw-ext} \textbf{The Extension.} By \itemref{prop:basic-task-changes}{it:forget} the value at \(b\in\Ob(\B)\) is the colimit of \(F\pi_b\) over \(J\downarrow b\), so that it suffices to determine the two comma categories \(J\downarrow0\) and \(J\downarrow1\).

\emph{At \(b=0\).} An object is a pair \((\bullet,\beta)\) with \(\beta\in\B(J\bullet,0)=\B(0,0)=\{\operatorname{id}_0\}\), so \(J\downarrow0\) has the single object \((\bullet,\operatorname{id}_0)\). A morphism \((\bullet,\operatorname{id}_0)\to(\bullet,\operatorname{id}_0)\) is a morphism \(\bullet\to\bullet\) of \(\mathbf 1\), so \(\operatorname{id}_\bullet\). Then \(J\downarrow0\cong\mathbf 1\), a colimit over the terminal category is the value at its object, and
\[(\Lan_JF)(0)\cong\operatorname*{colim}_{J\downarrow0}F\pi_0=F(\bullet)=\K,\qquad\lambda^{(\bullet,\operatorname{id}_0)}=\operatorname{id}_{\K}.\]

\emph{At \(b=1\).} An object is a pair \((\bullet,\beta)\) with \(\beta\in\B(0,1)=\{f,g\}\), so \(J\downarrow1\) has the two objects \((\bullet,f)\) and \((\bullet,g)\). A morphism \((\bullet,\beta)\to(\bullet,\beta')\) is a morphism \(\bar\alpha:\bullet\to\bullet\) of \(\mathbf 1\) with \(\beta'\circ J\bar\alpha=\beta\). The only candidate is \(\bar\alpha=\operatorname{id}_\bullet\), for which the condition reads \(\beta'=\beta\), so only the two identities occur and \(J\downarrow1\) is discrete on two objects. A colimit over a discrete category is the coproduct, so
\[(\Lan_JF)(1)\cong F(\bullet)\oplus F(\bullet)=\K\oplus\K,\qquad\lambda^{(\bullet,f)}(x)=(x,0),\qquad\lambda^{(\bullet,g)}(x)=(0,x).\]

\emph{The structure maps.} For \(\beta\in\B(0,1)\) the morphism \((\Lan_JF)(\beta)\) is the one induced by the functor \(J\downarrow0\to J\downarrow1\) that postcomposes with \(\beta\), namely \((\bullet,\operatorname{id}_0)\mapsto(\bullet,\beta\circ\operatorname{id}_0)=(\bullet,\beta)\). It is the unique linear map with \((\Lan_JF)(\beta)\circ\lambda^{(\bullet,\operatorname{id}_0)}=\lambda^{(\bullet,\beta)}\), and since \(\lambda^{(\bullet,\operatorname{id}_0)}=\operatorname{id}_{\K}\),
\[(\Lan_JF)(f)(x)=\lambda^{(\bullet,f)}(x)=(x,0),\qquad(\Lan_JF)(g)(x)=\lambda^{(\bullet,g)}(x)=(0,x).\]
The unit has the single component \(\eta_\bullet=\lambda^{(\bullet,\operatorname{id}_0)}=\operatorname{id}_{\K}:F(\bullet)\to(\Lan_JF)(J\bullet)\).

\item\label{it:pw-cmp} \textbf{The Comparison Functor.} We define \(G:\B\to\Vect\) on objects by \(G(0)=\K\) and \(G(1)=\K\oplus\K\), and on morphisms by \(G(\operatorname{id}_0)=\operatorname{id}_{\K}\), \(G(\operatorname{id}_1)=\operatorname{id}_{\K\oplus\K}\) and \(G(f)(x)=G(g)(x)=(x,0)\). This is a functor. Identities are sent to identities by definition, and \(\B\) has no composable pair of non-identity morphisms, because \(\B(1,0)=\varnothing\) and \(\B(1,1)=\{\operatorname{id}_1\}\), so every composite in \(\B\) involves an identity and the equations \(G(\beta\circ\operatorname{id})=G(\beta)=G(\operatorname{id})\circ G(\beta)\) are all that has to hold. Side by side, the extension and the comparison read
\[\begin{tikzcd}[row sep=1.2em,column sep=3.2em]
\ext{\K}
  \arrow[r,ggreen,shift left=0.7ex,"\ext{x\mapsto(x,0)}"]
  \arrow[r,ggreen,shift right=0.7ex,"\ext{x\mapsto(0,x)}"']
& \ext{\K\oplus\K},
& \tgt{\K}
  \arrow[r,gred,shift left=0.7ex,"\tgt{x\mapsto(x,0)}"]
  \arrow[r,gred,shift right=0.7ex,"\tgt{x\mapsto(x,0)}"']
& \tgt{\K\oplus\K}\dc
\end{tikzcd}\]
with \(\ext{\Lan_JF}\) on the left and \(\tgt{G}\) on the right. On objects the two agree, \((\Lan_JF)(0)=\K=G(0)\) and \((\Lan_JF)(1)=\K\oplus\K=G(1)\), so both lie in \(\mathcal S\) and, for any distance \(d_{\V}\) defined on isomorphism classes,
\begin{align*}
\Comp_J(F,G)&=\sup_{b\in\Ob(\B)}d_{\V}\bigl([(\Lan_JF)(b)],[G(b)]\bigr)\\
&=\max\bigl\{d_{\V}([\K],[\K]),\ d_{\V}([\K\oplus\K],[\K\oplus\K])\bigr\}=\max\{0,0\}=0,
\end{align*}
the last two equalities holding because a pseudometric vanishes on its diagonal and because the object set \(\Ob(\B)=\{0,1\}\) of the target task \(\B\) is a set with two elements.

\item\label{it:pw-nonat} \textbf{No Natural Isomorphism.} Suppose \(\sigma:\Lan_JF\Rightarrow G\) were a natural isomorphism. Naturality at \(f\) and at \(g\) says that both squares
\[\begin{tikzcd}[row sep=2.5em,column sep=2.8em]
\K \arrow[r,"\sigma_0"] \arrow[d,ggreen,"\ext{(\Lan_JF)(f)}"'] \arrow[dr,phantom,"\circlearrowleft"] & \K \arrow[d,gred,"\tgt{G(f)}"] & {} & {} &
\K \arrow[r,"\sigma_0"] \arrow[d,ggreen,"\ext{(\Lan_JF)(g)}"'] \arrow[dr,phantom,"\circlearrowleft"] & \K \arrow[d,gred,"\tgt{G(g)}"] \\
\K\oplus\K \arrow[r,"\sigma_1"'] & \K\oplus\K, & {} & {} &
\K\oplus\K \arrow[r,"\sigma_1"'] & \K\oplus\K.
\end{tikzcd}\]
commute. We write \(e_1=(1,0)\), \(e_2=(0,1)\) and \(c\coloneqq\sigma_0(1)\in\K\), which is nonzero because \(\sigma_0\) is injective. The left square applied to \(1\in\K\) gives
\[\sigma_1(e_1)=\sigma_1\bigl((\Lan_JF)(f)(1)\bigr)=G(f)(\sigma_0(1))=G(f)(c)=(c,0)=ce_1,\]
and the right square applied to \(1\in\K\) gives
\[\sigma_1(e_2)=\sigma_1\bigl((\Lan_JF)(g)(1)\bigr)=G(g)(\sigma_0(1))=G(g)(c)=(c,0)=ce_1.\]
Subtracting, \(\sigma_1(e_1-e_2)=ce_1-ce_1=0\) with \(e_1-e_2\neq0\), which contradicts the injectivity of \(\sigma_1\). So no such \(\sigma\) exists and \(\Lan_JF\not\cong G\) in \(\Fun{\B}{\Vect}\), while \(\Comp_J(F,G)=0\) by \labelcref{it:pw-cmp}.
\end{enumerate}
\end{proof}

\phantomsection\label{prf:prop:structure}
\begin{proof}[Proof of \Cref{prop:structure}]
Let \(\K\) be any field, let \(\A\) have objects \(0,1\), their identities, and two further morphisms \(f,g:0\to1\), and let \(J:\A\to\mathbf 1\) be the unique functor. We define \(F,F':\A\to\Vect\) on objects by \(F(0)=F(1)=F'(0)=F'(1)=\K\) and on the two non-identity morphisms by
\[F(f)=F(g)=\operatorname{id}_{\K},\qquad F'(f)=\operatorname{id}_{\K},\qquad F'(g)=0.\]
\begin{enumerate}[label=\arabic*,ref=\arabic*]
\item\label{it:ps-fun} \textbf{Both Are Functors.} Identities are sent to identities by definition. For composition, \(\A\) has no composable pair of non-identity morphisms, since \(\A(1,0)=\varnothing\) and \(\A(1,1)=\{\operatorname{id}_1\}\), so every composite in \(\A\) involves an identity and the only equations to verify are \(H(\beta\circ\operatorname{id})=H(\beta)=H(\operatorname{id})\circ H(\beta)\) for \(H\in\{F,F'\}\), which hold. On objects \(F(a)=\K=F'(a)\) for both \(a\in\{0,1\}\), so in particular \(F(a)\cong F'(a)\) for every \(a\in\Ob(\A)\).
\item\label{it:ps-comma} \textbf{The Comma Category Is the Source Category.} The terminal category has the single object \(\bullet\) and \(\mathbf 1(\bullet,\bullet)=\{\operatorname{id}_\bullet\}\), so an object of \(J\downarrow\bullet\) is a pair \((a,\operatorname{id}_\bullet)\) with \(a\in\Ob(\A)\), and a morphism \((a,\operatorname{id}_\bullet)\to(a',\operatorname{id}_\bullet)\) is a morphism \(\bar\alpha:a\to a'\) of \(\A\) subject to \(\operatorname{id}_\bullet\circ J\bar\alpha=\operatorname{id}_\bullet\), a condition every \(\bar\alpha\) satisfies. The rule
\[\Xi:J\downarrow\bullet\longrightarrow\A,\qquad(a,\operatorname{id}_\bullet)\mapsto a,\qquad\bar\alpha\mapsto\bar\alpha,\]
is bijective on objects and on hom-sets and preserves identities and composition, so it is an isomorphism of categories, and it satisfies \(\pi_\bullet=\Xi\) by the definition of the projection functor \(\pi_\bullet\) of the comma category \(J\downarrow\bullet\), which was given in \Cref{sec:framework}.
\item\label{it:ps-value} \textbf{The Value of the Extension.} We write \(H\) for either \(F\) or \(F'\). We use, in this order,
\begin{enumerate}[label=\alph*]
\item the pointwise formula of \itemref{prop:basic-task-changes}{it:forget},
\item the isomorphism \(\Xi\) of part \labelcref{it:ps-comma}, along which the diagram \(H\pi_\bullet\) becomes \(H\),
\item that a colimit over the parallel pair \(0\rightrightarrows1\) is the coequalizer of the two arrows,
\item that the coequalizer of \(u,v:X\to Y\) in \(\Vect\) is \(\coker(u-v)\), since a linear map \(w\) defined on \(Y\) satisfies \(w\circ u=w\circ v\) iff it also satisfies \(w\circ(u-v)=0\).
\end{enumerate}
Chaining the four steps,
\[\begin{aligned}
(\Lan_JH)(\bullet)\ &\overset{\textup{a}}{\cong}\ \operatorname*{colim}_{J\downarrow\bullet}H\pi_\bullet\ \overset{\textup{b}}{\cong}\ \operatorname*{colim}_{\A}H\\
&\overset{\textup{c}}{\cong}\ \operatorname{coeq}\bigl(H(f),H(g)\bigr)\ \overset{\textup{d}}{\cong}\ \coker\bigl(H(f)-H(g)\bigr).
\end{aligned}\]
\item\label{it:ps-diff} \textbf{The Two Values Differ.} Substituting the two definitions into part \labelcref{it:ps-value},
\[(\Lan_JF)(\bullet)\cong\coker(\operatorname{id}_{\K}-\operatorname{id}_{\K})=\coker(0:\K\to\K)=\K/0\cong\K,\]
\[(\Lan_JF')(\bullet)\cong\coker(\operatorname{id}_{\K}-0)=\coker(\operatorname{id}_{\K}:\K\to\K)=\K/\K\cong0.\]
Since \(\dim_{\K}\K=1\) and \(\dim_{\K}0=0\), the two values are not isomorphic, whereas \(F(a)\cong F'(a)\) does hold for every single object \(a\in\Ob(\A)\) of the source task by part \labelcref{it:ps-fun}.
\end{enumerate}
\end{proof}

\phantomsection\label{prf:thm:main}
\begin{proof}[Proof of \Cref{thm:main}]
By \Cref{def:invariant} the distance \(d_{\V}\) is an extended pseudometric on \(\mathcal S/{\cong}\) with values in \([0,\infty]\), which means that for all \([X],[Y],[Z]\in\mathcal S/{\cong}\)
\[\begin{aligned}
\textup{M1}\quad & d_{\V}([X],[X])=0,\\
\textup{M2}\quad & d_{\V}([X],[Y])=d_{\V}([Y],[X]),\\
\textup{M3}\quad & d_{\V}([X],[Z])\leq d_{\V}([X],[Y])+d_{\V}([Y],[Z]).
\end{aligned}\]
Let \(\rho:\mathcal S/{\cong}\to\Sep(\mathcal S,d_{\V})\) denote the quotient map onto the set \((\mathcal S/{\cong})/{\sim_0}\).
\begin{enumerate}[label=\arabic*,ref=\arabic*]
\item\label{it:tm-member} \textbf{Membership.} By hypothesis \((\Lan_JF)(b)\in\mathcal S\) for every \(b\in\Ob(\B)\), so \(\Lan_JF\) is admissible as a target. Applying \eqref{eq:discrepancy} with \(G=\Lan_JF\) and evaluating each term by \textup{M1},
\[\Comp_J(F,\Lan_JF)=\sup_{b\in\Ob(\B)}d_{\V}\bigl([(\Lan_JF)(b)],[(\Lan_JF)(b)]\bigr)\overset{\textup{M1}}{=}\sup_{b\in\Ob(\B)}0=0\leq0,\]
where the supremum of the constant family \(0\) is \(0\). So \(\Lan_JF\in\mathcal T_0(J,F)\).
\item\label{it:tm-inv} \textbf{Invariance.} Let \(\varepsilon\geq0\), let \(G\in\mathcal T_{\varepsilon}(J,F)\) and let \(\sigma:G\Rightarrow G'\) be a natural isomorphism. For every \(b\in\Ob(\B)\) the component \(\sigma_b:G(b)\to G'(b)\) is an isomorphism of \(\V\), so \(G'(b)\in\mathcal S\) because \(\mathcal S\) is closed under isomorphism, and \([G(b)]=[G'(b)]\) in \(\mathcal S/{\cong}\). Since \(d_{\V}\) is a function on \(\mathcal S/{\cong}\), the two families of distances coincide termwise,
\[d_{\V}\bigl([(\Lan_JF)(b)],[G(b)]\bigr)=d_{\V}\bigl([(\Lan_JF)(b)],[G'(b)]\bigr)\qquad\text{for every }b\in\Ob(\B),\]
so their suprema coincide and \(\Comp_J(F,G')=\Comp_J(F,G)\leq\varepsilon\), that is \(G'\in\mathcal T_{\varepsilon}(J,F)\).
\item\label{it:tm-zero} \textbf{Zero Class.} All terms of the supremum in \eqref{eq:discrepancy} are nonnegative, and a supremum of nonnegative numbers is \(0\) iff every term is \(0\), so
\begin{align*}
G\in\mathcal T_0(J,F)\ &\iff\ \sup_{b\in\Ob(\B)}d_{\V}\bigl([(\Lan_JF)(b)],[G(b)]\bigr)=0\\
&\iff\ d_{\V}\bigl([(\Lan_JF)(b)],[G(b)]\bigr)=0\ \text{for every }b\in\Ob(\B).
\end{align*}
By the definition of \(\sim_0\), the last condition says \([(\Lan_JF)(b)]\sim_0[G(b)]\) for every \(b\), and two elements of \(\mathcal S/{\cong}\) are \(\sim_0\)-equivalent iff they have the same image under \(\rho\). So
\[G\in\mathcal T_0(J,F)\iff\rho\bigl([(\Lan_JF)(b)]\bigr)=\rho\bigl([G(b)]\bigr)\ \text{ in }\Sep(\mathcal S,d_{\V})\text{, for every }b\in\Ob(\B),\]
\item\label{it:tm-sep} \textbf{Separation.} We recall that \(d_{\V}\) is \emph{separated} on \(\mathcal S/{\cong}\) if \(d_{\V}([X],[Y])=0\) implies \([X]=[Y]\), that is if \(\sim_0\) is equality and \(\rho\) is injective. We assume this. If \(G\in\mathcal T_0(J,F)\), then by part \labelcref{it:tm-zero} one has \(d_{\V}([(\Lan_JF)(b)],[G(b)])=0\) for every \(b\), so \([(\Lan_JF)(b)]=[G(b)]\) by separatedness, and so \((\Lan_JF)(b)\cong G(b)\) for every \(b\). Conversely, if \((\Lan_JF)(b)\cong G(b)\) for every \(b\), then \(G(b)\in\mathcal S\) because \(\mathcal S\) is closed under isomorphism, \([(\Lan_JF)(b)]=[G(b)]\), and \textup{M1} gives \(d_{\V}([(\Lan_JF)(b)],[G(b)])=0\) for every \(b\), so \(\Comp_J(F,G)=0\) and \(G\in\mathcal T_0(J,F)\).
\item\label{it:tm-strict} \textbf{Strictness.} We assume first that \(F\) is exactly Kan-transferable to \(G\) along \(J\), which by \Cref{def:discrepancy} means \(\Lan_JF\cong G\) in \(\Fun{\B}{\V}\). Part \labelcref{it:tm-member} gives \(\Lan_JF\in\mathcal T_0(J,F)\), and part \labelcref{it:tm-inv} applied with \(\varepsilon=0\) to the natural isomorphism \(\Lan_JF\Rightarrow G\) gives \(G\in\mathcal T_0(J,F)\). The converse implication is false, and a single example settles it. We take the field \(\K\), the categories \(\A=\mathbf 1\) and \(\B\), the functor \(J\) and the functors \(F,G\) constructed in the proof of \Cref{prop:pw-not-nat}. There \(\Comp_J(F,G)=0\), so \(G\in\mathcal T_0(J,F)\), while no natural isomorphism \(\Lan_JF\Rightarrow G\) exists, so \(\Lan_JF\not\cong G\) in \(\Fun{\B}{\Vect}\) and \(F\) is not exactly Kan-transferable to \(G\) along \(J\). Membership in the zero class is strictly weaker than exact Kan-transferability, and no separatedness hypothesis repairs this, since by part \labelcref{it:tm-sep} the zero class detects the objectwise isomorphisms and not the naturality of any family that realises them.
\end{enumerate}
\end{proof}

\section{Proofs of the Instantiation in Chain Complexes}\label{app:instantiation}

The two presentations of \Cref{prop:cokernel} both reduce to the following description of a colimit of vector spaces as the cokernel of a map between two large direct sums.

\begin{lemma}\label{lem:finite-colimit}
Let \(\mathcal I\) be a small category and let \(D:\mathcal I\to\Vect\) be a functor. We write \(\gamma:s(\gamma)\to t(\gamma)\) for a morphism of \(\mathcal I\) with its source and target, so that \(D(\gamma):D(s(\gamma))\to D(t(\gamma))\), and
\[P\coloneqq\bigoplus_{i\in\Ob(\mathcal I)}D(i),\qquad R\coloneqq\bigoplus_{\gamma\in\Mor(\mathcal I)}D(s(\gamma)),\]
with summand inclusions \(\iota_i:D(i)\to P\) for \(i\in\Ob(\mathcal I)\) and \(\kappa_\gamma:D(s(\gamma))\to R\) for \(\gamma\in\Mor(\mathcal I)\). Let \(r:R\to P\) be the unique linear map with \(r\circ\kappa_\gamma=\iota_{t(\gamma)}\circ D(\gamma)-\iota_{s(\gamma)}\) for every \(\gamma\in\Mor(\mathcal I)\), let \(Q\coloneqq\coker(r)\) and let \(p:P\to Q\) be the quotient map. Then
\begin{equation}\label{eq:finite-colimit}
\operatorname*{colim}_{\mathcal I}D\ \cong\ Q=\coker\bigl(R\xrightarrow{r}P\bigr),
\end{equation}
with colimit cocone \(\lambda_i\coloneqq p\circ\iota_i:D(i)\to Q\) for \(i\in\Ob(\mathcal I)\). The isomorphism is natural in \(D\).
\end{lemma}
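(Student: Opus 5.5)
The plan is to verify the universal property of the colimit directly for the pair \((Q,(\lambda_i)_i)\), and then read off naturality from the functoriality of the construction. Since \(P\) is a coproduct in \(\Vect\), linear maps \(P\to W\) correspond bijectively to families \((w_i:D(i)\to W)_{i\in\Ob(\mathcal I)}\) via \(w\mapsto(w\circ\iota_i)_i\). The same holds for \(R\) with families indexed by \(\Mor(\mathcal I)\). This already determines \(r\) uniquely from the prescribed components.

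\emph{Cocone.} Check that \(\lambda\) is a cocone. For \(\gamma:i\to j\) one has \(p\circ r\circ\kappa_\gamma=0\), that is \(p\iota_jD(\gamma)-p\iota_i=0\), which is exactly \(\lambda_j\circ D(\gamma)=\lambda_i\).

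\emph{Universality.} Given a cocone \(\nu_i:D(i)\to W\), let \(w:P\to W\) be the induced map with \(w\iota_i=\nu_i\). Then \(w\circ r\circ\kappa_\gamma=\nu_{t(\gamma)}D(\gamma)-\nu_{s(\gamma)}=0\) for every \(\gamma\). By the coproduct property of \(R\) this gives \(w\circ r=0\), so \(w\) factors uniquely through \(p\) as \(\bar w:Q\to W\), and \(\bar w\lambda_i=\nu_i\). For uniqueness, any \(u:Q\to W\) with \(u\lambda_i=\nu_i\) satisfies \(up\iota_i=\nu_i\) for all \(i\), hence \(up=w\). Since \(p\) is surjective, \(u=\bar w\). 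So \((Q,\lambda)\) is a colimit, and \eqref{eq:finite-colimit} holds with the stated cocone.

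\emph{Naturality.} For a natural transformation \(\theta:D\Rightarrow D'\), define \(\theta_P\coloneqq\bigoplus_i\theta_i\) and \(\theta_R\coloneqq\bigoplus_\gamma\theta_{s(\gamma)}\). Naturality of \(\theta\) gives \(r'\theta_R\kappa_\gamma=\iota'_{t(\gamma)}D'(\gamma)\theta_{s(\gamma)}-\iota'_{s(\gamma)}\theta_{s(\gamma)}=\theta_P r\kappa_\gamma\), so \(\theta_P\) descends to \(\theta_Q:Q\to Q'\) with \(\theta_Q\lambda_i=\lambda'_i\theta_i\). This is exactly the map the colimit functor assigns to \(\theta\), by the uniqueness part above. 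Hence the comparison isomorphism between any chosen \(\operatorname*{colim}D\) and \(Q\) is natural.

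I expect no real obstacle. The only point needing care is that the direct sums may be infinite. The coproduct property of \(\bigoplus\) in \(\Vect\) holds for arbitrary index sets, so the argument does not use finiteness, which is needed only later, in \Cref{prop:cokernel}.
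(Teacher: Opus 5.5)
Your proposal is correct and follows essentially the paper's own argument. The paper likewise identifies cocones on \(D\) with linear maps \(P\to W\) vanishing on \(\operatorname{im}r\), uses the surjectivity of \(p\) for existence and uniqueness of the mediating map, and proves naturality by showing that \(\bigoplus_i\theta_i\) carries \(\operatorname{im}r\) into \(\operatorname{im}r'\) and so descends to the map induced on colimits; it differs only cosmetically, by packaging the first step as an explicit bijection between families and maps out of \(P\) and obtaining the cocone property of \(\lambda\) by passing \(\operatorname{id}_Q\) through that bijection rather than checking \(p\circ r=0\) directly as you do.
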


\begin{proof}
\begin{enumerate}[label=\arabic*,ref=\arabic*]
\item\label{it:fc-cocone} \textbf{Cocones Are the Linear Maps That Kill the Image of \(r\).} A cocone on \(D\) with vertex \(W\) is a family \((f_i)_{i\in\Ob(\mathcal I)}\) of linear maps \(f_i:D(i)\to W\) such that
\[\begin{tikzcd}[row sep=2.4em,column sep=2.6em]
\src{D(s(\gamma))}
  \arrow[rr,gblue,"\src{D(\gamma)}"]
  \arrow[dr,gred,"\tgt{f_{s(\gamma)}}"']
& {} \arrow[d,phantom,"\circlearrowleft"] & \src{D(t(\gamma))}
  \arrow[dl,gred,"\tgt{f_{t(\gamma)}}"] \\
& \tgt{W}.
\end{tikzcd}\]
commutes for every \(\gamma\in\Mor(\mathcal I)\), that is \(f_{t(\gamma)}\circ D(\gamma)=f_{s(\gamma)}\). Since \(P\) is the coproduct of \((D(i))_{i\in\Ob(\mathcal I)}\) in \(\Vect\), the rules
\[\Phi\bigl((f_i)_i\bigr)\bigl((x_i)_i\bigr)\coloneqq\sum_{i\in\Ob(\mathcal I)}f_i(x_i),\qquad\Psi(f)\coloneqq(f\circ\iota_i)_{i\in\Ob(\mathcal I)}\]
define mutually inverse bijections between \(\prod_{i\in\Ob(\mathcal I)}\operatorname{Hom}(D(i),W)\) and \(\operatorname{Hom}(P,W)\). The sum defining \(\Phi\) is finite because elements of \(P\) have finite support. One has \(\Psi(\Phi((f_i)_i))=(f_i)_i\) because \(\Phi((f_i)_i)\circ\iota_j=f_j\) for every \(j\), and \(\Phi(\Psi(f))=f\) because both sides agree after precomposition with every \(\iota_i\) and the images of the \(\iota_i\) span \(P\).

Let \(f=\Phi((f_i)_i)\), let \(\gamma\in\Mor(\mathcal I)\) and let \(x\in D(s(\gamma))\). Then
\begin{align*}
f\bigl(r(\kappa_\gamma(x))\bigr)&=f\bigl(\iota_{t(\gamma)}D(\gamma)(x)-\iota_{s(\gamma)}(x)\bigr)\\
&=f\bigl(\iota_{t(\gamma)}D(\gamma)(x)\bigr)-f\bigl(\iota_{s(\gamma)}(x)\bigr)=f_{t(\gamma)}\bigl(D(\gamma)(x)\bigr)-f_{s(\gamma)}(x),
\end{align*}
the second equality by linearity of \(f\) and the third by \(f\circ\iota_i=f_i\). The elements \(\kappa_\gamma(x)\) with \(\gamma\in\Mor(\mathcal I)\) and \(x\in D(s(\gamma))\) span \(R\), so \(f\) vanishes on \(\operatorname{im}r\) iff \(f(r(\kappa_\gamma(x)))=0\) for all such \(\gamma\) and \(x\), and by the display this holds iff \(f_{t(\gamma)}(D(\gamma)(x))=f_{s(\gamma)}(x)\) for all of them, that is iff \(f_{t(\gamma)}\circ D(\gamma)=f_{s(\gamma)}\) for every \(\gamma\in\Mor(\mathcal I)\). Under \(\Phi\) and \(\Psi\) the cocones on \(D\) with vertex \(W\) correspond bijectively to those linear maps \(P\to W\) that vanish on the whole of the subspace \(\operatorname{im}r\subseteq P\), and to no other linear maps out of \(P\) at all.

\item\label{it:fc-univ} \textbf{The Universal Property.} The map \(p\) is surjective with kernel \(\operatorname{im}r\), so precomposition with \(p\) is a bijection
\[p^{*}:\operatorname{Hom}(Q,W)\longrightarrow\bigl\{h\in\operatorname{Hom}(P,W)\ \bigm|\ h|_{\operatorname{im}r}=0\bigr\},\qquad p^{*}(\bar h)=\bar h\circ p,\]
whose inverse sends such an \(h\) to the unique \(\bar h\) with \(\bar h\circ p=h\). Taking \(\bar h=\operatorname{id}_Q\) gives \(h=p\), whose associated family under \labelcref{it:fc-cocone} is \((p\circ\iota_i)_i=(\lambda_i)_i\), so \((\lambda_i)_i\) is a cocone on \(D\) with vertex \(Q\). Let \((f_i)_i\) be any cocone with vertex \(W\). By \labelcref{it:fc-cocone} there is a unique \(f:P\to W\) with \(f\circ\iota_i=f_i\) for all \(i\) and with \(f|_{\operatorname{im}r}=0\), and by the bijection \(p^{*}\) a unique \(\bar f:Q\to W\) with \(\bar f\circ p=f\). The situation is
\[\begin{tikzcd}[row sep=3.0em,column sep=3.2em]
\src{D(i)}
  \arrow[r,gblue,"\src{\iota_i}"]
  \arrow[rr,bend left=26,ggreen,"\ext{\lambda_i}"]
  \arrow[drr,bend right=12,gred,"\tgt{f_i}"']
& P
  \arrow[r,two heads,gblue,"\src{p}"]
  \arrow[dr,gred,"\tgt{f}"]
  \arrow[d,phantom,"\circlearrowleft",pos=0.34]
& \ext{Q}
  \arrow[d,dashed,ggreen,"\ext{\bar f}"] \\
& {} & \tgt{W}\dd
\end{tikzcd}\]
and it commutes, since \(\bar f\circ\lambda_i=\bar f\circ p\circ\iota_i=f\circ\iota_i=f_i\) for every \(i\in\Ob(\mathcal I)\). For uniqueness, let \(\bar f':Q\to W\) also satisfy \(\bar f'\circ\lambda_i=f_i\) for every \(i\). Then \(\bar f'\circ p\circ\iota_i=\bar f\circ p\circ\iota_i\) for every \(i\), so \(\bar f'\circ p=\bar f\circ p\) by the injectivity of \(\Psi\), and \(\bar f'=\bar f\) because \(p\) is surjective. So \((Q,(\lambda_i)_i)\) is a colimit of \(D\) with the cocone stated above, which is the claim \eqref{eq:finite-colimit}.

\item\label{it:fc-nat} \textbf{Naturality in the Diagram.} Let \(\theta:D\Rightarrow D'\) be a natural transformation and let \(P',R',r',Q',p',\iota'_i,\kappa'_\gamma,\lambda'_i\) be the data attached to \(D'\). We put
\[\theta^{\Ob}\coloneqq\bigoplus_{i\in\Ob(\mathcal I)}\theta_i:P\to P',\qquad\theta^{\Mor}\coloneqq\bigoplus_{\gamma\in\Mor(\mathcal I)}\theta_{s(\gamma)}:R\to R',\]
so that \(\theta^{\Ob}\circ\iota_i=\iota'_i\circ\theta_i\) and \(\theta^{\Mor}\circ\kappa_\gamma=\kappa'_\gamma\circ\theta_{s(\gamma)}\). For \(\gamma\in\Mor(\mathcal I)\) and \(x\in D(s(\gamma))\),
\[r'\bigl(\theta^{\Mor}(\kappa_\gamma(x))\bigr)=r'\bigl(\kappa'_\gamma(\theta_{s(\gamma)}(x))\bigr)=\iota'_{t(\gamma)}D'(\gamma)\theta_{s(\gamma)}(x)-\iota'_{s(\gamma)}\theta_{s(\gamma)}(x),\]
\[\iota'_{t(\gamma)}D'(\gamma)\theta_{s(\gamma)}(x)-\iota'_{s(\gamma)}\theta_{s(\gamma)}(x)=\iota'_{t(\gamma)}\theta_{t(\gamma)}D(\gamma)(x)-\iota'_{s(\gamma)}\theta_{s(\gamma)}(x)=\theta^{\Ob}\bigl(r(\kappa_\gamma(x))\bigr),\]
the first equality of the second line by naturality of \(\theta\) at \(\gamma\) and the second by \(\theta^{\Ob}\circ\iota_i=\iota'_i\circ\theta_i\). Since the \(\kappa_\gamma(x)\) span \(R\), this gives \(r'\circ\theta^{\Mor}=\theta^{\Ob}\circ r\), so \(\theta^{\Ob}(\operatorname{im}r)\subseteq\operatorname{im}r'\), and so a unique \(\bar\theta:Q\to Q'\) with \(\bar\theta\circ p=p'\circ\theta^{\Ob}\). All of this is one diagram, whose vertical blue path carries \(\theta\) from the diagram to the two direct sums and on to the cokernels,
\[\begin{tikzcd}[row sep=3.2em,column sep=2.9em]
D(i)
  \arrow[rr,bend left=17,"\iota_i"]
  \arrow[rrr,bend left=30,"\lambda_i"]
  \arrow[d,gblue,"\src{\theta_i}"']
& R
  \arrow[r,"r"]
  \arrow[d,gblue,"\src{\theta^{\Mor}}"']
  \arrow[dr,phantom,"\circlearrowleft"]
& P
  \arrow[r,two heads,"p"]
  \arrow[d,gblue,"\src{\theta^{\Ob}}"']
  \arrow[dr,phantom,"\circlearrowleft"]
& Q
  \arrow[d,dashed,gblue,"\src{\bar\theta}"] \\
D'(i)
  \arrow[rr,bend right=17,"\iota'_i"']
  \arrow[rrr,bend right=30,"\lambda'_i"']
& R'
  \arrow[r,"r'"]
& P'
  \arrow[r,two heads,"p'"]
& Q'\dd
\end{tikzcd}\]
Finally \(\bar\theta\) is the morphism induced on colimits, because
\[\bar\theta\circ\lambda_i=\bar\theta\circ p\circ\iota_i=p'\circ\theta^{\Ob}\circ\iota_i=p'\circ\iota'_i\circ\theta_i=\lambda'_i\circ\theta_i\qquad(i\in\Ob(\mathcal I)),\]
and this pair of equations, one for each \(i\), is the defining property of \(\operatorname*{colim}\theta\). So the isomorphism \eqref{eq:finite-colimit} is natural in the diagram \(D\), which was the last claim of the lemma.
\end{enumerate}
\end{proof}

\phantomsection\label{prf:prop:targets}
\begin{proof}[Proof of \Cref{prop:targets}]
\begin{enumerate}[label=\arabic*,ref=\arabic*]
\item\label{it:tg-cocomplete} \textbf{The Ambient Chain Category Is Cocomplete.} The category \(\Vect\) has all small coproducts, given by direct sums, and all coequalizers, given by the quotient of the codomain by the image of a difference, and so all small colimits, since every small colimit is a coequalizer of a pair of morphisms between coproducts \citep[Chapter~V]{MacLane1998}. Colimits in \(\Ch\) are formed degreewise from those of \(\Vect\), so that \(\Ch\) admits all small colimits as well \citep[Section~1.2]{Wei94}, formed one degree at a time.
\item\label{it:tg-welldef} \textbf{The Chain Distance Is Well Defined.} An isomorphism \(u:C\to C'\) of complexes is a chain homotopy equivalence, because \(u^{-1}u=\operatorname{id}_C\) and \(uu^{-1}=\operatorname{id}_{C'}\) hold on the nose and the zero homotopies witness them, so \(C\cong C'\) implies \(C\simeq_{\mathrm{ch}}C'\). Chain homotopy equivalence is reflexive through the identity, symmetric because a homotopy equivalence and a quasi-inverse of it may be exchanged, and transitive because a composite of homotopy equivalences is one \citep[Section~1.4]{Wei94}. Let \(C\cong C'\) and \(E\cong E'\) in \(\mathcal S_{\mathrm{ch}}\). If \(C\simeq_{\mathrm{ch}}E\):
\[C'\simeq_{\mathrm{ch}}C\simeq_{\mathrm{ch}}E\simeq_{\mathrm{ch}}E',\]
so \(C'\simeq_{\mathrm{ch}}E'\), and interchanging the primed with the unprimed complexes gives the converse. The value \(d_{\mathrm{ch}}([C],[E])\) is independent of the chosen representatives and \(d_{\mathrm{ch}}\) is a function on \(\mathcal S_{\mathrm{ch}}/{\cong}\), which takes its values in \(\{0,1\}\subseteq[0,\infty]\).
\item\label{it:tg-pseudo} \textbf{The Chain Distance Is an Extended Pseudometric.} Let \(C,E,G\in\mathcal S_{\mathrm{ch}}\).
\begin{enumerate}[label=\alph*]
\item \(d_{\mathrm{ch}}([C],[C])=0\), since \(\operatorname{id}_C\) is a chain homotopy equivalence and so \(C\simeq_{\mathrm{ch}}C\).
\item \(d_{\mathrm{ch}}([C],[E])=d_{\mathrm{ch}}([E],[C])\), since by \labelcref{it:tg-welldef} the condition \(C\simeq_{\mathrm{ch}}E\) that selects the value \(0\) is symmetric in its two arguments.
\item \(d_{\mathrm{ch}}([C],[G])\leq d_{\mathrm{ch}}([C],[E])+d_{\mathrm{ch}}([E],[G])\). If the right-hand side is \(\geq 1\), the inequality holds because the left-hand side is \(\leq 1\). If it is \(0\), then both summands vanish, so \(C\simeq_{\mathrm{ch}}E\) and \(E\simeq_{\mathrm{ch}}G\), so \(C\simeq_{\mathrm{ch}}G\) by transitivity and the left-hand side is \(0\).
\end{enumerate}
Together with the hypothesis that \(\mathcal S_{\mathrm{ch}}\) is closed under isomorphism and with \labelcref{it:tg-cocomplete}, the triple \((\Ch,\mathcal S_{\mathrm{ch}},d_{\mathrm{ch}})\) satisfies the three conditions of \Cref{def:invariant}.
\item\label{it:tg-zero} \textbf{The Zero Set of the Chain Discrepancy.} Since \(d_{\mathrm{ch}}\) takes only the values \(0\) and \(1\), a supremum of such values is \(0\) iff every one of them is, so
\[\Comp_J(F,G)=0\iff d_{\mathrm{ch}}\bigl([(\Lan_JF)(b)],[G(b)]\bigr)=0\ \text{ for every }b\in\Ob(\B),\]
and by \Cref{def:chain-target} the right-hand condition reads \((\Lan_JF)(b)\simeq_{\mathrm{ch}}G(b)\) for every \(b\in\Ob(\B)\). We fix such a \(b\) and let \(\varphi:(\Lan_JF)(b)\to G(b)\) and \(\psi:G(b)\to(\Lan_JF)(b)\) be chain maps with \(\psi\varphi\simeq\operatorname{id}\) and \(\varphi\psi\simeq\operatorname{id}\). Homotopic chain maps induce the same morphism on homology \citep[Section~1.4]{Wei94}, so for every \(m\in\mathbb Z\)
\[H_m(\psi)H_m(\varphi)=H_m(\psi\varphi)=H_m(\operatorname{id})=\operatorname{id},\qquad H_m(\varphi)H_m(\psi)=H_m(\varphi\psi)=\operatorname{id},\]
so \(H_m(\varphi):H_m((\Lan_JF)(b))\to H_m(G(b))\) is an isomorphism with inverse \(H_m(\psi)\).
\item\label{it:tg-pers} \textbf{The Persistent Target.} The category \(\Pers=\Fun{(\mathbb R,\leq)}{\Vect}\) is a functor category whose codomain is cocomplete by \labelcref{it:tg-cocomplete}, so it admits all small colimits and these are formed pointwise, \((\operatorname*{colim}_iX_i)_t\cong\operatorname*{colim}_i(X_i)_t\) for every \(t\in\mathbb R\). The class \(\Pers^{q\text{-}\mathrm{tame}}\) is closed under isomorphism, because \(q\)-tameness is a condition on the ranks of the structure maps and an isomorphism of persistence modules leaves those ranks unchanged. The interleaving distance of \Cref{def:interleaving} is an extended pseudometric on isomorphism classes of persistence modules \citep[Definition~3.5]{Les15}, and a restriction of an extended pseudometric to a subclass is again one, so \(d_I\) on \(\Pers^{q\text{-}\mathrm{tame}}/{\cong}\) satisfies the third condition of \Cref{def:invariant}. On \(q\)-tame modules it moreover agrees with the bottleneck distance of the associated diagrams, by the isometry theorem \citep[Theorem~4.11]{CSGO16}.
\end{enumerate}
\end{proof}

\phantomsection\label{prf:prop:cokernel}
\begin{proof}[Proof of \Cref{prop:cokernel}]
We fix \(b\in\Ob(\B)\) and abbreviate the two direct sums of \eqref{eq:cokernel} by
\[R_v\coloneqq\bigoplus_{\gamma\in\Mor(J\downarrow b)}F(a_\gamma)_v,\qquad P_v\coloneqq\bigoplus_{(a,\beta)\in\Ob(J\downarrow b)}F(a)_v,\qquad Q_v=\coker(r_v:R_v\to P_v),\]
with quotient map \(p_v:P_v\to Q_v\) and summand inclusions \(\kappa_\gamma:F(a_\gamma)_v\to R_v\) and \(\iota_{(a,\beta)}:F(a)_v\to P_v\). By \itemref{prop:basic-task-changes}{it:forget} the value of the extension is
\[L\coloneqq\operatorname*{colim}_{J\downarrow b}F\pi_b,\qquad\lambda^{(a,\beta)}:F(a)\to L,\]
formed in \(\Ch\) in \labelcref{it:cok-ch} and in \(\Pers\) in \labelcref{it:cok-pers}. We write \(v'\coloneqq m-1\) in \labelcref{it:cok-ch} and \(v'\coloneqq t\) in \labelcref{it:cok-pers}, and let
\[\varphi_{(a,\beta)}\coloneqq\partial_{F(a),m}:F(a)_m\to F(a)_{m-1}\quad\text{respectively}\quad\varphi_{(a,\beta)}\coloneqq F(a)_{s\leq t}:F(a)_s\to F(a)_t\]
be the transition map that the diagram attaches to the object \((a,\beta)\in\Ob(J\downarrow b)\).
\begin{enumerate}[label=\arabic*,ref=\arabic*]
\item\label{it:ck-reduce} \textbf{Reduction to Vector Spaces.} Colimits in \(\Ch\) are formed degreewise \citep[Section~1.2]{Wei94} and colimits in \(\Pers\) pointwise, in both cases because \(\Vect\) is cocomplete. The underlying vector space of \(L\) at the index \(v\) is then the colimit of
\[D_v:J\downarrow b\longrightarrow\Vect,\qquad(a,\beta)\mapsto F(a)_v,\qquad\gamma\mapsto F(\bar\gamma)_v,\]
and the direct sums \(R_v\) and \(P_v\) together with \(r_v\) are the data that \Cref{lem:finite-colimit} attaches to \(D_v\). That lemma gives an isomorphism \(L_v\cong Q_v\) under which the legs of the colimit cocone become \(\lambda^{(a,\beta)}_v=p_v\iota_{(a,\beta)}\), that is, the triangle
\[\begin{tikzcd}[row sep=2.6em,column sep=3.2em]
\src{F(a)_v}
  \arrow[r,gblue,"\src{\iota_{(a,\beta)}}"]
  \arrow[dr,ggreen,"\ext{\lambda^{(a,\beta)}_v}"']
  \arrow[dr,phantom,"\circlearrowleft",pos=0.5,xshift=7pt,yshift=6pt]
& P_v
  \arrow[d,two heads,gred,"\tgt{p_v}"] \\
& \ext{Q_v}\dd
\end{tikzcd}\]
commutes for every object \((a,\beta)\in\Ob(J\downarrow b)\). This gives the isomorphisms of \labelcref{it:cok-ch} and \labelcref{it:cok-pers}, and it remains only to identify the transition maps between the two of them.
\item\label{it:ck-descend} \textbf{Descent of the Transition Maps.} We put
\[\Phi_v\coloneqq\bigoplus_{(a,\beta)\in\Ob(J\downarrow b)}\varphi_{(a,\beta)}:P_v\to P_{v'},\qquad\Psi_v\coloneqq\bigoplus_{\gamma\in\Mor(J\downarrow b)}\varphi_{s(\gamma)}:R_v\to R_{v'},\]
so that \(\Phi_v\iota_{(a,\beta)}=\iota_{(a,\beta)}\varphi_{(a,\beta)}\) and \(\Psi_v\kappa_\gamma=\kappa_\gamma\varphi_{s(\gamma)}\). In \labelcref{it:cok-ch} the morphism \(F(\bar\gamma)\) is a chain map and in \labelcref{it:cok-pers} it is a morphism of persistence modules, so in both cases
\[\varphi_{t(\gamma)}F(\bar\gamma)_v=F(\bar\gamma)_{v'}\varphi_{s(\gamma)}.\]
Evaluating \(\Phi_vr_v\) on the summand indexed by \(\gamma\in\Mor(J\downarrow b)\) and using this identity,
\[\begin{aligned}
\Phi_vr_v\kappa_\gamma&=\Phi_v\bigl(\iota_{t(\gamma)}F(\bar\gamma)_v-\iota_{s(\gamma)}\bigr)=\iota_{t(\gamma)}\varphi_{t(\gamma)}F(\bar\gamma)_v-\iota_{s(\gamma)}\varphi_{s(\gamma)}\\
&=\bigl(\iota_{t(\gamma)}F(\bar\gamma)_{v'}-\iota_{s(\gamma)}\bigr)\varphi_{s(\gamma)}=r_{v'}\Psi_v\kappa_\gamma.
\end{aligned}\]
The inclusions \(\kappa_\gamma\) jointly span \(R_v\), so \(\Phi_vr_v=r_{v'}\Psi_v\), which is the left square of the diagram
\[\begin{tikzcd}[row sep=3.0em,column sep=3.2em]
R_v
  \arrow[r,gblue,"\src{r_v}"]
  \arrow[d,gblue,"\src{\Psi_v}"']
& P_v
  \arrow[r,two heads,gred,"\tgt{p_v}"]
  \arrow[d,gblue,"\src{\Phi_v}"']
  \arrow[dl,phantom,"\circlearrowleft"]
& \ext{Q_v}
  \arrow[d,dashed,ggreen,"\ext{\overline\Phi_v}"]
  \arrow[dl,phantom,"\circlearrowleft"] \\
R_{v'}
  \arrow[r,gblue,"\src{r_{v'}}"']
& P_{v'}
  \arrow[r,two heads,gred,"\tgt{p_{v'}}"']
& \ext{Q_{v'}}\dd
\end{tikzcd}\]
So \(\Phi_v(\operatorname{im}r_v)=\Phi_vr_v(R_v)=r_{v'}\Psi_v(R_v)\subseteq\operatorname{im}r_{v'}\), so \(p_{v'}\Phi_v\) vanishes on \(\operatorname{im}r_v=\ker p_v\) and factors uniquely through \(p_v\). This is the right square, and it defines the dashed map \(\overline\Phi_v:Q_v\to Q_{v'}\) by \(\overline\Phi_vp_v=p_{v'}\Phi_v\). In \labelcref{it:cok-ch} the family \((\overline\Phi_m)_{m\in\mathbb Z}\) squares to zero, because \(\Phi_{m-1}\Phi_m=0\) already holds summandwise, and so \(Q_\bullet\) is a chain complex.
\item\label{it:ck-identify} \textbf{Identification of the Descended Map.} The transition map of the colimit is the unique linear map \(L_{v\to v'}:L_v\to L_{v'}\) with
\[L_{v\to v'}\lambda^{(a,\beta)}_v=\lambda^{(a,\beta)}_{v'}\varphi_{(a,\beta)}\qquad\text{for every }(a,\beta)\in\Ob(J\downarrow b),\]
uniqueness holds because \(p_v\) is surjective and the \(\iota_{(a,\beta)}\) jointly span \(P_v\), so that the legs \(\lambda^{(a,\beta)}_v=p_v\iota_{(a,\beta)}\) are jointly epimorphic. The descended map has that property, as
\[\begin{tikzcd}[row sep=3.0em,column sep=3.2em]
\src{F(a)_v}
  \arrow[r,gblue,"\src{\iota_{(a,\beta)}}"]
  \arrow[rr,bend left=24,ggreen,"\ext{\lambda^{(a,\beta)}_v}"]
  \arrow[d,gblue,"\src{\varphi_{(a,\beta)}}"']
  \arrow[dr,phantom,"\circlearrowleft"]
& P_v
  \arrow[r,two heads,gred,"\tgt{p_v}"]
  \arrow[d,gblue,"\src{\Phi_v}"']
  \arrow[dr,phantom,"\circlearrowleft"]
& \ext{Q_v}
  \arrow[d,dashed,ggreen,"\ext{\overline\Phi_v}"] \\
\src{F(a)_{v'}}
  \arrow[r,gblue,"\src{\iota_{(a,\beta)}}"']
  \arrow[rr,bend right=24,ggreen,"\ext{\lambda^{(a,\beta)}_{v'}}"']
& P_{v'}
  \arrow[r,two heads,gred,"\tgt{p_{v'}}"']
& \ext{Q_{v'}}
\end{tikzcd}\]
commutes, and reading the outer rectangle of that diagram gives the computation
\[\overline\Phi_v\lambda^{(a,\beta)}_v=\overline\Phi_vp_v\iota_{(a,\beta)}=p_{v'}\Phi_v\iota_{(a,\beta)}=p_{v'}\iota_{(a,\beta)}\varphi_{(a,\beta)}=\lambda^{(a,\beta)}_{v'}\varphi_{(a,\beta)}.\]
By the uniqueness just recorded, \(\overline\Phi_v=L_{v\to v'}\), which is the description of the differential claimed in \labelcref{it:cok-ch} and of the structure map claimed in \labelcref{it:cok-pers}.
\item\label{it:ck-nat} \textbf{Naturality in the Functor and in the Target Object.} Let \(\theta:F\Rightarrow F'\) be a natural transformation of functors out of \(\A\). Its components induce
\[\theta^{\Ob}_v\coloneqq\bigoplus_{(a,\beta)\in\Ob(J\downarrow b)}(\theta_a)_v:P_v\to P'_v,\qquad\theta^{\Mor}_v\coloneqq\bigoplus_{\gamma\in\Mor(J\downarrow b)}(\theta_{a_\gamma})_v:R_v\to R'_v,\]
and \((\theta_a)_v\) is a natural transformation \(D_v\Rightarrow D'_v\) of diagrams on \(J\downarrow b\), so \labelcref{it:fc-nat} of \Cref{lem:finite-colimit} applies verbatim and yields \(r'_v\theta^{\Mor}_v=\theta^{\Ob}_vr_v\), a unique \(\overline\theta_v:Q_v\to Q'_v\) with \(\overline\theta_vp_v=p'_v\theta^{\Ob}_v\), and the identity \(\overline\theta_v\lambda^{(a,\beta)}_v=\lambda'^{(a,\beta)}_v(\theta_a)_v\) that characterises the morphism induced on colimits. For a morphism \(\beta_0:b\to b_0\) of \(\B\) whose target also has a finite comma category, so that the sums for \(b_0\) are finite, postcomposition with \(\beta_0\) is a functor
\[(\beta_0)_*:J\downarrow b\longrightarrow J\downarrow b_0,\qquad(a,\kappa)\mapsto(a,\beta_0\kappa),\qquad\bar\alpha\mapsto\bar\alpha,\]
and reindexing along it sends the summand of \(P_v\) indexed by \((a,\kappa)\) to the summand of the corresponding sum for \(b_0\) indexed by \((a,\beta_0\kappa)\), by the identity of \(F(a)_v\). The same computation as above gives commutation with the two maps \(r_v\), and so a unique map on the cokernels, and it is the morphism \((\Lan_JF)(\beta_0)\) induced on colimits. Both naturality statements are the commutation of the square
\[\begin{tikzcd}[row sep=2.9em,column sep=3.4em]
P_v
  \arrow[r,two heads,gred,"\tgt{p_v}"]
  \arrow[d,gblue,"\src{\theta^{\Ob}_v}"']
  \arrow[dr,phantom,"\circlearrowleft"]
& \ext{Q_v}
  \arrow[d,dashed,ggreen,"\ext{\overline\theta_v}"] \\
P'_v
  \arrow[r,two heads,gred,"\tgt{p'_v}"']
& \ext{Q'_v}\dd
\end{tikzcd}\]
which is what naturality of the isomorphisms of \labelcref{it:ck-reduce} in \(b\) and in \(F\) amounts to.
\end{enumerate}
\end{proof}

\phantomsection\label{prf:prop:pointwise-natural-comparison}
\begin{proof}[Proof of \Cref{prop:pointwise-natural-comparison}]
\begin{enumerate}[label=\arabic*]
\item\label{it:pn-ineq} \textbf{The Inequality.} That \(\varepsilon\geq0\) is admitted by \(\Lan_JF\) and \(G\) in \(\Fun{\B}{\Pers}\) means, by \Cref{def:functorial-discrepancy}, that there are natural transformations
\[\varphi:\Lan_JF\Rightarrow\Sigma^{\B}_\varepsilon G,\qquad\psi:G\Rightarrow\Sigma^{\B}_\varepsilon\Lan_JF\dc\]
subject to \(\Sigma^{\B}_\varepsilon\psi\circ\varphi=\tau^{\Lan_JF}_{2\varepsilon}\) and \(\Sigma^{\B}_\varepsilon\varphi\circ\psi=\tau^{G}_{2\varepsilon}\). We fix \(b\in\Ob(\B)\) and evaluate the two transformations in parallel at \(b\). Since \((\Sigma^{\B}_\varepsilon X)(b)=\Sigma_\varepsilon(X(b))\) by definition of the flow, the two components are morphisms of \(\Pers\),
\[\varphi_b:(\Lan_JF)(b)\longrightarrow\Sigma_\varepsilon G(b),\qquad\psi_b:G(b)\longrightarrow\Sigma_\varepsilon(\Lan_JF)(b)\dc\]
and evaluating the two identities at \(b\) gives, again in parallel,
\[(\Sigma^{\B}_\varepsilon\psi\circ\varphi)_b=\Sigma_\varepsilon\psi_b\circ\varphi_b=\tau^{(\Lan_JF)(b)}_{2\varepsilon},\qquad(\Sigma^{\B}_\varepsilon\varphi\circ\psi)_b=\Sigma_\varepsilon\varphi_b\circ\psi_b=\tau^{G(b)}_{2\varepsilon}\dd\]
The pair \((\varphi_b,\psi_b)\) is an \(\varepsilon\)-interleaving of \((\Lan_JF)(b)\) and \(G(b)\) in the sense of \Cref{def:interleaving}, which is the commutation of
\[\begin{tikzcd}[row sep=2.9em,column sep=3.4em]
\ext{(\Lan_JF)(b)}
  \arrow[r,ggreen,"\ext{\varphi_b}"]
  \arrow[d,ggreen,"\ext{\tau^{(\Lan_JF)(b)}_{2\varepsilon}}"']
  \arrow[dr,phantom,"\circlearrowleft"]
& \tgt{\Sigma_\varepsilon G(b)}
  \arrow[d,gred,"\tgt{\Sigma_\varepsilon\psi_b}"] \\
\ext{\Sigma_{2\varepsilon}(\Lan_JF)(b)}
  \arrow[r,equal]
& \ext{\Sigma_\varepsilon\Sigma_\varepsilon(\Lan_JF)(b)},
\end{tikzcd}\qquad
\begin{tikzcd}[row sep=2.9em,column sep=3.4em]
\tgt{G(b)}
  \arrow[r,gred,"\tgt{\psi_b}"]
  \arrow[d,gred,"\tgt{\tau^{G(b)}_{2\varepsilon}}"']
  \arrow[dr,phantom,"\circlearrowleft"]
& \ext{\Sigma_\varepsilon(\Lan_JF)(b)}
  \arrow[d,ggreen,"\ext{\Sigma_\varepsilon\varphi_b}"] \\
\tgt{\Sigma_{2\varepsilon}G(b)}
  \arrow[r,equal]
& \tgt{\Sigma_\varepsilon\Sigma_\varepsilon G(b)}.
\end{tikzcd}\]
So \(d_I\bigl((\Lan_JF)(b),G(b)\bigr)\leq\varepsilon\) for every \(b\in\Ob(\B)\) by \eqref{eq:interleaving}, so the supremum is bounded,
\[\Comp_J(F,G)=\sup_{b\in\Ob(\B)}d_I\bigl((\Lan_JF)(b),G(b)\bigr)\leq\varepsilon\dd\]
Every \(\varepsilon\) admitted in \(\Fun{\B}{\Pers}\) is an upper bound for \(\Comp_J(F,G)\), and taking the infimum over those \(\varepsilon\) gives
\[\begin{aligned}
\Comp_J(F,G)&\leq\inf\{\varepsilon\geq0:\ \Lan_JF\text{ and }G\text{ are }\varepsilon\text{-interleaved in }\Fun{\B}{\Pers}\}\\
&=d_I^{\B}(\Lan_JF,G)=\Comp^{\mathrm{nat}}_J(F,G)\dd
\end{aligned}\]
\item \textbf{Equality for Discrete Targets.} Let \(\B\) be discrete and put \(r\coloneqq\Comp_J(F,G)\). If \(r=\infty\), then \labelcref{it:pn-ineq} forces \(\Comp^{\mathrm{nat}}_J(F,G)=\infty=r\). Let \(r<\infty\) and let \(\varepsilon>r\). We fix \(b\in\Ob(\B)\) and abbreviate \(M\coloneqq(\Lan_JF)(b)\) and \(N\coloneqq G(b)\). Since \(d_I(M,N)\leq r<\varepsilon\), some \(\varepsilon'\leq\varepsilon\) admits an \(\varepsilon'\)-interleaving \((\varphi',\psi')\) of \(M\) and \(N\), and composing with transition maps, \(\varphi_{b,t}\coloneqq N_{t+\varepsilon'\leq t+\varepsilon}\circ\varphi'_t\) and \(\psi_{b,t}\coloneqq M_{t+\varepsilon'\leq t+\varepsilon}\circ\psi'_t\) define an \(\varepsilon\)-interleaving \((\varphi_b,\psi_b)\) of \(M\) and \(N\) in \(\Pers\): the morphism property of \(\psi'\) and the interleaving identity for \((\varphi',\psi')\) give
\[\begin{aligned}
(\Sigma_\varepsilon\psi_b\circ\varphi_b)_t&=\psi_{b,t+\varepsilon}\circ\varphi_{b,t}=M_{t+\varepsilon+\varepsilon'\leq t+2\varepsilon}\circ M_{t+2\varepsilon'\leq t+\varepsilon+\varepsilon'}\circ\psi'_{t+\varepsilon'}\circ\varphi'_t\\
&=M_{t+2\varepsilon'\leq t+2\varepsilon}\circ M_{t\leq t+2\varepsilon'}=(\tau^M_{2\varepsilon})_t\dc
\end{aligned}\]
at every \(t\in\mathbb R\) and the second identity holds symmetrically. The only morphisms of \(\B\) are the identities \(\operatorname{id}_b:b\to b\), and for these the naturality squares of \((\varphi_b)_{b}\) and of \((\psi_b)_{b}\) are
\[\begin{tikzcd}[row sep=2.6em,column sep=3.6em]
\ext{(\Lan_JF)(b)}
  \arrow[r,ggreen,"\ext{\varphi_b}"]
  \arrow[d,ggreen,"\ext{(\Lan_JF)(\operatorname{id}_b)=\operatorname{id}}"']
  \arrow[dr,phantom,"\circlearrowleft"]
  \arrow[out=160,in=200,loop,looseness=3.8,ggreen,"\ext{\operatorname{id}}"']
& \tgt{\Sigma_\varepsilon G(b)}
  \arrow[d,gred,"\tgt{\Sigma_\varepsilon G(\operatorname{id}_b)=\operatorname{id}}"]
  \arrow[out=20,in=-20,loop,looseness=3.8,gred,"\tgt{\operatorname{id}}"] \\
\ext{(\Lan_JF)(b)}
  \arrow[r,ggreen,"\ext{\varphi_b}"']
& \tgt{\Sigma_\varepsilon G(b)},
\end{tikzcd}\]
and the same square with \(\varphi\) replaced by \(\psi\) and the two functors interchanged. Both vertical arrows are identities, because a functor preserves identities and \(\Sigma_\varepsilon\) is a functor, so each square reads \(\varphi_b\circ\operatorname{id}=\operatorname{id}\circ\varphi_b\) and commutes. It follows that \((\varphi_b)_b:\Lan_JF\Rightarrow\Sigma^{\B}_\varepsilon G\) and \((\psi_b)_b:G\Rightarrow\Sigma^{\B}_\varepsilon\Lan_JF\) are natural transformations. Their composites are componentwise:
\[\begin{aligned}
(\Sigma^{\B}_\varepsilon\psi\circ\varphi)_b&=\Sigma_\varepsilon\psi_b\circ\varphi_b=\tau^{(\Lan_JF)(b)}_{2\varepsilon}=(\tau^{\Lan_JF}_{2\varepsilon})_b,\\
(\Sigma^{\B}_\varepsilon\varphi\circ\psi)_b&=\Sigma_\varepsilon\varphi_b\circ\psi_b=\tau^{G(b)}_{2\varepsilon}=(\tau^{G}_{2\varepsilon})_b
\end{aligned}\]
for every \(b\), which are the identities of \Cref{def:functorial-discrepancy}. So \(\Comp^{\mathrm{nat}}_J(F,G)\leq\varepsilon\) for every \(\varepsilon>r\), and letting \(\varepsilon \to r\) gives \(\Comp^{\mathrm{nat}}_J(F,G)\leq r=\Comp_J(F,G)\). With \labelcref{it:pn-ineq} they equal.
\end{enumerate}
\end{proof}

\phantomsection\label{prf:thm:lan-interleaving-stability}
\begin{proof}[Proof of \Cref{thm:lan-interleaving-stability}]
\begin{enumerate}[label=\arabic*]
\item\label{it:li-shift} \textbf{Extension Commutes with Shift.} Let \(\varepsilon\geq0\), \(b\in\Ob(\B)\) and \(t\in\mathbb R\). Colimits in \(\Pers=\Fun{(\mathbb R,\leq)}{\Vect}\) are formed scalewise, evaluation at \(t\) preserves them, and \itemref{prop:basic-task-changes}{it:forget} gives
\begin{align*}
\bigl((\Lan_J\Sigma^{\A}_\varepsilon F)(b)\bigr)_t
&\cong\Bigl(\operatorname*{colim}_{(a,\beta)\in J\downarrow b}(\Sigma^{\A}_\varepsilon F)(a)\Bigr)_t
=\operatorname*{colim}_{(a,\beta)\in J\downarrow b}\bigl((\Sigma_\varepsilon F(a))_t\bigr)\\
&=\operatorname*{colim}_{(a,\beta)\in J\downarrow b}F(a)_{t+\varepsilon}
=\Bigl(\operatorname*{colim}_{(a,\beta)\in J\downarrow b}F(a)\Bigr)_{t+\varepsilon}\\
&\cong\bigl((\Lan_JF)(b)\bigr)_{t+\varepsilon} =\bigl(\Sigma^{\B}_\varepsilon(\Lan_JF)(b)\bigr)_t\dd
\end{align*}
Every step is an identity of vector spaces except the two outer isomorphisms, which are the pointwise formula. Writing \(\lambda^{(a,\beta)}_t:F(a)_{t+\varepsilon}\to\bigl((\Lan_JF)(b)\bigr)_{t+\varepsilon}\) for the colimit legs, the composite isomorphism \(\theta_{b,t}\) is determined by \(\theta_{b,t}\circ\lambda^{(a,\beta)}_t=\lambda^{(a,\beta)}_t\), so on each leg it is the identity and only the index \(t\) is renamed to \(t+\varepsilon\). Naturality follows. For \(t\leq t'\) both composites of the naturality square are the map induced on colimits by \(F(a)_{t+\varepsilon\leq t'+\varepsilon}\), for \(\beta_0:b\to b_0\) both are the map induced by reindexing along \((\beta_0)_*\), and for \(\theta:F\Rightarrow F'\) both are the map induced by \((\theta_a)_{t+\varepsilon}\), in each case because the legs \(\lambda^{(a,\beta)}_t\) are jointly epimorphic. So \(\Lan_J\Sigma^{\A}_\varepsilon\cong\Sigma^{\B}_\varepsilon\Lan_J\) as functors \(\Fun{\A}{\Pers}\to\Fun{\B}{\Pers}\), with structure isomorphism \(\theta^{(\varepsilon)}:\Lan_J\Sigma^{\A}_\varepsilon\Rightarrow\Sigma^{\B}_\varepsilon\Lan_J\).
\item\label{it:li-lip} \textbf{Interleavings Are Preserved.} Let \((\varphi,\psi)\) be an \(\varepsilon\)-interleaving of \(F\) and \(F'\) in \(\Fun{\A}{\Pers}\), so
\[\begin{aligned}
&\varphi:F\Rightarrow\Sigma^{\A}_\varepsilon F',\qquad\psi:F'\Rightarrow\Sigma^{\A}_\varepsilon F,\\
&\Sigma^{\A}_\varepsilon\psi\circ\varphi=\tau^F_{2\varepsilon},\qquad\Sigma^{\A}_\varepsilon\varphi\circ\psi=\tau^{F'}_{2\varepsilon}\dd
\end{aligned}\]
We put \(\Phi\coloneqq\theta^{(\varepsilon)}_{F'}\circ\Lan_J\varphi\) and \(\Psi\coloneqq\theta^{(\varepsilon)}_{F}\circ\Lan_J\psi\), which are the composites
\[\begin{tikzcd}[row sep=2.7em,column sep=3.2em]
\ext{\Lan_JF}
  \arrow[r,ggreen,"\ext{\Lan_J\varphi}"]
  \arrow[rr,bend left=22,ggreen,"\ext{\Phi}"]
& \ext{\Lan_J\Sigma^{\A}_\varepsilon F'}
  \arrow[r,ggreen,"\ext{\theta^{(\varepsilon)}_{F'}}","\cong"']
& \ext{\Sigma^{\B}_\varepsilon\Lan_JF'}\dc
\end{tikzcd}\]
\[\begin{tikzcd}[row sep=2.7em,column sep=3.2em]
\ext{\Lan_JF'}
  \arrow[r,ggreen,"\ext{\Lan_J\psi}"]
  \arrow[rr,bend left=22,ggreen,"\ext{\Psi}"]
& \ext{\Lan_J\Sigma^{\A}_\varepsilon F}
  \arrow[r,ggreen,"\ext{\theta^{(\varepsilon)}_{F}}","\cong"']
& \ext{\Sigma^{\B}_\varepsilon\Lan_JF}\dd
\end{tikzcd}\]
Naturality of \(\theta^{(\varepsilon)}\) in its functor argument, applied to \(\psi:F'\Rightarrow\Sigma^{\A}_\varepsilon F\), is the identity \(\Sigma^{\B}_\varepsilon(\Lan_J\psi)\circ\theta^{(\varepsilon)}_{F'}=\theta^{(\varepsilon)}_{\Sigma^{\A}_\varepsilon F}\circ\Lan_J(\Sigma^{\A}_\varepsilon\psi)\), and \(\theta^{(2\varepsilon)}=\Sigma^{\B}_\varepsilon\theta^{(\varepsilon)}\circ\theta^{(\varepsilon)}_{\Sigma^{\A}_\varepsilon(-)}\) because both sides rename \(t\) to \(t+2\varepsilon\) on every leg. So
\begin{align*}
\Sigma^{\B}_\varepsilon\Psi\circ\Phi
&=\Sigma^{\B}_\varepsilon\theta^{(\varepsilon)}_{F}\circ\Sigma^{\B}_\varepsilon(\Lan_J\psi)\circ\theta^{(\varepsilon)}_{F'}\circ\Lan_J\varphi
=\Sigma^{\B}_\varepsilon\theta^{(\varepsilon)}_{F}\circ\theta^{(\varepsilon)}_{\Sigma^{\A}_\varepsilon F}\circ\Lan_J(\Sigma^{\A}_\varepsilon\psi)\circ\Lan_J\varphi\\
&=\theta^{(2\varepsilon)}_{F}\circ\Lan_J\bigl(\Sigma^{\A}_\varepsilon\psi\circ\varphi\bigr)
=\theta^{(2\varepsilon)}_{F}\circ\Lan_J\tau^F_{2\varepsilon}
=\tau^{\Lan_JF}_{2\varepsilon}\dc
\end{align*}
the third equality by functoriality of \(\Lan_J\) and the last because \(\Lan_J\tau^F_{2\varepsilon}\) is induced on colimits by the structure maps \(F(a)_{t\leq t+2\varepsilon}\), which \(\theta^{(2\varepsilon)}_F\) carries to the structure maps of \(\Lan_JF\). Interchanging \(F\) with \(F'\) and \(\varphi\) with \(\psi\) gives \(\Sigma^{\B}_\varepsilon\Phi\circ\Psi=\tau^{\Lan_JF'}_{2\varepsilon}\). So \((\Phi,\Psi)\) is an \(\varepsilon\)-interleaving of \(\Lan_JF\) and \(\Lan_JF'\), whence \(d_I^{\B}(\Lan_JF,\Lan_JF')\leq\varepsilon\). Every \(\varepsilon\) admitted by \(F,F'\) bounds the left-hand side, so taking the infimum over these \(\varepsilon\) gives
\[d_I^{\B}(\Lan_JF,\Lan_JF')\leq\inf\{\varepsilon\geq0:\ F\text{ and }F'\text{ are }\varepsilon\text{-interleaved}\}=d_I^{\A}(F,F')\dd\]
Read through \labelcref{it:li-shift} this is equivariance for the two flows, since \(\Lan_J\Sigma^{\A}_\varepsilon\cong\Sigma^{\B}_\varepsilon\Lan_J\) for every \(\varepsilon\geq0\), so \(\Lan_J\) is \(1\)-Lipschitz \citep[Theorem~4.2]{SMS18}.
\item \textbf{Stability of the Discrepancy.} We write \(c\coloneqq\Comp^{\mathrm{nat}}_J(F,G)=d_I^{\B}(\Lan_JF,G)\) and \(c'\coloneqq\Comp^{\mathrm{nat}}_J(F',G')\). Two applications of the triangle inequality for the extended pseudometric \(d_I^{\B}\), along the chain \(\Lan_JF\), \(\Lan_JF'\), \(G'\), \(G\), give
\[\begin{aligned}
c&\leq d_I^{\B}(\Lan_JF,\Lan_JF')+c'+d_I^{\B}(G',G)\\
&\leq d_I^{\A}(F,F')+c'+d_I^{\B}(G,G')\dc
\end{aligned}\]
the second inequality because \labelcref{it:li-lip} bounds the first summand by \(d_I^{\A}(F,F')\) and because \(d_I^{\B}\) is symmetric, so \(d_I^{\B}(G',G)=d_I^{\B}(G,G')\). Exchanging the primed with the unprimed data:
\[\begin{aligned}
c'&\leq d_I^{\B}(\Lan_JF',\Lan_JF)+c+d_I^{\B}(G,G')\\
&\leq d_I^{\A}(F',F)+c+d_I^{\B}(G',G)\\
&=d_I^{\A}(F,F')+c+d_I^{\B}(G,G')\dd
\end{aligned}\]
The first chain is \eqref{eq:lipschitz}, and it holds in \([0,\infty]\) without further hypotheses, since a sum in \([0,\infty]\) that bounds a term above stays a bound when a summand is infinite. We now assume that \(c\) and \(c'\) are both finite. Then every term of the two chains is a real number, so they may be rearranged to \(c-c'\leq d_I^{\A}(F,F')+d_I^{\B}(G,G')\) and to \(c'-c\leq d_I^{\A}(F,F')+d_I^{\B}(G,G')\), and taking the larger of the two left-hand sides bounds \(|c-c'|\) by that same quantity, which is the second claim of the theorem and so completes its proof.
\end{enumerate}
\end{proof}

\phantomsection\label{prf:cor:terminal-weighted-stability}
\begin{proof}[Proof of \Cref{cor:terminal-weighted-stability}]
The category \(\mathbf 1\) is discrete, so \Cref{prop:pointwise-natural-comparison} identifies the pointwise with the functorial discrepancy in every degree, and \(\Comp^{\mathrm{nat}}_J(F_n,G_n)=d_I((\Lan_JF_n)(\bullet),G_n)\). Abbreviate \(c_n\coloneqq d_I((\Lan_JF_n)(\bullet),G_n)\) and \(c'_n\coloneqq d_I((\Lan_JF'_n)(\bullet),G'_n)\), so that \(S=\sum_{n=0}^{q}w_nc_n\) and \(S'=\sum_{n=0}^{q}w_nc'_n\). Then
\begin{align*}
|S-S'|&=\Bigl|\sum_{n=0}^{q}w_n(c_n-c'_n)\Bigr|
\leq\sum_{n=0}^{q}\bigl|w_n(c_n-c'_n)\bigr|
=\sum_{n=0}^{q}w_n\bigl|c_n-c'_n\bigr|\\
&\leq\sum_{n=0}^{q}w_n\bigl(d_I^{\A}(F_n,F'_n)+d_I(G_n,G'_n)\bigr)\dc
\end{align*}
the first inequality by the triangle inequality for the absolute value and the following equality because \(w_n\geq0\). For the last inequality fix \(n\). If \(w_n=0\), then \(w_n|c_n-c'_n|=0\) by the convention \(0\cdot\infty\coloneqq0\), whatever \(c_n\) and \(c'_n\) are, and the bound is trivial. If \(w_n>0\), then \(w_nc_n\leq S<\infty\) and \(w_nc'_n\leq S'<\infty\), because all summands are nonnegative, so \(c_n\) and \(c'_n\) are finite and \Cref{thm:lan-interleaving-stability} bounds \(|c_n-c'_n|\) by \(d_I^{\A}(F_n,F'_n)+d_I(G_n,G'_n)\). Summing the resulting degreewise bounds over all \(n\) gives the claimed inequality \eqref{eq:weighted-stability}.
\end{proof}

\newpage
\section{Proofs of the Algorithmic Transfer Score}\label{app:score}

\phantomsection\label{prf:lem:finite-type-closure}
\begin{proof}[Proof of \Cref{lem:finite-type-closure}]
\begin{enumerate}[label=\arabic*,ref=\arabic*]
\item\label{it:ft-crit} \textbf{The Critical Set.} By \citep[Definition~4.1]{bubenik2014categorification} each \(F(a)\) is a finite direct sum of interval modules \(\K_I\), where \(\K_I\) has value \(\K\) at the points of the interval \(I\subseteq\mathbb R\), value \(0\) elsewhere, and identities as structure maps inside \(I\). We write \(\operatorname{crit}F(a)\subseteq\mathbb R\) for the set of real endpoints of the intervals of such a decomposition, a finite set, and put
\[S\coloneqq\bigcup_{(a,\beta)\in\Ob(J\downarrow b)}\operatorname{crit}F(a)\dc\]
finite because \(\Ob(J\downarrow b)\) is finite. Let \(s\leq t\) with \([s,t]\cap S=\varnothing\). An interval \(I\) with no endpoint in \([s,t]\) contains both of \(s,t\) or neither, so \((\K_I)_{s\leq t}\) is an identity of \(\K\) or of \(0\), in both cases an isomorphism, and a finite direct sum of isomorphisms is an isomorphism, with inverse the direct sum of the inverses. The structure map \(F(a)_{s\leq t}\) of \(F(a)\) is then an isomorphism for every object \((a,\beta)\) of the finite comma category \(J\downarrow b\).
\item\label{it:ft-ladder} \textbf{The Transition Map Between Regular Scales.} We fix \(s\leq t\) with \([s,t]\cap S=\varnothing\). At a scale \(u\in\{s,t\}\) we take the data of \eqref{eq:cokernel} for \(F\) at \(b\): the direct sums \(R_u=\bigoplus_{\gamma\in\Mor(J\downarrow b)}F(a_\gamma)_u\) and \(P_u=\bigoplus_{(a,\beta)\in\Ob(J\downarrow b)}F(a)_u\), whose summand of \(P_u\) indexed by \((a,\beta)\) is \(F(a)_u\), the map \(r_u\), and the quotient \(p_u:P_u\to Q_u=\coker r_u\). Between the two scales sit
\[\Psi_{s,t}\coloneqq\bigoplus_{\gamma\in\Mor(J\downarrow b)}F(a_\gamma)_{s\leq t}:R_s\to R_t,\qquad\Phi_{s,t}\coloneqq\bigoplus_{(a,\beta)\in\Ob(J\downarrow b)}F(a)_{s\leq t}:P_s\to P_t\dc\]
both isomorphisms by \labelcref{it:ft-crit}. All of the data, the Kan extension included, form the ladder
\[\begin{tikzcd}[row sep=3.2em,column sep=2.7em]
R_s
  \arrow[r,gblue,"\src{r_s}"]
  \arrow[d,gblue,"\src{\Psi_{s,t}}"',"\cong"]
& P_s
  \arrow[r,two heads,gred,"\tgt{p_s}"]
  \arrow[d,gblue,"\src{\Phi_{s,t}}"',"\cong"]
  \arrow[dl,phantom,"\circlearrowleft"]
& \ext{Q_s}
  \arrow[r,ggreen,"\cong"]
  \arrow[d,dashed,ggreen,"\ext{\overline\Phi_{s,t}}"']
  \arrow[dl,phantom,"\circlearrowleft"]
& \ext{(\Lan_JF)(b)_s}
  \arrow[d,ggreen,"\ext{L_{s\leq t}}"]
  \arrow[dl,phantom,"\circlearrowleft"] \\
R_t
  \arrow[r,gblue,"\src{r_t}"']
& P_t
  \arrow[r,two heads,gred,"\tgt{p_t}"']
& \ext{Q_t}
  \arrow[r,ggreen,"\cong"']
& \ext{(\Lan_JF)(b)_t}\dd
\end{tikzcd}\]
Here \(L_{s\leq t}\) abbreviates the structure map \((\Lan_JF)(b)_{s\leq t}\). The left square commutes and \(\overline\Phi_{s,t}\) exists with \(\overline\Phi_{s,t}p_s=p_t\Phi_{s,t}\) by part \labelcref{it:ck-descend} of the proof of \Cref{prop:cokernel}, and the right square commutes because \itemref{prop:cokernel}{it:cok-pers} identifies \(\overline\Phi_{s,t}\) with the structure map \((\Lan_JF)(b)_{s\leq t}\) under the horizontal isomorphisms. From \(\Phi_{s,t}r_s=r_t\Psi_{s,t}\) and the invertibility of \(\Psi_{s,t},\Phi_{s,t}\),
\[\operatorname{im}r_t=r_t\Psi_{s,t}(R_s)=\Phi_{s,t}(\operatorname{im}r_s)\dc\]
so \(p_s\Phi_{s,t}^{-1}\) vanishes on \(\operatorname{im}r_t=\ker p_t\) and factors uniquely as \(p_s\Phi_{s,t}^{-1}=\overline{\Phi{}^{-1}}_{s,t}\,p_t\) with \(\overline{\Phi{}^{-1}}_{s,t}:Q_t\to Q_s\). Then
\(\overline{\Phi{}^{-1}}_{s,t}\,\overline\Phi_{s,t}\,p_s=\overline{\Phi{}^{-1}}_{s,t}\,p_t\Phi_{s,t}=p_s\Phi_{s,t}^{-1}\Phi_{s,t}=p_s\dc\)
and \(p_s\) is an epimorphism, so \(\overline{\Phi{}^{-1}}_{s,t}\,\overline\Phi_{s,t}=\operatorname{id}_{Q_s}\), and symmetrically \(\overline\Phi_{s,t}\,\overline{\Phi{}^{-1}}_{s,t}=\operatorname{id}_{Q_t}\). So \(\overline\Phi_{s,t}\) is an isomorphism, and with it \((\Lan_JF)(b)_{s\leq t}\).
\item\label{it:ft-decomp} \textbf{Finite Type.} We prove the general claim of the statement, for \(M\in\Pers\) pointwise finite-dimensional with \(M_{s\leq t}\) an isomorphism whenever \([s,t]\cap S=\varnothing\). By \citet{CrawleyBoevey2015} every pointwise finite-dimensional persistence module is a direct sum of interval modules, say \(M\cong\bigoplus_{j\in\Lambda}\K_{I_j}\), and the structure map \(M_{s\leq t}\) is the direct sum of the maps \((\K_{I_j})_{s\leq t}\), so it is an isomorphism iff every summand map is one. Suppose some \(I_j\) had a real endpoint \(u\notin S\). Since \(S\) is finite there is \(\delta>0\) with \([u-\delta,u+\delta]\cap S=\varnothing\). If \(u\in I_j\), one of the maps \((\K_{I_j})_{u-\delta\leq u}\) and \((\K_{I_j})_{u\leq u+\delta}\) has one scale inside \(I_j\) and one outside, and so is not an isomorphism, and if \(u\notin I_j\), then \(I_j\) contains points strictly between \(u\) and one of \(u\pm\delta\), and the map from or to the scale \(u\) on that side is not an isomorphism. Both cases contradict the hypothesis, so every endpoint of every \(I_j\) lies in \(S\cup\{\pm\infty\}\). There are finitely many intervals with endpoints in the finite set \(S\cup\{\pm\infty\}\) up to the four boundary types, and each occurs with finite multiplicity, since at any point of a nonempty interval the multiplicity is bounded by the finite dimension of the value of \(M\) there. So \(\Lambda\) is finite and \(M\) is of finite type in the sense of \citep[Definition~4.1]{bubenik2014categorification}, interval decomposable with a finite barcode. For \(M=(\Lan_JF)(b)\) the hypotheses hold: every value \(Q_u\) is a quotient of the finite-dimensional space \(P_u\), and \labelcref{it:ft-ladder} gives isomorphisms off \(S\).
\end{enumerate}
\end{proof}

\phantomsection\label{prf:prop:correctness}
\begin{proof}[Proof of \Cref{prop:correctness}]
We fix \(n\) with \(0\leq n\leq q\) and fix \(b\in\Ob(\B)\).
\begin{enumerate}[label=\arabic*]
\item \textbf{Existence and Pointwise Form.} The category \(\Pers\) is cocomplete and \(J\downarrow b\) is small, so \(\Lan_JF_n\) exists and \((\Lan_JF_n)(b)\cong\operatorname*{colim}_{J\downarrow b}F_n\pi_b\) by \itemref{prop:basic-task-changes}{it:forget}.
\item \textbf{The Computed Object.} Direct sums in \(\Pers\) are formed scalewise, so evaluating the objects of lines~16 to~18 of \Cref{alg:score} at a scale \(u\in\mathbb R\) gives the data of \eqref{eq:cokernel} for \(F_n\) at \(b\): the space \(P_u=\bigoplus_{(a,\beta)\in\Ob(J\downarrow b)}F_n(a)_u\) with its injections \((\iota_{(a,\beta)})_u\), the space \(R_u=\bigoplus_{\bar\alpha\in\Mor(J\downarrow b)}F_n(a_{\bar\alpha})_u\) with its injections \((\kappa_{\bar\alpha})_u\), and the map \(r_u\), the defining identity of line~18 evaluated at \(u\) being the defining identity of \(r_u\). Line~19 forms \(M_u=\coker r_u=Q_u\) at every scale, with the transition maps induced by \(\bigoplus_{(a,\beta)}F_n(a)_{s\leq t}\). Since \(J\downarrow b\) is finite, \itemref{prop:cokernel}{it:cok-pers} supplies isomorphisms \(Q_u\cong((\Lan_JF_n)(b))_u\) for every \(u\) that carry these transition maps to the structure maps of \((\Lan_JF_n)(b)\), so they assemble into an isomorphism \(M\cong(\Lan_JF_n)(b)\) in \(\Pers\).
\item \textbf{Evaluation of the Distance.} A finite filtration has finitely many simplices, so each \(F_n(a)\) and each \(G_n(b)\) is pointwise finite-dimensional and its structure maps are isomorphisms whenever \([s,t]\) misses the finite set of filtration values of the simplices, and so of finite type by the second sentence of \Cref{lem:finite-type-closure}, and by its first sentence so is \((\Lan_JF_n)(b)\), interval decomposable with a finite barcode. On modules of finite type the interleaving and the bottleneck distance coincide, by the isometric embedding of finite barcodes into persistence modules \citep[Theorem~4.16]{bubenik2014categorification}, and the same identity holds for \(q\)-tame modules \citep[Theorem~4.11]{CSGO16}. So \(d_{n,B}(b)=d_B(\Dgm(\Lan_JF_n)(b),\Dgm G_n(b))=d_I((\Lan_JF_n)(b),G_n(b))\).
\item \textbf{Aggregation.} Line~22 of \Cref{alg:score} accumulates \(S_b=\sum_{n=0}^{q}w_nd_{n,B}(b)\) and line~24 takes the maximum over the finitely many objects of \(\B\), which for a finite index set is the supremum in \eqref{eq:weighted-score}. Combining the three previous steps, the returned value is \(\Comp^{d_B}_{J,q}((F_n)_n,(G_n)_n)\), and by step 3 this equals \(\Comp^{d_I}_{J,q}((F_n)_n,(G_n)_n)\).
\end{enumerate}
\end{proof}

\newpage
\section{Proofs of Transfer Detection in Latent Space}\label{app:experiments}

\phantomsection\label{prf:prop:separated-vr-merge}
\begin{proof}[Proof of \Cref{prop:separated-vr-merge}]
\begin{enumerate}[label=\arabic*,ref=\arabic*]
\item\label{it:sm-frozen} \textbf{The Frozen Complexes.} There are \(k+1\) frozen filtrations in play, one for \(P\) and one for each part. For \(t<0\) all of them are empty, and for \(t\geq0\), with \(u\coloneqq\min\{t,T\}\),
\[\VR^T_t(P)=\VR_u(P),\qquad\VR^T_t(P_i)=\VR_u(P_i)\qquad(1\leq i\leq k)\dd\]
\item\label{it:sm-split} \textbf{Splitting of the Complex.} Let \(0\leq u\leq T\) and let \(\sigma\in\VR_u(P)\), that is, \(\varnothing\neq\sigma\subseteq P\) and \(d(x,y)\leq u\) for all \(x,y\in\sigma\). If there were \(x\in\sigma\cap P_i\) and \(y\in\sigma\cap P_j\) with \(i\neq j\), then \(d(x,y)\geq\dist(P_i,P_j)>T\geq u\), a contradiction. It follows that \(\sigma\subseteq P_i\) for a unique \(i\), and since \(P_i\) carries the induced metric, \(\sigma\in\VR_u(P_i)\). Conversely every simplex of every \(\VR_u(P_i)\) is a simplex of \(\VR_u(P)\). The vertex sets \(P_1,\ldots,P_k\) are pairwise disjoint, so
\[\VR_u(P)=\bigsqcup_{i=1}^{k}\VR_u(P_i)\dc\]
the union of \(k\) pairwise vertex-disjoint subcomplexes, and by \labelcref{it:sm-frozen} the same splitting holds for the frozen complexes \(\VR^T_t(P)\) at every scale \(t\in\mathbb R\), and not only for \(t\leq T\).
\item\label{it:sm-diagram} \textbf{Compatibility and Passage to Homology.} For \(s\leq t\) the structure map of the frozen filtration is the inclusion \(\VR^T_s(P)\subseteq\VR^T_t(P)\), and by \labelcref{it:sm-split} it carries each part \(\VR^T_s(P_i)\) into \(\VR^T_t(P_i)\), so under the splitting it is the disjoint union of the inclusions of the parts. The simplices of a disjoint union of complexes are partitioned by the parts, so the simplicial chain complex splits as \(C_\bullet(\bigsqcup_iK_i;\K)=\bigoplus_iC_\bullet(K_i;\K)\), compatibly with the boundary maps and with inclusions of complexes, and the homology of a finite direct sum of chain complexes is the direct sum of the homologies of the summands in each degree.
\[\begin{tikzcd}[row sep=3.2em,column sep=2.6em]
\src{\bigsqcup_{i=1}^{k}\VR^T_s(P_i)}
  \arrow[r,equal]
  \arrow[d,gblue,"\src{\bigsqcup_i(\subseteq)}"']
& \tgt{\VR^T_s(P)}
  \arrow[r,mapsto,"H_n(-;\K)"]
  \arrow[d,gred,"\tgt{\subseteq}"]
  \arrow[dl,phantom,"\circlearrowleft"]
& \tgt{\PH^T_n(P)_s}
  \arrow[d,gred,"\tgt{\PH^T_n(P)_{s\leq t}}"']
& \ext{\bigoplus_{i=1}^{k}\PH^T_n(P_i)_s}
  \arrow[l,ggreen,"\cong"']
  \arrow[d,ggreen,"\ext{\bigoplus_i\PH^T_n(P_i)_{s\leq t}}"]
  \arrow[dl,phantom,"\circlearrowleft"] \\
\src{\bigsqcup_{i=1}^{k}\VR^T_t(P_i)}
  \arrow[r,equal]
& \tgt{\VR^T_t(P)}
  \arrow[r,mapsto,"H_n(-;\K)"']
& \tgt{\PH^T_n(P)_t}
& \ext{\bigoplus_{i=1}^{k}\PH^T_n(P_i)_t}
  \arrow[l,ggreen,"\cong"],
\end{tikzcd}\]
in which the left square commutes by \labelcref{it:sm-split}, the horizontal isomorphisms are induced by the inclusions of the parts, and the right square commutes because the splitting of chains is natural in inclusions of complexes. Reading the right square at every pair \(s\leq t\) gives \(\PH^T_n(P)\cong\bigoplus_{i=1}^{k}\PH^T_n(P_i)\) in \(\Pers\) for every \(n\in\mathbb N\).
\item\label{it:sm-kan} \textbf{Identification with the Extension.} The category \(\A\) is discrete on \(\{a_1,\ldots,a_k\}\), so \itemref{prop:basic-task-changes}{it:merge} gives \((\Lan_JF^T_n)(\bullet)\cong\bigsqcup_{i=1}^{k}F^T_n(a_i)=\bigoplus_{i=1}^{k}\PH^T_n(P_i)\), the coproduct in \(\Pers\) being the scalewise direct sum, and \labelcref{it:sm-diagram} identifies this with \(G^T_n(\bullet)=\PH^T_n(P)\).
\end{enumerate}
\end{proof}

\phantomsection\label{prf:cor:separated-merge-robustness}
\begin{proof}[Proof of \Cref{cor:separated-merge-robustness}]
We fix \(i\neq j\), let \(x'\in P'_i\) and \(y'\in P'_j\). Since \(d_H(P_i,P'_i)\leq\delta\) there is \(x\in P_i\) with \(d(x,x')\leq\delta\), and likewise there is \(y\in P_j\) with \(d(y,y')\leq\delta\). The triangle inequality gives
\[d(x',y')\geq d(x,y)-d(x,x')-d(y,y')>(T+2\delta)-\delta-\delta=T.\]
The sets \(P'_i\) and \(P'_j\) are finite and nonempty, so the infimum defining \(\dist(P'_i,P'_j)\) is attained, and \(\dist(P'_i,P'_j)>T\) for all \(i\neq j\). \Cref{prop:separated-vr-merge} applies to \(P'_1,\ldots,P'_k\) and gives \((\Lan_JF'^T_n)(\bullet)\cong G'^T_n(\bullet)\) for every \(n\in\mathbb N\). The interleaving distance vanishes on isomorphic modules, so every summand of the weighted sum is zero, whatever the weights.

The margin cannot be dropped. For \(0<\delta<T/3\) the sets \(P_1=\{0\}\), \(P_2=\{T+\delta\}\) and \(P'_1=\{0\}\), \(P'_2=\{T-\delta\}\) on the real line satisfy \(d_H(P_i,P'_i)\leq\delta\) and \(\dist(P_1,P_2)>T\), yet \(P'_1\cup P'_2\) is connected at the scale \(T\) and carries one essential degree-zero interval where the extension predicts two, so the distance is \(\infty\). What fails is the hypothesis of \Cref{prop:separated-vr-merge} for the perturbed sets, and the margin \(2\delta\) is what the corollary adds in order to protect it.
\end{proof}

\phantomsection\label{prf:prop:cover-gluing}
\begin{proof}[Proof of \Cref{prop:cover-gluing}]
For \(t\in\mathbb R\) write \(K\coloneqq\VR^T_t(P)\), \(K_1\coloneqq\VR^T_t(U_1)\), \(K_2\coloneqq\VR^T_t(U_2)\) and \(K_{12}\coloneqq\VR^T_t(U_{12})\), all empty for \(t<0\).
\begin{enumerate}[label=\arabic*,ref=\arabic*]
\item\label{it:cg-union} \textbf{The Union Complex.} Let \(t\geq0\) and \(u\coloneqq\min\{t,T\}\). A simplex \(\sigma\in K\) meeting both \(U_1\setminus U_2\) and \(U_2\setminus U_1\) would contain points \(x,y\) with \(d(x,y)>T\geq u\), which is impossible, so \(\sigma\) misses one of the two complements, whence \(\sigma\subseteq U_1\) or \(\sigma\subseteq U_2\) and, the induced metrics being restrictions, \(\sigma\in K_1\) or \(\sigma\in K_2\). Conversely \(K_1\cup K_2\subseteq K\). A simplex lies in \(K_1\cap K_2\) iff it is contained in both \(U_1\) and \(U_2\) and has diameter at most \(u\), that is, iff it lies in \(K_{12}\). So
\(K=K_1\cup K_2, K_1\cap K_2=K_{12}\dc\)
and all four of these complexes grow by inclusions as the scale \(t\) increases.
\item\label{it:cg-pi0} \textbf{Components Form a Pushout of Sets.} We write \(\pi_0(K)\) for the set of connected components of a simplicial complex \(K\), and \(\pi_0(\subseteq)\) for the map an inclusion induces on it. Let \(K_1,K_2\) be any simplicial complexes, \(K\coloneqq K_1\cup K_2\) the complex whose simplices are those of \(K_1\) together with those of \(K_2\), and \(K_{12}\coloneqq K_1\cap K_2\). The inclusions induce
\[\begin{tikzcd}[row sep=2.6em,column sep=3.2em]
\src{\pi_0(K_{12})}
  \arrow[r,gblue,"\src{\pi_0(\subseteq)}"]
  \arrow[d,gblue,"\src{\pi_0(\subseteq)}"']
  \arrow[dr,phantom,"\circlearrowleft"]
& \ext{\pi_0(K_1)}
  \arrow[d,ggreen,"\ext{\pi_0(\subseteq)}"] \\
\ext{\pi_0(K_2)}
  \arrow[r,ggreen,"\ext{\pi_0(\subseteq)}"']
& \tgt{\pi_0(K)}\dc
\end{tikzcd}\]
and we claim it is a pushout in the category of sets. It commutes, both composites sending the component of a vertex of \(K_{12}\) to its component in \(K\). Let \(f_1:\pi_0(K_1)\to X\) and \(f_2:\pi_0(K_2)\to X\) satisfy \(f_1\circ\pi_0(\subseteq)=f_2\circ\pi_0(\subseteq)\) on \(\pi_0(K_{12})\). Every vertex of \(K\) is a vertex of \(K_1\) or of \(K_2\), so for \(c\in\pi_0(K)\) choose a vertex \(x\in c\) and an index \(i\) with \(x\in K_i\), and put \(f(c)\coloneqq f_i([x]_{K_i})\). This is well defined. First, if \(x\) lies in both complexes, then \(\{x\}\in K_{12}\) and the compatibility of \(f_1,f_2\) over \(\pi_0(K_{12})\) gives \(f_1([x]_{K_1})=f_2([x]_{K_2})\), so the choice of \(i\) is immaterial. Second, let \(y\in c\) be another vertex and let \(x=z_0,z_1,\ldots,z_m=y\) be an edge path in \(K\), so that each edge \(\{z_{j-1},z_j\}\) lies in \(K_1\) or in \(K_2\), because \(K=K_1\cup K_2\). Along an edge of \(K_i\) the class \([z]_{K_i}\) does not change, and at a vertex where the path switches between \(K_1\) and \(K_2\) the first observation applies to that vertex. So \(f(c)\) is independent of all choices, satisfies \(f\circ\pi_0(\subseteq)=f_i\) on \(\pi_0(K_i)\) by construction, and is unique with this property because every component of \(K\) contains a vertex of \(K_1\) or of \(K_2\).
\item\label{it:cg-h0} \textbf{Degree Zero Is Free on the Components.} We write \(\K[S]\) for the \(\K\)-vector space with basis a set \(S\). For any simplicial complex \(K\) the space \(C_0(K;\K)\) has basis the vertices, \(\operatorname{im}\partial_1\) is spanned by the differences \(y-x\) over the edges \(\{x,y\}\), and two vertices have equal classes in the quotient iff an edge path joins them, so classes of components form a basis,
\[H_0(K;\K)\cong\K[\pi_0(K)]\dc\]
naturally in inclusions of complexes. The free functor \(\K[-]:\mathbf{Set}\to\Vect\) is left adjoint to the forgetful functor, and a left adjoint preserves colimits \citep[Chapter~V]{MacLane1998}, in particular pushouts. Applying \(\K[-]\) to the square of \labelcref{it:cg-pi0} exhibits \(H_0(K;\K)\) as the pushout of \(H_0(K_1;\K)\leftarrow H_0(K_{12};\K)\rightarrow H_0(K_2;\K)\) in \(\Vect\).
\item\label{it:cg-assemble} \textbf{Scalewise Assembly.} The objects of \(J\downarrow\bullet\) are the pairs \((a,\operatorname{id}_\bullet)\) with \(a\in\Ob(\A)\), its morphisms are those of \(\A\) and no others, the condition over \(\mathbf 1\) being vacuous, so \((a,\operatorname{id}_\bullet)\mapsto a\) is an isomorphism \(J\downarrow\bullet\cong\A\) compatible with \(\pi_\bullet\). By \itemref{prop:basic-task-changes}{it:forget} the value \((\Lan_JF^T_0)(\bullet)\) is the colimit of the span \(F^T_0\), which is its pushout, and colimits in \(\Pers\) are formed scalewise. At the scale \(t\) the span has the value \(H_0(K_1;\K)\leftarrow H_0(K_{12};\K)\rightarrow H_0(K_2;\K)\) with maps induced by the inclusions, so \labelcref{it:cg-union} to \labelcref{it:cg-h0} identify the scalewise pushout with \(H_0(K;\K)=\PH^T_0(P)_t\). For \(s\leq t\) the square
\[\begin{tikzcd}[row sep=2.6em,column sep=4.6em]
\ext{\bigl((\Lan_JF^T_0)(\bullet)\bigr)_s}
  \arrow[r,ggreen,"\cong"]
  \arrow[d,ggreen,"\ext{(\Lan_JF^T_0)(\bullet)_{s\leq t}}"']
  \arrow[dr,phantom,"\circlearrowleft"]
& \tgt{\PH^T_0(P)_s}
  \arrow[d,gred,"\tgt{\PH^T_0(P)_{s\leq t}}"] \\
\ext{\bigl((\Lan_JF^T_0)(\bullet)\bigr)_t}
  \arrow[r,ggreen,"\cong"']
& \tgt{\PH^T_0(P)_t}\dd
\end{tikzcd}\]
commutes, because under \labelcref{it:cg-h0} both composites send the class of a vertex of \(K_1\) or of \(K_2\) at the scale \(s\) to the class of the same vertex at the scale \(t\), and these classes span. The scalewise isomorphisms gives an isomorphism \((\Lan_JF^T_0)(\bullet)\cong\PH^T_0(P)\) in \(\Pers\).
\item\label{it:cg-coproduct} \textbf{The Coproduct Fails Infinitely.} The subcategory \(\A_0\) is discrete on \(\{u_1,u_2\}\), so \itemref{prop:basic-task-changes}{it:merge} gives \((\Lan_{J_0}F^T_0|_{\A_0})(\bullet)\cong\PH^T_0(U_1)\oplus\PH^T_0(U_2)\). We assume \(U_{12}\neq\varnothing\) and fix \(t\geq0\). A component of \(K_{12}\) has two images, one in \(\pi_0(K_1)\) and one in \(\pi_0(K_2)\), which are distinct elements of \(\pi_0(K_1)\sqcup\pi_0(K_2)\) with equal image in \(\pi_0(K)\), and every element of \(\pi_0(K)\) is such an image, since every vertex of \(K\) lies in \(K_1\) or \(K_2\). So, with \labelcref{it:cg-h0},
\[\dim_\K H_0(K;\K)=|\pi_0(K)|\leq|\pi_0(K_1)|+|\pi_0(K_2)|-1<\dim_\K\bigl(H_0(K_1;\K)\oplus H_0(K_2;\K)\bigr)\dd\]
We write \(M\coloneqq\PH^T_0(U_1)\oplus\PH^T_0(U_2)\) and \(N\coloneqq\PH^T_0(P)\) and suppose \((\varphi,\psi)\) were an \(\varepsilon\)-interleaving of \(M\) and \(N\) for some \(\varepsilon\geq0\). For \(t\geq T\) the frozen complexes are constant, so all structure maps of \(M\) and \(N\) at scales \(\geq T\) are identities, and the interleaving identities read \(\psi_{t+\varepsilon}\circ\varphi_t=\operatorname{id}_{M_t}\) and \(\varphi_{t+\varepsilon}\circ\psi_t=\operatorname{id}_{N_t}\). So \(\varphi_t\) is injective and \(\varphi_{t+\varepsilon}\) is surjective for every \(t\geq T\), so \(\varphi_t\) is a bijection for \(t\geq T+\varepsilon\), contradicting the strict inequality of the finite dimensions. No \(\varepsilon\) admits an interleaving, and \(d_I(M,N)=\infty\) by \eqref{eq:interleaving}.
\end{enumerate}
\end{proof}

\newpage
\section{Details of the Experiments}\label{app:details}

\paragraph{Sampled Manifolds.}
All clouds live in \(\mathbb R^3\) with the Euclidean metric, one draw per seed and seeds \(0,\ldots,19\). Four sampling laws are used in the three experiments, and every one of them is also drawn in \Cref{fig:geometry} together with the signature that it realises.
\begin{enumerate}[label=\arabic*,ref=\arabic*]
\item\label{it:law-circle} \textbf{Circle of Radius \(1\).} With \(m=120\) points, put \(\vartheta_k\coloneqq2\pi k/m+\xi_k\) for \(k=0,\ldots,m-1\), where the \(\xi_k\sim\mathcal N(0,0.02^2)\) are independent, so the angles are equispaced and then jittered. The sample is \(p_k\coloneqq(\cos\vartheta_k,\sin\vartheta_k,0)+\zeta_k\) with independent \(\zeta_k\sim\mathcal N(0,0.012^2I_3)\).
\item\label{it:law-torus} \textbf{Torus of Radii \((R,r)=(1.20,0.45)\).} With \(m=800\) points, draw the meridian angle \(u_k\) on \([0,2\pi)\) by rejection against the density \((1+(r/R)\cos u)/2\pi\), which makes the sample uniform for the area measure and not for the angles, draw \(v_k\sim\mathcal U[0,2\pi)\) independently, and put \(p_k\coloneqq((R+r\cos u_k)\cos v_k,(R+r\cos u_k)\sin v_k,r\sin u_k)+\zeta_k\) with \(\zeta_k\sim\mathcal N(0,0.012^2I_3)\).
\item\label{it:law-sphere} \textbf{Sphere of Radius \(0.75\).} With \(m=350\) points, draw \(g_k\sim\mathcal N(0,I_3)\) and put \(p_k\coloneqq0.75\,g_k/\lVert g_k\rVert+\zeta_k\) with \(\zeta_k\sim\mathcal N(0,0.012^2I_3)\), which is uniform on the round sphere.
\item\label{it:law-ball} \textbf{Ball of Radius \(1.20\).} Draw \(g_k\sim\mathcal N(0,I_3)\) and \(s_k\sim\mathcal U[0,1]\) independently and put \(p_k\coloneqq1.20\,s_k^{1/3}g_k/\lVert g_k\rVert\), which is uniform on the closed ball of that radius.
\end{enumerate}

\begin{figure}[h!]
\centering
\includegraphics[width=\textwidth]{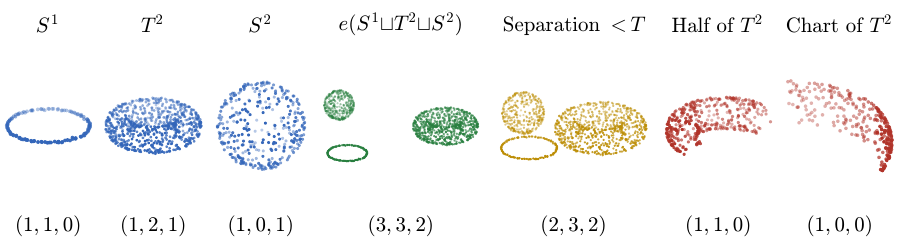}
\caption{The sampled manifolds, drawn in one orthographic projection with far points lightened, and below each panel the essential Betti signature \((\beta_0,\beta_1,\beta_2)\) it realises at \(T=0.65\). The first three panels are the sources of Experiment~1, in \src{blue}. The fourth is the intended target, in \ext{green}, whose three components sit on a triangle of side \(6.0\), and the fifth the control that moves them to side \(3.10\), in \cmp{yellow}, where two components fuse and one essential degree-zero class is lost. The last two panels, in \tgt{red}, are the two partitions of the torus used as controls in Experiment~2, a longitudinal half, which is a cylinder, and a chart, which is contractible.}
\label{fig:geometry}
\end{figure}

In Experiment~1 the circle, the torus and the sphere are placed at the vertices of an equilateral triangle of side \(6.0\) in the plane \(z=0\). The realised minimum distance between two components then exceeds \(T=0.65\) by at least \(2.563\) over the \(20\) seeds, which is the premise of \Cref{prop:separated-vr-merge}, and the margin is smallest for the control that widens the torus. In the five controls one component or one distance is replaced: the torus by a torus of radii \((1.35,0.40)\), the sphere by a sphere of radius \(0.85\) at the same point count, the torus by a sphere of radius \(1.20\), the torus by law \labelcref{it:law-ball}, and the triangle side by \(3.10\), where the realised margin lies between \(-0.236\) and \(-0.174\) and the premise fails on every seed. Experiment~2 keeps the two coarse manifolds and varies only the cloud at the fine object \(d_1\): an independent draw of the same law, the longitudinal half \(\{v<\pi\}\) of the torus at \(400\) points, the geodesic chart \(\{|u|<0.9,\ |v|<0.9\}\) at \(266\) points, and a sphere, which is no partition of the torus at all but the manifold of the wrong coarse class \(c_2\), the last control row of \Cref{tab:synthetic}.

\paragraph{Invariants and Distances.}
In all three experiments we use the frozen Vietoris--Rips filtration of \Cref{def:frozen-vr} at \(T=0.65\), simplices up to dimension \(3\), the filtration value of a simplex being its longest edge, and coefficients in \(\K=\mathbb F_2\). Experiments 1 and 2 score the degrees \(n\leq q=2\) with the weights \(w_0=0.25\) and \(w_1=w_2=1\), but Experiment 3 reads degree zero alone. The weight in degree zero is smaller because the number of components is already pinned by the separation premise, but degrees one and two carry the part of the signature that the controls are built to move. A class alive at \(T\) yields an interval with death coordinate \(+\infty\), and such essential intervals are kept in every score, matched by sorted birth coordinate, so in particular two diagrams with different numbers of essential intervals are at distance \(+\infty\). This is the source of every \(\infty\) in \Cref{tab:synthetic}, and \Cref{fig:diagrams} also shows, for each control in turn, the missing essential markers that produce that value.

\paragraph{The Objectwise Baseline.}
The baseline \(\ObjBase^{T}_{J,q}\) of \Cref{sec:experiments} averages the bottleneck distance of the source invariants against the observed target, but never transports anything along \(J\). It is the objectwise comparison that \Cref{prop:structure} shows to be blind to the morphisms of the source diagram. In Experiment~1, for example, no single source carries the essential classes of the union, and in Experiment~2 no single coarse class carries those of the wrong fine class, so the baseline is \(\infty\) on all ten rows of \Cref{tab:synthetic} at once and fails to order the ten rows of that table in any way at all, which is just what \ref{hyp:induced} asked for.

\par\medskip\noindent\begin{minipage}{\textwidth}\centering
\input{figures/latent.tex}

\smallskip
{\lgdot{gblue}~\(U_1\setminus U_2\)\quad\lgdot{ggreen}~\(U_{12}\)\quad\lgdot{gred}~\(U_2\setminus U_1\)\quad\lgline{gyellow}~pair within \(T\)}
\captionof{figure}{Latent clouds of the three encoders at one seed, projected from \(\mathbb R^3\) at a common scale, the two patch complements in \src{blue} and \tgt{red}, the overlap \(U_{12}\) in \ext{green}. A \cmp{yellow} segment joins each pair of the two complements that lies within \(T\), so the gluing hypothesis of \Cref{prop:cover-gluing} holds for a panel iff it carries no such segment, and the bar in the first panel shows the length \(T\) at the scale of the drawing. The first two encoders leave the two complements adjacent, whereas the third one holds them apart, and \Cref{tab:experiment-3} reports the rates over all \(20\) seeds.}
\label{fig:latent}
\end{minipage}\par

\par\medskip\noindent\begin{minipage}{\textwidth}\centering
\input{figures/persistence.tex}

\smallskip
{\lgDot{gblue}~\((\Lan_JF^T_n)(b)\)\quad\lgDot{gred}~\(G^T_n(b)\)\quad\lgpair~matched pair\quad\lgDot{black!70}~\(H_0\)\quad\lgTri{black!70}~\(H_1\)\quad\lgSq{black!70}~\(H_2\)}
\captionof{figure}{Predicted against observed persistence diagrams at seed \(0\): the extension \((\Lan_JF^T_n)(b)\) in \src{blue}, the observed \(G^T_n(b)\) in \tgt{red}, the degree in the shape of the marker. Finite classes lie below the dashed rule, one marker per cell of a grid of width \(0.006\), growing with the count in its cell, the \tgt{red} ones smaller, so agreement reads as \tgt{red} cores inside \src{blue} rings. Essential classes, those of death \(+\infty\), lie on the rule, ordered by birth and spread apart to stay distinct, and are matched by birth order within each degree: a \cmp{yellow} capsule encloses each matched pair, and a marker outside every capsule has no partner and forces the distance \(\infty\). The corner states the essential Betti signatures, \src{extension} over \tgt{observed}. Top row, Experiment~1: the embedding matches every class, the sphere control strands two \(H_1\) classes, the lowered separation one \(H_0\) class. Bottom row, Experiment~2: the resample matches, the half of the torus strands one loop and the void, whereas the chart of it strands both of the loops and the void as well.}
\label{fig:diagrams}
\end{minipage}\par\medskip

\begin{table}[t]
\centering
\footnotesize
\setlength{\tabcolsep}{3pt}
\renewcommand{\arraystretch}{0.98}
\begin{tabular*}{\textwidth}{@{\extracolsep{\fill}}lcccccc@{}}
\toprule
\textbf{Model} & \textbf{Gluing} & \textbf{Pushout \(\boldsymbol{0}\)} & \textbf{Coproduct \(\boldsymbol{\infty}\)} & \(\boldsymbol{d_B}\) \textbf{Pushout} & \(\boldsymbol{d_B}\) \textbf{Coproduct} & \textbf{Crossings} \\
\midrule
\(\mathsf{AE}\) & \neucell{\(10/20\)} & \poscell{\(10/10\)} & \negcell{\(20/20\)} & \poscell{\(0.000\)} & \negcell{\(\infty\)} & \neucell{\(6.1\)} \\
\(\mathsf{TopoAE}\) & \neucell{\(7/20\)} & \poscell{\(7/7\)} & \negcell{\(20/20\)} & \poscell{\(0.000\)} & \negcell{\(\infty\)} & \neucell{\(3.0\)} \\
\(\mathsf{CoverAE}\) & \neucell{\(20/20\)} & \poscell{\(20/20\)} & \negcell{\(20/20\)} & \poscell{\(0.000\)} & \negcell{\(\infty\)} & \neucell{\(0.0\)} \\
\bottomrule
\end{tabular*}
\caption{Gluing against merging on learned latent clouds, over \(20\) seeds, all endpoints in degree zero. Gluing is the rate at which no pair of the two patch complements lies within \(T\), the hypothesis of (\labelcref{prop:cover-gluing}), and Crossings is the mean number of such pairs. Pushout \(0\) is the rate at which the extension along the span matches the observed invariant among the seeds on which that hypothesis holds. Coproduct \(\infty\) is the rate at which the extension along the discrete subcategory does not, over all seeds. The last two columns give the two distances averaged over all seeds. Colour: \swatch{poscell}~the prediction of (\labelcref{prop:cover-gluing}) is confirmed, \swatch{negcell}~the coproduct, which forgets the overlap, \swatch{neucell}~the geometric hypothesis and the margin by which it holds.}
\label{tab:experiment-3}
\end{table}

\paragraph{Learned Cover.}
Experiment~3 samples the circle of law \labelcref{it:law-circle} and covers it by the two arcs \(U_1=\{-0.70\leq\vartheta\leq\pi+0.70\}\) and \(U_2=\{\vartheta\geq\pi-0.70\}\cup\{\vartheta\leq0.70\}\), whose intersection \(U_{12}\) has two arcs, so the pushout of \Cref{prop:cover-gluing} differs from the coproduct. The union is lifted to \(\mathbb R^{64}\) by \(x\mapsto Qx+\zeta\) with \(Q\) a random orthonormal frame and \(\zeta\sim\mathcal N(0,0.02^2I_{64})\), so ambient distances are the intrinsic ones up to noise and any failure of the gluing is the encoder's. The three models share an encoder \(64\to128\to64\to3\) with rectified linear units and the mirrored decoder, and are also trained for \(400\) full-batch steps of Adam at learning rate \(10^{-3}\). The model \(\mathsf{AE}\) minimises the reconstruction error alone. The model \(\mathsf{TopoAE}\) adds \(0.5\) times the symmetric degree-zero signature loss on minimum spanning trees of the ambient and latent distance matrices \citep{moor2020topological}, with the edge selection detached, so that only the selected lengths carry a gradient. The model \(\mathsf{CoverAE}\) adds to that \(50\) times a squared hinge that pays whenever two points of the two patch complements are closer than \(0.90\) in the latent cloud normalised to unit radius. The hinge also stops paying above its margin, so it does not reward a collapse of the cloud. Latent clouds are rescaled to unit radius before the invariants are read, and the pushout barcode is computed by a single Kruskal sweep on the union of the two patch complexes, taken at each scale \(t\leq T\) in turn.

\paragraph{From Data to the Score.}
For each source object we choose a finite point cloud \(P_s(a)\) and for each target object a finite point cloud \(P_t(b)\), which may be input samples, latent activations or layerwise representation clouds, and put \(F_n(a)=H_n(\mathcal K(P_s(a));\K)\) and \(G_n(b)=H_n(\mathcal K(P_t(b));\K)\), where \(\mathcal K\) assigns to a finite point cloud a finite filtration \(\mathcal K(P):(\mathbb R,\leq)\to\operatorname{Simp}\). Every morphism of \(\A\) and of \(\B\) must be realised by a map of the underlying data, or by a compatible map of the induced filtrations, so that \(F_n\) and \(G_n\) are functors and not merely families of objects. We compute Vietoris--Rips persistence with GUDHI \citep{MariaBoissonnatGlisseYvinec2014} and compare barcodes by an exact bottleneck matching, that is a binary search over the finitely many candidate radii with a maximum-matching feasibility test at each step, the standard reduction to bipartite matching \citep{KerberMorozovNigmetov2017}. Persistence images \citep[Definition~2, Theorem~5]{AdamsEtAl2017} and landscapes \citep[Definition~3, Theorem~17]{Bubenik2015} summarise one diagram at a time, but the extension computes a colimit of a whole source diagram of them, with the identifications its morphisms impose.

\vfill

\section*{Code Availability}

All code for the experiments, the visualisations and the figures of this paper is available at \url{https://codeberg.org/Jiren/CatTrans}.

\section{The Transfer Score Algorithm}\label{app:algorithm}

\begin{algorithm2e}[H]
\caption{The Kan-persistent transfer score \(\Comp^{d_B}_{J,q}\). Lines~2, 5 and~8 build \(F_n\) and \(G_n\), lines~12 and~13 enumerate \(J\downarrow b\), lines~16 to~19 form \eqref{eq:cokernel} at every scale, so that \(M\cong(\Lan_JF_n)(b)\), and lines~20 to~24 compare the barcodes and aggregate.}\label{alg:score}
\KwInput{Finite categories \(\A\) and \(\B\), a functor \(J:\A\to\B\), a field \(\K\), a maximal homological degree \(q\in\mathbb N\), weights \(w_0,\ldots,w_q\geq0\), a functor \(\mathcal K^s:\A\to\Fun{(\mathbb R,\leq)}{\operatorname{Simp}}\) with \(\mathcal K^s(a)\) a finite filtration for every \(a\in\Ob(\A)\), and a finite filtration \(\mathcal K^t(b):(\mathbb R,\leq)\to\operatorname{Simp}\) for every \(b\in\Ob(\B)\).}
\KwOutput{The score \(\Comp^{d_B}_{J,q}\in[0,\infty]\).}
\ForEach{\(a\in\Ob(\A)\) and \(0\leq n\leq q\)}{
  \(F_n(a)\leftarrow\PHfun(\mathcal K^s(a),n,\K)\in\Pers\)\eol
}
\ForEach{\(\bar\alpha\in\Mor(\A)\) with \(\bar\alpha:a\to a'\), and \(0\leq n\leq q\)}{
  \(F_n(\bar\alpha)\leftarrow H_n(\mathcal K^s(\bar\alpha);\K):F_n(a)\to F_n(a')\)\eol
}
\ForEach{\(b\in\Ob(\B)\) and \(0\leq n\leq q\)}{
  \(G_n(b)\leftarrow\PHfun(\mathcal K^t(b),n,\K)\in\Pers\)\eol
}
\(\mathrm{score}\leftarrow0\)\eol
\ForEach{\(b\in\Ob(\B)\)}{
  \(\Ob(J\downarrow b)\leftarrow\bigl\{(a,\beta)\bigm|a\in\Ob(\A),\ \beta\in\B(Ja,b)\bigr\}\)\eol
  \(\Mor(J\downarrow b)\leftarrow\bigl\{\bar\alpha:(a,\beta)\to(a',\beta')\bigm|\bar\alpha\in\A(a,a'),\ \beta'\circ J\bar\alpha=\beta\bigr\}\)\eol
  \(S_b\leftarrow0\)\eol
  \For{\(n\leftarrow0\) \KwTo \(q\)}{
    \(P\leftarrow\bigoplus_{(a,\beta)\in\Ob(J\downarrow b)}F_n(a)\) in \(\Pers\), with injections \(\iota_{(a,\beta)}:F_n(a)\to P\)\eol
    \(R\leftarrow\bigoplus_{\bar\alpha\in\Mor(J\downarrow b)}F_n(a_{\bar\alpha})\), with \(a_{\bar\alpha}\coloneqq\) the source of \(\bar\alpha\) and \(\kappa_{\bar\alpha}:F_n(a_{\bar\alpha})\to R\)\eol
    \(r\leftarrow\) the unique \(r:R\to P\) with \(r\kappa_{\bar\alpha}=\iota_{(a',\beta')}F_n(\bar\alpha)-\iota_{(a,\beta)}\) for all \(\bar\alpha:(a,\beta)\to(a',\beta')\)\eol
    \(M\leftarrow\Ckfun(r)\), scalewise, with transition maps induced by those of \(P\)\eol
    \(\mathcal D_1\leftarrow\Dgm(\Bcfun(M))\) and \(\mathcal D_2\leftarrow\Dgm(\Bcfun(G_n(b)))\)\eol
    \(d_{n,B}(b)\leftarrow\Bnfun(\mathcal D_1,\mathcal D_2)\)\eol
    \(S_b\leftarrow S_b+w_nd_{n,B}(b)\)\eol
  }
  \(\mathrm{score}\leftarrow\max\{\mathrm{score},S_b\}\)\eol
}
\Return \(\mathrm{score}\)
\end{algorithm2e}

\end{document}

%% file: figures/latent.tex
\begin{tikzpicture}[x=1cm,y=1cm]
\begin{scope}[shift={(0.00,0)}]
\draw[gyellow,line width=1.4pt,line cap=round] (1.497,-0.806) -- (1.557,0.346);
\draw[gyellow,line width=1.4pt,line cap=round] (1.514,-0.764) -- (1.527,0.414);
\draw[gyellow,line width=1.4pt,line cap=round] (1.514,-0.764) -- (1.557,0.346);
\filldraw[fill=gblue,draw=white,line width=0.3pt] (1.497,-0.806) circle (1.90pt);
\filldraw[fill=gblue,draw=white,line width=0.3pt] (1.514,-0.764) circle (1.90pt);
\filldraw[fill=gblue,draw=white,line width=0.3pt] (1.471,-0.871) circle (1.90pt);
\filldraw[fill=gblue,draw=white,line width=0.3pt] (1.399,-0.986) circle (1.90pt);
\filldraw[fill=gblue,draw=white,line width=0.3pt] (1.418,-0.996) circle (1.90pt);
\filldraw[fill=gblue,draw=white,line width=0.3pt] (1.363,-1.131) circle (1.90pt);
\filldraw[fill=gblue,draw=white,line width=0.3pt] (1.269,-1.142) circle (1.90pt);
\filldraw[fill=gblue,draw=white,line width=0.3pt] (1.235,-1.285) circle (1.90pt);
\filldraw[fill=gblue,draw=white,line width=0.3pt] (1.221,-1.278) circle (1.90pt);
\filldraw[fill=gblue,draw=white,line width=0.3pt] (1.154,-1.338) circle (1.90pt);
\filldraw[fill=gblue,draw=white,line width=0.3pt] (1.063,-1.433) circle (1.90pt);
\filldraw[fill=gblue,draw=white,line width=0.3pt] (0.961,-1.415) circle (1.90pt);
\filldraw[fill=gblue,draw=white,line width=0.3pt] (0.990,-1.451) circle (1.90pt);
\filldraw[fill=gblue,draw=white,line width=0.3pt] (0.897,-1.548) circle (1.90pt);
\filldraw[fill=gblue,draw=white,line width=0.3pt] (0.796,-1.532) circle (1.90pt);
\filldraw[fill=gblue,draw=white,line width=0.3pt] (0.709,-1.581) circle (1.90pt);
\filldraw[fill=gblue,draw=white,line width=0.3pt] (0.602,-1.600) circle (1.90pt);
\filldraw[fill=gblue,draw=white,line width=0.3pt] (0.485,-1.603) circle (1.90pt);
\filldraw[fill=gblue,draw=white,line width=0.3pt] (0.327,-1.600) circle (1.90pt);
\filldraw[fill=gblue,draw=white,line width=0.3pt] (0.264,-1.640) circle (1.90pt);
\filldraw[fill=gblue,draw=white,line width=0.3pt] (0.171,-1.679) circle (1.90pt);
\filldraw[fill=gblue,draw=white,line width=0.3pt] (0.020,-1.655) circle (1.90pt);
\filldraw[fill=gblue,draw=white,line width=0.3pt] (-0.014,-1.594) circle (1.90pt);
\filldraw[fill=gblue,draw=white,line width=0.3pt] (-0.159,-1.612) circle (1.90pt);
\filldraw[fill=gblue,draw=white,line width=0.3pt] (-0.126,-1.681) circle (1.90pt);
\filldraw[fill=gblue,draw=white,line width=0.3pt] (-0.311,-1.621) circle (1.90pt);
\filldraw[fill=gblue,draw=white,line width=0.3pt] (-0.301,-1.552) circle (1.90pt);
\filldraw[fill=gblue,draw=white,line width=0.3pt] (-0.490,-1.522) circle (1.90pt);
\filldraw[fill=gblue,draw=white,line width=0.3pt] (-0.466,-1.450) circle (1.90pt);
\filldraw[fill=gblue,draw=white,line width=0.3pt] (-0.719,-1.485) circle (1.90pt);
\filldraw[fill=gblue,draw=white,line width=0.3pt] (-0.799,-1.455) circle (1.90pt);
\filldraw[fill=gblue,draw=white,line width=0.3pt] (-0.804,-1.380) circle (1.90pt);
\filldraw[fill=gblue,draw=white,line width=0.3pt] (-0.950,-1.188) circle (1.90pt);
\filldraw[fill=gblue,draw=white,line width=0.3pt] (-0.880,-1.257) circle (1.90pt);
\filldraw[fill=gred,draw=white,line width=0.3pt] (-1.447,0.985) circle (1.90pt);
\filldraw[fill=gred,draw=white,line width=0.3pt] (-1.425,1.039) circle (1.90pt);
\filldraw[fill=gred,draw=white,line width=0.3pt] (-1.390,1.055) circle (1.90pt);
\filldraw[fill=gred,draw=white,line width=0.3pt] (-1.319,1.143) circle (1.90pt);
\filldraw[fill=gred,draw=white,line width=0.3pt] (-1.275,1.160) circle (1.90pt);
\filldraw[fill=gred,draw=white,line width=0.3pt] (-1.213,1.221) circle (1.90pt);
\filldraw[fill=gred,draw=white,line width=0.3pt] (-1.088,1.232) circle (1.90pt);
\filldraw[fill=gred,draw=white,line width=0.3pt] (-1.084,1.246) circle (1.90pt);
\filldraw[fill=gred,draw=white,line width=0.3pt] (-1.001,1.318) circle (1.90pt);
\filldraw[fill=gred,draw=white,line width=0.3pt] (-0.906,1.300) circle (1.90pt);
\filldraw[fill=gred,draw=white,line width=0.3pt] (-0.881,1.292) circle (1.90pt);
\filldraw[fill=gred,draw=white,line width=0.3pt] (-0.768,1.301) circle (1.90pt);
\filldraw[fill=gred,draw=white,line width=0.3pt] (-0.653,1.321) circle (1.90pt);
\filldraw[fill=gred,draw=white,line width=0.3pt] (-0.613,1.317) circle (1.90pt);
\filldraw[fill=gred,draw=white,line width=0.3pt] (-0.469,1.327) circle (1.90pt);
\filldraw[fill=gred,draw=white,line width=0.3pt] (-0.507,1.370) circle (1.90pt);
\filldraw[fill=gred,draw=white,line width=0.3pt] (-0.360,1.365) circle (1.90pt);
\filldraw[fill=gred,draw=white,line width=0.3pt] (-0.260,1.393) circle (1.90pt);
\filldraw[fill=gred,draw=white,line width=0.3pt] (-0.259,1.338) circle (1.90pt);
\filldraw[fill=gred,draw=white,line width=0.3pt] (-0.060,1.308) circle (1.90pt);
\filldraw[fill=gred,draw=white,line width=0.3pt] (-0.036,1.327) circle (1.90pt);
\filldraw[fill=gred,draw=white,line width=0.3pt] (-0.032,1.343) circle (1.90pt);
\filldraw[fill=gred,draw=white,line width=0.3pt] (0.120,1.286) circle (1.90pt);
\filldraw[fill=gred,draw=white,line width=0.3pt] (0.245,1.277) circle (1.90pt);
\filldraw[fill=gred,draw=white,line width=0.3pt] (0.173,1.320) circle (1.90pt);
\filldraw[fill=gred,draw=white,line width=0.3pt] (0.448,1.258) circle (1.90pt);
\filldraw[fill=gred,draw=white,line width=0.3pt] (0.501,1.255) circle (1.90pt);
\filldraw[fill=gred,draw=white,line width=0.3pt] (0.558,1.221) circle (1.90pt);
\filldraw[fill=gred,draw=white,line width=0.3pt] (0.684,1.161) circle (1.90pt);
\filldraw[fill=gred,draw=white,line width=0.3pt] (0.728,1.166) circle (1.90pt);
\filldraw[fill=gred,draw=white,line width=0.3pt] (0.752,1.137) circle (1.90pt);
\filldraw[fill=gred,draw=white,line width=0.3pt] (0.845,1.095) circle (1.90pt);
\filldraw[fill=gred,draw=white,line width=0.3pt] (0.977,1.056) circle (1.90pt);
\filldraw[fill=gred,draw=white,line width=0.3pt] (0.941,1.074) circle (1.90pt);
\filldraw[fill=gred,draw=white,line width=0.3pt] (1.053,0.973) circle (1.90pt);
\filldraw[fill=gred,draw=white,line width=0.3pt] (1.113,0.959) circle (1.90pt);
\filldraw[fill=gred,draw=white,line width=0.3pt] (1.174,0.868) circle (1.90pt);
\filldraw[fill=gred,draw=white,line width=0.3pt] (1.286,0.808) circle (1.90pt);
\filldraw[fill=gred,draw=white,line width=0.3pt] (1.201,0.839) circle (1.90pt);
\filldraw[fill=gred,draw=white,line width=0.3pt] (1.295,0.791) circle (1.90pt);
\filldraw[fill=gred,draw=white,line width=0.3pt] (1.369,0.620) circle (1.90pt);
\filldraw[fill=gred,draw=white,line width=0.3pt] (1.372,0.607) circle (1.90pt);
\filldraw[fill=gred,draw=white,line width=0.3pt] (1.487,0.454) circle (1.90pt);
\filldraw[fill=gred,draw=white,line width=0.3pt] (1.485,0.441) circle (1.90pt);
\filldraw[fill=gred,draw=white,line width=0.3pt] (1.527,0.414) circle (1.90pt);
\filldraw[fill=gred,draw=white,line width=0.3pt] (1.557,0.346) circle (1.90pt);
\filldraw[fill=ggreen,draw=white,line width=0.3pt] (1.592,0.195) circle (1.90pt);
\filldraw[fill=ggreen,draw=white,line width=0.3pt] (1.612,0.170) circle (1.90pt);
\filldraw[fill=ggreen,draw=white,line width=0.3pt] (1.633,0.113) circle (1.90pt);
\filldraw[fill=ggreen,draw=white,line width=0.3pt] (1.655,0.109) circle (1.90pt);
\filldraw[fill=ggreen,draw=white,line width=0.3pt] (1.646,-0.039) circle (1.90pt);
\filldraw[fill=ggreen,draw=white,line width=0.3pt] (1.633,-0.089) circle (1.90pt);
\filldraw[fill=ggreen,draw=white,line width=0.3pt] (1.646,-0.273) circle (1.90pt);
\filldraw[fill=ggreen,draw=white,line width=0.3pt] (1.660,-0.231) circle (1.90pt);
\filldraw[fill=ggreen,draw=white,line width=0.3pt] (1.641,-0.318) circle (1.90pt);
\filldraw[fill=ggreen,draw=white,line width=0.3pt] (1.602,-0.441) circle (1.90pt);
\filldraw[fill=ggreen,draw=white,line width=0.3pt] (1.593,-0.587) circle (1.90pt);
\filldraw[fill=ggreen,draw=white,line width=0.3pt] (1.551,-0.609) circle (1.90pt);
\filldraw[fill=ggreen,draw=white,line width=0.3pt] (1.544,-0.674) circle (1.90pt);
\filldraw[fill=ggreen,draw=white,line width=0.3pt] (-1.024,-1.178) circle (1.90pt);
\filldraw[fill=ggreen,draw=white,line width=0.3pt] (-1.192,-1.165) circle (1.90pt);
\filldraw[fill=ggreen,draw=white,line width=0.3pt] (-1.181,-1.003) circle (1.90pt);
\filldraw[fill=ggreen,draw=white,line width=0.3pt] (-1.360,-0.951) circle (1.90pt);
\filldraw[fill=ggreen,draw=white,line width=0.3pt] (-1.287,-0.847) circle (1.90pt);
\filldraw[fill=ggreen,draw=white,line width=0.3pt] (-1.409,-0.807) circle (1.90pt);
\filldraw[fill=ggreen,draw=white,line width=0.3pt] (-1.542,-0.685) circle (1.90pt);
\filldraw[fill=ggreen,draw=white,line width=0.3pt] (-1.594,-0.488) circle (1.90pt);
\filldraw[fill=ggreen,draw=white,line width=0.3pt] (-1.548,-0.472) circle (1.90pt);
\filldraw[fill=ggreen,draw=white,line width=0.3pt] (-1.573,-0.347) circle (1.90pt);
\filldraw[fill=ggreen,draw=white,line width=0.3pt] (-1.571,-0.195) circle (1.90pt);
\filldraw[fill=ggreen,draw=white,line width=0.3pt] (-1.646,-0.099) circle (1.90pt);
\filldraw[fill=ggreen,draw=white,line width=0.3pt] (-1.693,-0.021) circle (1.90pt);
\filldraw[fill=ggreen,draw=white,line width=0.3pt] (-1.728,-0.021) circle (1.90pt);
\filldraw[fill=ggreen,draw=white,line width=0.3pt] (-1.720,0.242) circle (1.90pt);
\filldraw[fill=ggreen,draw=white,line width=0.3pt] (-1.663,0.210) circle (1.90pt);
\filldraw[fill=ggreen,draw=white,line width=0.3pt] (-1.706,0.380) circle (1.90pt);
\filldraw[fill=ggreen,draw=white,line width=0.3pt] (-1.742,0.507) circle (1.90pt);
\filldraw[fill=ggreen,draw=white,line width=0.3pt] (-1.716,0.458) circle (1.90pt);
\filldraw[fill=ggreen,draw=white,line width=0.3pt] (-1.720,0.620) circle (1.90pt);
\filldraw[fill=ggreen,draw=white,line width=0.3pt] (-1.643,0.658) circle (1.90pt);
\filldraw[fill=ggreen,draw=white,line width=0.3pt] (-1.610,0.657) circle (1.90pt);
\filldraw[fill=ggreen,draw=white,line width=0.3pt] (-1.632,0.758) circle (1.90pt);
\filldraw[fill=ggreen,draw=white,line width=0.3pt] (-1.575,0.754) circle (1.90pt);
\filldraw[fill=ggreen,draw=white,line width=0.3pt] (-1.541,0.832) circle (1.90pt);
\filldraw[fill=ggreen,draw=white,line width=0.3pt] (-1.507,0.934) circle (1.90pt);
\filldraw[fill=ggreen,draw=white,line width=0.3pt] (-1.527,0.944) circle (1.90pt);
\draw (-0.672,0) -- (0.672,0);
\draw (-0.672,-0.06) -- (-0.672,0.06);
\draw (0.672,-0.06) -- (0.672,0.06);
\node[anchor=south,inner sep=2.5pt] at (0,0.02) {\(T\)};
\node[anchor=south,inner sep=3pt] at (0,1.9) {\textbf{\textsf{AE}}};
\node[anchor=north,inner sep=3pt] at (0,-2.260) {3 crossing pairs, gluing fails.};
\end{scope}
\begin{scope}[shift={(5.06,0)}]
\draw[gyellow,line width=1.4pt,line cap=round] (1.582,0.340) -- (0.901,1.099);
\draw[gyellow,line width=1.4pt,line cap=round] (1.582,0.340) -- (0.935,1.205);
\draw[gyellow,line width=1.4pt,line cap=round] (1.582,0.340) -- (1.117,1.222);
\filldraw[fill=gblue,draw=white,line width=0.3pt] (1.698,0.447) circle (1.90pt);
\filldraw[fill=gblue,draw=white,line width=0.3pt] (1.582,0.340) circle (1.90pt);
\filldraw[fill=gblue,draw=white,line width=0.3pt] (1.634,0.244) circle (1.90pt);
\filldraw[fill=gblue,draw=white,line width=0.3pt] (1.604,0.212) circle (1.90pt);
\filldraw[fill=gblue,draw=white,line width=0.3pt] (1.561,0.085) circle (1.90pt);
\filldraw[fill=gblue,draw=white,line width=0.3pt] (1.578,-0.088) circle (1.90pt);
\filldraw[fill=gblue,draw=white,line width=0.3pt] (1.502,-0.013) circle (1.90pt);
\filldraw[fill=gblue,draw=white,line width=0.3pt] (1.621,-0.195) circle (1.90pt);
\filldraw[fill=gblue,draw=white,line width=0.3pt] (1.492,-0.212) circle (1.90pt);
\filldraw[fill=gblue,draw=white,line width=0.3pt] (1.466,-0.302) circle (1.90pt);
\filldraw[fill=gblue,draw=white,line width=0.3pt] (1.559,-0.442) circle (1.90pt);
\filldraw[fill=gblue,draw=white,line width=0.3pt] (1.291,-0.474) circle (1.90pt);
\filldraw[fill=gblue,draw=white,line width=0.3pt] (1.459,-0.567) circle (1.90pt);
\filldraw[fill=gblue,draw=white,line width=0.3pt] (1.412,-0.735) circle (1.90pt);
\filldraw[fill=gblue,draw=white,line width=0.3pt] (1.321,-0.601) circle (1.90pt);
\filldraw[fill=gblue,draw=white,line width=0.3pt] (1.237,-0.755) circle (1.90pt);
\filldraw[fill=gblue,draw=white,line width=0.3pt] (1.223,-0.873) circle (1.90pt);
\filldraw[fill=gblue,draw=white,line width=0.3pt] (1.047,-0.880) circle (1.90pt);
\filldraw[fill=gblue,draw=white,line width=0.3pt] (0.976,-0.744) circle (1.90pt);
\filldraw[fill=gblue,draw=white,line width=0.3pt] (0.921,-0.939) circle (1.90pt);
\filldraw[fill=gblue,draw=white,line width=0.3pt] (0.787,-0.789) circle (1.90pt);
\filldraw[fill=gblue,draw=white,line width=0.3pt] (0.760,-1.010) circle (1.90pt);
\filldraw[fill=gblue,draw=white,line width=0.3pt] (0.653,-0.956) circle (1.90pt);
\filldraw[fill=gblue,draw=white,line width=0.3pt] (0.598,-0.982) circle (1.90pt);
\filldraw[fill=gblue,draw=white,line width=0.3pt] (0.635,-1.139) circle (1.90pt);
\filldraw[fill=gblue,draw=white,line width=0.3pt] (0.427,-0.954) circle (1.90pt);
\filldraw[fill=gblue,draw=white,line width=0.3pt] (0.487,-1.124) circle (1.90pt);
\filldraw[fill=gblue,draw=white,line width=0.3pt] (0.313,-1.245) circle (1.90pt);
\filldraw[fill=gblue,draw=white,line width=0.3pt] (0.303,-1.095) circle (1.90pt);
\filldraw[fill=gblue,draw=white,line width=0.3pt] (0.152,-1.194) circle (1.90pt);
\filldraw[fill=gblue,draw=white,line width=0.3pt] (-0.023,-1.194) circle (1.90pt);
\filldraw[fill=gblue,draw=white,line width=0.3pt] (-0.077,-1.294) circle (1.90pt);
\filldraw[fill=gblue,draw=white,line width=0.3pt] (-0.212,-1.242) circle (1.90pt);
\filldraw[fill=gblue,draw=white,line width=0.3pt] (-0.191,-1.152) circle (1.90pt);
\filldraw[fill=gred,draw=white,line width=0.3pt] (-1.497,-0.310) circle (1.90pt);
\filldraw[fill=gred,draw=white,line width=0.3pt] (-1.378,-0.321) circle (1.90pt);
\filldraw[fill=gred,draw=white,line width=0.3pt] (-1.477,-0.184) circle (1.90pt);
\filldraw[fill=gred,draw=white,line width=0.3pt] (-1.528,-0.027) circle (1.90pt);
\filldraw[fill=gred,draw=white,line width=0.3pt] (-1.323,-0.149) circle (1.90pt);
\filldraw[fill=gred,draw=white,line width=0.3pt] (-1.385,0.013) circle (1.90pt);
\filldraw[fill=gred,draw=white,line width=0.3pt] (-1.263,0.163) circle (1.90pt);
\filldraw[fill=gred,draw=white,line width=0.3pt] (-1.459,0.172) circle (1.90pt);
\filldraw[fill=gred,draw=white,line width=0.3pt] (-1.388,0.190) circle (1.90pt);
\filldraw[fill=gred,draw=white,line width=0.3pt] (-1.315,0.332) circle (1.90pt);
\filldraw[fill=gred,draw=white,line width=0.3pt] (-1.180,0.278) circle (1.90pt);
\filldraw[fill=gred,draw=white,line width=0.3pt] (-1.329,0.508) circle (1.90pt);
\filldraw[fill=gred,draw=white,line width=0.3pt] (-1.152,0.464) circle (1.90pt);
\filldraw[fill=gred,draw=white,line width=0.3pt] (-1.182,0.646) circle (1.90pt);
\filldraw[fill=gred,draw=white,line width=0.3pt] (-1.023,0.559) circle (1.90pt);
\filldraw[fill=gred,draw=white,line width=0.3pt] (-1.106,0.709) circle (1.90pt);
\filldraw[fill=gred,draw=white,line width=0.3pt] (-1.012,0.863) circle (1.90pt);
\filldraw[fill=gred,draw=white,line width=0.3pt] (-0.984,0.832) circle (1.90pt);
\filldraw[fill=gred,draw=white,line width=0.3pt] (-0.915,0.698) circle (1.90pt);
\filldraw[fill=gred,draw=white,line width=0.3pt] (-0.837,0.993) circle (1.90pt);
\filldraw[fill=gred,draw=white,line width=0.3pt] (-0.868,1.045) circle (1.90pt);
\filldraw[fill=gred,draw=white,line width=0.3pt] (-0.801,0.877) circle (1.90pt);
\filldraw[fill=gred,draw=white,line width=0.3pt] (-0.676,1.036) circle (1.90pt);
\filldraw[fill=gred,draw=white,line width=0.3pt] (-0.614,0.966) circle (1.90pt);
\filldraw[fill=gred,draw=white,line width=0.3pt] (-0.763,1.189) circle (1.90pt);
\filldraw[fill=gred,draw=white,line width=0.3pt] (-0.557,1.269) circle (1.90pt);
\filldraw[fill=gred,draw=white,line width=0.3pt] (-0.403,1.249) circle (1.90pt);
\filldraw[fill=gred,draw=white,line width=0.3pt] (-0.374,1.305) circle (1.90pt);
\filldraw[fill=gred,draw=white,line width=0.3pt] (-0.212,1.206) circle (1.90pt);
\filldraw[fill=gred,draw=white,line width=0.3pt] (-0.202,1.386) circle (1.90pt);
\filldraw[fill=gred,draw=white,line width=0.3pt] (-0.168,1.273) circle (1.90pt);
\filldraw[fill=gred,draw=white,line width=0.3pt] (-0.039,1.256) circle (1.90pt);
\filldraw[fill=gred,draw=white,line width=0.3pt] (0.152,1.349) circle (1.90pt);
\filldraw[fill=gred,draw=white,line width=0.3pt] (-0.018,1.422) circle (1.90pt);
\filldraw[fill=gred,draw=white,line width=0.3pt] (0.287,1.271) circle (1.90pt);
\filldraw[fill=gred,draw=white,line width=0.3pt] (0.348,1.341) circle (1.90pt);
\filldraw[fill=gred,draw=white,line width=0.3pt] (0.498,1.251) circle (1.90pt);
\filldraw[fill=gred,draw=white,line width=0.3pt] (0.637,1.328) circle (1.90pt);
\filldraw[fill=gred,draw=white,line width=0.3pt] (0.474,1.239) circle (1.90pt);
\filldraw[fill=gred,draw=white,line width=0.3pt] (0.632,1.405) circle (1.90pt);
\filldraw[fill=gred,draw=white,line width=0.3pt] (0.756,1.241) circle (1.90pt);
\filldraw[fill=gred,draw=white,line width=0.3pt] (0.738,1.119) circle (1.90pt);
\filldraw[fill=gred,draw=white,line width=0.3pt] (0.901,1.099) circle (1.90pt);
\filldraw[fill=gred,draw=white,line width=0.3pt] (0.935,1.205) circle (1.90pt);
\filldraw[fill=gred,draw=white,line width=0.3pt] (1.022,1.232) circle (1.90pt);
\filldraw[fill=gred,draw=white,line width=0.3pt] (1.117,1.222) circle (1.90pt);
\filldraw[fill=ggreen,draw=white,line width=0.3pt] (1.216,1.061) circle (1.90pt);
\filldraw[fill=ggreen,draw=white,line width=0.3pt] (1.285,1.182) circle (1.90pt);
\filldraw[fill=ggreen,draw=white,line width=0.3pt] (1.333,0.920) circle (1.90pt);
\filldraw[fill=ggreen,draw=white,line width=0.3pt] (1.335,1.007) circle (1.90pt);
\filldraw[fill=ggreen,draw=white,line width=0.3pt] (1.538,0.956) circle (1.90pt);
\filldraw[fill=ggreen,draw=white,line width=0.3pt] (1.422,0.830) circle (1.90pt);
\filldraw[fill=ggreen,draw=white,line width=0.3pt] (1.453,0.812) circle (1.90pt);
\filldraw[fill=ggreen,draw=white,line width=0.3pt] (1.603,0.857) circle (1.90pt);
\filldraw[fill=ggreen,draw=white,line width=0.3pt] (1.563,0.674) circle (1.90pt);
\filldraw[fill=ggreen,draw=white,line width=0.3pt] (1.506,0.646) circle (1.90pt);
\filldraw[fill=ggreen,draw=white,line width=0.3pt] (1.658,0.626) circle (1.90pt);
\filldraw[fill=ggreen,draw=white,line width=0.3pt] (1.610,0.499) circle (1.90pt);
\filldraw[fill=ggreen,draw=white,line width=0.3pt] (1.557,0.503) circle (1.90pt);
\filldraw[fill=ggreen,draw=white,line width=0.3pt] (-0.267,-1.332) circle (1.90pt);
\filldraw[fill=ggreen,draw=white,line width=0.3pt] (-0.439,-1.333) circle (1.90pt);
\filldraw[fill=ggreen,draw=white,line width=0.3pt] (-0.460,-1.254) circle (1.90pt);
\filldraw[fill=ggreen,draw=white,line width=0.3pt] (-0.627,-1.182) circle (1.90pt);
\filldraw[fill=ggreen,draw=white,line width=0.3pt] (-0.641,-1.243) circle (1.90pt);
\filldraw[fill=ggreen,draw=white,line width=0.3pt] (-0.649,-1.386) circle (1.90pt);
\filldraw[fill=ggreen,draw=white,line width=0.3pt] (-0.810,-1.338) circle (1.90pt);
\filldraw[fill=ggreen,draw=white,line width=0.3pt] (-0.981,-1.244) circle (1.90pt);
\filldraw[fill=ggreen,draw=white,line width=0.3pt] (-0.856,-1.247) circle (1.90pt);
\filldraw[fill=ggreen,draw=white,line width=0.3pt] (-1.014,-1.296) circle (1.90pt);
\filldraw[fill=ggreen,draw=white,line width=0.3pt] (-1.041,-1.133) circle (1.90pt);
\filldraw[fill=ggreen,draw=white,line width=0.3pt] (-1.230,-1.121) circle (1.90pt);
\filldraw[fill=ggreen,draw=white,line width=0.3pt] (-1.132,-1.106) circle (1.90pt);
\filldraw[fill=ggreen,draw=white,line width=0.3pt] (-1.265,-1.198) circle (1.90pt);
\filldraw[fill=ggreen,draw=white,line width=0.3pt] (-1.358,-1.031) circle (1.90pt);
\filldraw[fill=ggreen,draw=white,line width=0.3pt] (-1.142,-0.969) circle (1.90pt);
\filldraw[fill=ggreen,draw=white,line width=0.3pt] (-1.470,-0.970) circle (1.90pt);
\filldraw[fill=ggreen,draw=white,line width=0.3pt] (-1.434,-0.816) circle (1.90pt);
\filldraw[fill=ggreen,draw=white,line width=0.3pt] (-1.306,-0.925) circle (1.90pt);
\filldraw[fill=ggreen,draw=white,line width=0.3pt] (-1.576,-0.731) circle (1.90pt);
\filldraw[fill=ggreen,draw=white,line width=0.3pt] (-1.438,-0.800) circle (1.90pt);
\filldraw[fill=ggreen,draw=white,line width=0.3pt] (-1.538,-0.607) circle (1.90pt);
\filldraw[fill=ggreen,draw=white,line width=0.3pt] (-1.519,-0.508) circle (1.90pt);
\filldraw[fill=ggreen,draw=white,line width=0.3pt] (-1.410,-0.660) circle (1.90pt);
\filldraw[fill=ggreen,draw=white,line width=0.3pt] (-1.458,-0.619) circle (1.90pt);
\filldraw[fill=ggreen,draw=white,line width=0.3pt] (-1.509,-0.449) circle (1.90pt);
\filldraw[fill=ggreen,draw=white,line width=0.3pt] (-1.372,-0.425) circle (1.90pt);
\node[anchor=south,inner sep=3pt] at (0,1.9) {\textbf{\textsf{TopoAE}}};
\node[anchor=north,inner sep=3pt] at (0,-2.260) {3 crossing pairs, gluing fails.};
\end{scope}
\begin{scope}[shift={(10.12,0)}]
\filldraw[fill=gblue,draw=white,line width=0.3pt] (1.340,-1.287) circle (1.90pt);
\filldraw[fill=gblue,draw=white,line width=0.3pt] (0.975,-0.819) circle (1.90pt);
\filldraw[fill=gblue,draw=white,line width=0.3pt] (1.105,-1.118) circle (1.90pt);
\filldraw[fill=gblue,draw=white,line width=0.3pt] (1.131,-1.070) circle (1.90pt);
\filldraw[fill=gblue,draw=white,line width=0.3pt] (0.745,-0.641) circle (1.90pt);
\filldraw[fill=gblue,draw=white,line width=0.3pt] (1.067,-1.373) circle (1.90pt);
\filldraw[fill=gblue,draw=white,line width=0.3pt] (0.812,-0.857) circle (1.90pt);
\filldraw[fill=gblue,draw=white,line width=0.3pt] (0.827,-1.032) circle (1.90pt);
\filldraw[fill=gblue,draw=white,line width=0.3pt] (0.789,-1.040) circle (1.90pt);
\filldraw[fill=gblue,draw=white,line width=0.3pt] (0.635,-0.835) circle (1.90pt);
\filldraw[fill=gblue,draw=white,line width=0.3pt] (0.859,-1.248) circle (1.90pt);
\filldraw[fill=gblue,draw=white,line width=0.3pt] (0.635,-0.954) circle (1.90pt);
\filldraw[fill=gblue,draw=white,line width=0.3pt] (0.741,-1.309) circle (1.90pt);
\filldraw[fill=gblue,draw=white,line width=0.3pt] (0.565,-1.277) circle (1.90pt);
\filldraw[fill=gblue,draw=white,line width=0.3pt] (0.537,-1.044) circle (1.90pt);
\filldraw[fill=gblue,draw=white,line width=0.3pt] (0.285,-1.040) circle (1.90pt);
\filldraw[fill=gblue,draw=white,line width=0.3pt] (0.345,-1.284) circle (1.90pt);
\filldraw[fill=gblue,draw=white,line width=0.3pt] (0.226,-0.934) circle (1.90pt);
\filldraw[fill=gblue,draw=white,line width=0.3pt] (0.030,-1.126) circle (1.90pt);
\filldraw[fill=gblue,draw=white,line width=0.3pt] (-0.007,-1.019) circle (1.90pt);
\filldraw[fill=gblue,draw=white,line width=0.3pt] (-0.140,-0.720) circle (1.90pt);
\filldraw[fill=gblue,draw=white,line width=0.3pt] (-0.204,-1.113) circle (1.90pt);
\filldraw[fill=gblue,draw=white,line width=0.3pt] (-0.270,-0.963) circle (1.90pt);
\filldraw[fill=gblue,draw=white,line width=0.3pt] (-0.517,-0.886) circle (1.90pt);
\filldraw[fill=gblue,draw=white,line width=0.3pt] (-0.404,-1.112) circle (1.90pt);
\filldraw[fill=gblue,draw=white,line width=0.3pt] (-0.655,-0.720) circle (1.90pt);
\filldraw[fill=gblue,draw=white,line width=0.3pt] (-0.628,-0.970) circle (1.90pt);
\filldraw[fill=gblue,draw=white,line width=0.3pt] (-0.659,-1.096) circle (1.90pt);
\filldraw[fill=gblue,draw=white,line width=0.3pt] (-0.752,-0.799) circle (1.90pt);
\filldraw[fill=gblue,draw=white,line width=0.3pt] (-1.180,-0.837) circle (1.90pt);
\filldraw[fill=gblue,draw=white,line width=0.3pt] (-1.337,-0.758) circle (1.90pt);
\filldraw[fill=gblue,draw=white,line width=0.3pt] (-1.108,-0.936) circle (1.90pt);
\filldraw[fill=gblue,draw=white,line width=0.3pt] (-1.249,-0.873) circle (1.90pt);
\filldraw[fill=gblue,draw=white,line width=0.3pt] (-1.133,-0.751) circle (1.90pt);
\filldraw[fill=gred,draw=white,line width=0.3pt] (-1.345,1.291) circle (1.90pt);
\filldraw[fill=gred,draw=white,line width=0.3pt] (-1.282,1.195) circle (1.90pt);
\filldraw[fill=gred,draw=white,line width=0.3pt] (-1.189,1.242) circle (1.90pt);
\filldraw[fill=gred,draw=white,line width=0.3pt] (-1.245,1.436) circle (1.90pt);
\filldraw[fill=gred,draw=white,line width=0.3pt] (-1.075,1.335) circle (1.90pt);
\filldraw[fill=gred,draw=white,line width=0.3pt] (-0.992,1.280) circle (1.90pt);
\filldraw[fill=gred,draw=white,line width=0.3pt] (-0.827,1.416) circle (1.90pt);
\filldraw[fill=gred,draw=white,line width=0.3pt] (-0.988,1.382) circle (1.90pt);
\filldraw[fill=gred,draw=white,line width=0.3pt] (-0.831,1.213) circle (1.90pt);
\filldraw[fill=gred,draw=white,line width=0.3pt] (-0.680,1.310) circle (1.90pt);
\filldraw[fill=gred,draw=white,line width=0.3pt] (-0.635,1.094) circle (1.90pt);
\filldraw[fill=gred,draw=white,line width=0.3pt] (-0.497,1.355) circle (1.90pt);
\filldraw[fill=gred,draw=white,line width=0.3pt] (-0.161,1.277) circle (1.90pt);
\filldraw[fill=gred,draw=white,line width=0.3pt] (-0.405,1.263) circle (1.90pt);
\filldraw[fill=gred,draw=white,line width=0.3pt] (-0.046,1.202) circle (1.90pt);
\filldraw[fill=gred,draw=white,line width=0.3pt] (0.137,1.248) circle (1.90pt);
\filldraw[fill=gred,draw=white,line width=0.3pt] (-0.034,1.075) circle (1.90pt);
\filldraw[fill=gred,draw=white,line width=0.3pt] (0.211,1.046) circle (1.90pt);
\filldraw[fill=gred,draw=white,line width=0.3pt] (0.285,1.043) circle (1.90pt);
\filldraw[fill=gred,draw=white,line width=0.3pt] (0.283,1.133) circle (1.90pt);
\filldraw[fill=gred,draw=white,line width=0.3pt] (0.445,1.082) circle (1.90pt);
\filldraw[fill=gred,draw=white,line width=0.3pt] (0.255,0.891) circle (1.90pt);
\filldraw[fill=gred,draw=white,line width=0.3pt] (0.564,1.000) circle (1.90pt);
\filldraw[fill=gred,draw=white,line width=0.3pt] (0.732,0.941) circle (1.90pt);
\filldraw[fill=gred,draw=white,line width=0.3pt] (0.712,0.909) circle (1.90pt);
\filldraw[fill=gred,draw=white,line width=0.3pt] (0.705,0.580) circle (1.90pt);
\filldraw[fill=gred,draw=white,line width=0.3pt] (0.965,0.832) circle (1.90pt);
\filldraw[fill=gred,draw=white,line width=0.3pt] (0.879,0.834) circle (1.90pt);
\filldraw[fill=gred,draw=white,line width=0.3pt] (1.056,0.713) circle (1.90pt);
\filldraw[fill=gred,draw=white,line width=0.3pt] (0.887,0.663) circle (1.90pt);
\filldraw[fill=gred,draw=white,line width=0.3pt] (1.228,0.657) circle (1.90pt);
\filldraw[fill=gred,draw=white,line width=0.3pt] (1.092,0.595) circle (1.90pt);
\filldraw[fill=gred,draw=white,line width=0.3pt] (1.400,0.592) circle (1.90pt);
\filldraw[fill=gred,draw=white,line width=0.3pt] (1.337,0.663) circle (1.90pt);
\filldraw[fill=gred,draw=white,line width=0.3pt] (1.255,0.384) circle (1.90pt);
\filldraw[fill=gred,draw=white,line width=0.3pt] (1.612,0.371) circle (1.90pt);
\filldraw[fill=gred,draw=white,line width=0.3pt] (1.590,0.262) circle (1.90pt);
\filldraw[fill=gred,draw=white,line width=0.3pt] (1.846,0.032) circle (1.90pt);
\filldraw[fill=gred,draw=white,line width=0.3pt] (1.416,0.283) circle (1.90pt);
\filldraw[fill=gred,draw=white,line width=0.3pt] (1.814,0.224) circle (1.90pt);
\filldraw[fill=gred,draw=white,line width=0.3pt] (1.669,0.069) circle (1.90pt);
\filldraw[fill=gred,draw=white,line width=0.3pt] (1.485,0.087) circle (1.90pt);
\filldraw[fill=gred,draw=white,line width=0.3pt] (1.714,-0.124) circle (1.90pt);
\filldraw[fill=gred,draw=white,line width=0.3pt] (1.585,0.047) circle (1.90pt);
\filldraw[fill=gred,draw=white,line width=0.3pt] (1.640,0.102) circle (1.90pt);
\filldraw[fill=gred,draw=white,line width=0.3pt] (1.876,-0.151) circle (1.90pt);
\filldraw[fill=ggreen,draw=white,line width=0.3pt] (1.619,-0.398) circle (1.90pt);
\filldraw[fill=ggreen,draw=white,line width=0.3pt] (1.902,-0.502) circle (1.90pt);
\filldraw[fill=ggreen,draw=white,line width=0.3pt] (1.669,-0.611) circle (1.90pt);
\filldraw[fill=ggreen,draw=white,line width=0.3pt] (1.670,-0.165) circle (1.90pt);
\filldraw[fill=ggreen,draw=white,line width=0.3pt] (1.866,-0.661) circle (1.90pt);
\filldraw[fill=ggreen,draw=white,line width=0.3pt] (1.559,-0.428) circle (1.90pt);
\filldraw[fill=ggreen,draw=white,line width=0.3pt] (1.533,-0.678) circle (1.90pt);
\filldraw[fill=ggreen,draw=white,line width=0.3pt] (1.700,-0.674) circle (1.90pt);
\filldraw[fill=ggreen,draw=white,line width=0.3pt] (1.693,-1.137) circle (1.90pt);
\filldraw[fill=ggreen,draw=white,line width=0.3pt] (1.219,-0.563) circle (1.90pt);
\filldraw[fill=ggreen,draw=white,line width=0.3pt] (1.539,-1.025) circle (1.90pt);
\filldraw[fill=ggreen,draw=white,line width=0.3pt] (1.415,-0.898) circle (1.90pt);
\filldraw[fill=ggreen,draw=white,line width=0.3pt] (1.179,-0.839) circle (1.90pt);
\filldraw[fill=ggreen,draw=white,line width=0.3pt] (-1.311,-0.781) circle (1.90pt);
\filldraw[fill=ggreen,draw=white,line width=0.3pt] (-1.595,-0.756) circle (1.90pt);
\filldraw[fill=ggreen,draw=white,line width=0.3pt] (-1.260,-0.693) circle (1.90pt);
\filldraw[fill=ggreen,draw=white,line width=0.3pt] (-1.744,-0.546) circle (1.90pt);
\filldraw[fill=ggreen,draw=white,line width=0.3pt] (-1.486,-0.539) circle (1.90pt);
\filldraw[fill=ggreen,draw=white,line width=0.3pt] (-1.476,-0.531) circle (1.90pt);
\filldraw[fill=ggreen,draw=white,line width=0.3pt] (-1.723,-0.564) circle (1.90pt);
\filldraw[fill=ggreen,draw=white,line width=0.3pt] (-1.771,-0.284) circle (1.90pt);
\filldraw[fill=ggreen,draw=white,line width=0.3pt] (-1.654,-0.343) circle (1.90pt);
\filldraw[fill=ggreen,draw=white,line width=0.3pt] (-1.774,-0.332) circle (1.90pt);
\filldraw[fill=ggreen,draw=white,line width=0.3pt] (-1.574,-0.107) circle (1.90pt);
\filldraw[fill=ggreen,draw=white,line width=0.3pt] (-1.737,0.072) circle (1.90pt);
\filldraw[fill=ggreen,draw=white,line width=0.3pt] (-1.785,0.177) circle (1.90pt);
\filldraw[fill=ggreen,draw=white,line width=0.3pt] (-1.854,0.146) circle (1.90pt);
\filldraw[fill=ggreen,draw=white,line width=0.3pt] (-1.687,0.382) circle (1.90pt);
\filldraw[fill=ggreen,draw=white,line width=0.3pt] (-1.816,0.382) circle (1.90pt);
\filldraw[fill=ggreen,draw=white,line width=0.3pt] (-1.831,0.469) circle (1.90pt);
\filldraw[fill=ggreen,draw=white,line width=0.3pt] (-1.906,0.590) circle (1.90pt);
\filldraw[fill=ggreen,draw=white,line width=0.3pt] (-1.741,0.665) circle (1.90pt);
\filldraw[fill=ggreen,draw=white,line width=0.3pt] (-1.856,0.712) circle (1.90pt);
\filldraw[fill=ggreen,draw=white,line width=0.3pt] (-1.628,0.928) circle (1.90pt);
\filldraw[fill=ggreen,draw=white,line width=0.3pt] (-1.812,0.853) circle (1.90pt);
\filldraw[fill=ggreen,draw=white,line width=0.3pt] (-1.663,1.137) circle (1.90pt);
\filldraw[fill=ggreen,draw=white,line width=0.3pt] (-1.432,0.814) circle (1.90pt);
\filldraw[fill=ggreen,draw=white,line width=0.3pt] (-1.508,0.947) circle (1.90pt);
\filldraw[fill=ggreen,draw=white,line width=0.3pt] (-1.353,1.079) circle (1.90pt);
\filldraw[fill=ggreen,draw=white,line width=0.3pt] (-1.430,1.172) circle (1.90pt);
\node[anchor=south,inner sep=3pt] at (0,1.9) {\textbf{\textsf{CoverAE}}};
\node[anchor=north,inner sep=3pt] at (0,-2.260) {0 crossing pairs, gluing holds.};
\end{scope}
\end{tikzpicture}